\documentclass[letterpaper,11pt]{article}
\usepackage[margin=1in]{geometry}
\usepackage[bookmarks, colorlinks=true, plainpages = false, colorlinks=true,
            citecolor=blue,
            linkcolor=blue,
            anchorcolor=red,
            urlcolor=blue]{hyperref}
\usepackage{url}
\usepackage{amsmath, amsfonts,amsthm,amssymb,bm, verbatim,dsfont,mathtools}
\usepackage{color,graphicx,appendix}
\usepackage{etoolbox}
\usepackage{array}
\usepackage{multirow}
\usepackage{float}

\usepackage{xr,xspace}
\usepackage[dvipsnames]{xcolor}
\usepackage{todonotes}
\usepackage{paralist}
\usepackage{enumitem}
\usepackage{booktabs}

\usepackage{caption,subcaption,soul}
\usepackage{algorithm}
\usepackage{algpseudocode}
\makeatletter
\usepackage{romanbar}
\usepackage{mathpazo}
\usepackage[sort]{natbib}

\newtheorem{theorem}{Theorem}
\newtheorem{corollary}{Corollary}
\newtheorem{lemma}{Lemma}

\newtheorem{proposition}{Proposition}

\newtheorem{problem}{Problem}

\newtheorem{definition}{Definition}

\newtheorem{fact}{Fact}
\newtheorem*{claim}{Claim}
\newtheorem*{customthm*}{\thistheoremname}
\newcommand{\thistheoremname}{} %

\newcommand \E[1]{\mathbb{E}[#1]}

\newcommand{\reals}{{\mathbb{R}}}

\newcommand{\diff}{{\mathrm d}}
\newcommand{\Prob}{\mathbb{P}}
\def\Var{\mathrm{Var}}
\def\E{\mathbb{E}}
\def\Cov{\mathrm{Cov}}

\newcommand{\calA}{{\mathcal{A}}}

\newcommand{\calL}{{\mathcal{L}}}

\newcommand{\calN}{{\mathcal{N}}}

\newcommand{\eps}{\varepsilon}
\newcommand{\Ber}{\mathrm{Ber}}

\newif\ifhideproofs
\ifhideproofs
\usepackage{environ}
\NewEnviron{hide}{}

\fi

\newif\ifdraft
\drafttrue

\begin{document}
\title{Can R\'enyi Accountants Beat Lossless Accountants for DP-SGD?}
\author{Jinho Bok\thanks{Department of Statistics and Data Science, University of Pennsylvania. Email: \href{mailto:jinhobok@wharton.upenn.edu}{\color{black}{\texttt{jinhobok@wharton.upenn.edu}}}}}
\date{October 12, 2026}
\maketitle
\thispagestyle{empty}

\begin{abstract}
Recent frameworks in differential privacy (DP), such as privacy loss distributions and $f$-DP, provide lossless representations of the privacy leakage of mechanisms. These approaches are often considered to yield tighter $(\eps, \delta)$-DP guarantees than those obtained from relaxed notions such as R\'enyi DP. But is that always the case? We study this question for differentially private stochastic gradient descent (DP-SGD) with fixed-size batches under replace adjacency. Our key observation is that in this setting, privacy accounting based on R\'enyi DP can use a pair of worst-case distributions (i.e., a dominating pair) that differs from the one used in the lossless frameworks. In particular, we characterize the worst-case distributions as two-dimensional Gaussian mixtures and derive an improved R\'enyi DP bound for DP-SGD. Numerical and asymptotic comparisons show that across various hyperparameter settings, a R\'enyi DP accountant based on the new bound can yield better privacy guarantees than those obtained from the lossless frameworks.
\end{abstract}

\newpage
\tableofcontents
\thispagestyle{empty}
\clearpage

\setcounter{page}{1}

\section{Introduction}
Differential privacy (DP) \citep{dmns06} is widely regarded as a gold standard for data privacy and has been adopted in numerous areas \citep{census, wiki, gboard, privacysurvey, privacyai}. A prominent problem in DP across different application domains is \emph{private optimization}, where the goal is to train models while preserving the privacy of each individual's information.

Differentially private stochastic gradient descent (DP-SGD) \citep{dpsgd-dl} is the de facto standard algorithm for private optimization. Hence, quantifying the privacy leakage after multiple iterations (i.e., \emph{privacy accounting}) of DP-SGD is a fundamental problem. Formally, this can be done by bounding the divergences between the outputs from adjacent datasets, i.e., two datasets 
such that one is obtained by replacing an individual record from the other. In the literature, there are two prototypical choices for the divergences: R\'enyi divergences and hockey-stick divergences. As frameworks for privacy accounting, the former is commonly called R\'enyi DP \citep{mir17}, whereas the latter is often identified with equivalent frameworks such as privacy loss distributions (PLD) \citep{dwork2016concentrated, meiser2018tight}, privacy loss random variables (PRV) \citep{gopi2021numerical}, and $f$-DP \citep{drs22}.

A conventional view is that the privacy guarantees\footnote{Throughout, we compare privacy guarantees from different approaches in $(\eps, \delta)$-DP (see Definition~\ref{def:dp}).} based on hockey-stick divergences are tighter than those from R\'enyi DP \citep{koskela2020fft, gopi2021numerical, drs22}. This is because hockey-stick divergences (and equivalent notions for privacy accounting) capture the entire privacy characteristic of a mechanism that can be losslessly converted to other DP notions, whereas R\'enyi DP is inherently lossy as a relaxed notion of DP. However, an underlying assumption of this argument is that they use the same pair of worst-case distributions (i.e., a \emph{dominating pair}) to characterize the privacy leakage of a mechanism. We question this premise and ask: can R\'enyi DP provide better privacy guarantees by using a different pair of distributions?

\subsection{Our contributions}
We answer this question in the affirmative for DP-SGD with fixed-size batches (i.e., sampling without replacement). In particular, we propose an improved R\'enyi DP bound in this setting, using certain two-dimensional Gaussian mixtures as worst-case distributions (Section~\ref{sec:renyiaccountant}). This R\'enyi DP bound is characterized as the value of a one-dimensional maximization problem that can in general be solved numerically, and under certain conditions admits simple, explicit solutions.
Through numerical comparisons (Section~\ref{sec:numerics}), we show that across various hyperparameter settings, our new R\'enyi DP bound yields tighter privacy guarantees than those from hockey-stick divergences. We note, however, that such improvement is not uniform over all 
choices of DP-SGD hyperparameters, and the privacy
bounds in their original analytical forms are rather difficult to compare. We provide asymptotic comparisons (Section~\ref{sec:asymp}) where we compare limits of privacy parameters in the regime where they are convergent, illustrating the potential advantages of our new privacy bound based on analytical results. Such comparisons are also done against existing R\'enyi DP bounds \citep{wang2021rdp, birrell2024rdp} where we demonstrate uniform improvements over theirs.

Conceptually, our results reveal an interesting trade-off between the granularity of information used and the quality of worst-case distributions for privacy accounting. Indeed, hockey-stick divergences and their equivalent accounting formalisms---which, for example, characterize the privacy curve \citep{meiser2018tight} or the Type I/II error curve \citep{drs22}---are based on the entire privacy characteristic and thus operate at the level of ``function''. The vast amount of information needed in those frameworks handles key components of privacy accounting---e.g., composition of mechanisms, conversion to other DP notions---in a lossless manner, but also limits the corresponding worst-case distributions to simultaneously satisfy many conditions (i.e., inequalities holding at all $\gamma$, as in Definition~\ref{def:dominatingpairs}). On the other hand, R\'enyi DP only characterizes a certain moment bound and works at the level of ``functional''; while such information is coarse and may incur loss---e.g., for conversion to other DP notions, the corresponding worst-case distributions only need to satisfy a coarse condition (i.e., an inequality at a single $\alpha$; see also Definition~\ref{def:dominatingpairs}). We believe that this discrepancy between the two frameworks merits further investigation, and hope that our case study in DP-SGD here motivates such inquiry.

\subsection{Related work}
\paragraph*{Privacy accounting.} 
Bounding the privacy leakage of mechanisms over composition has been a central topic in DP.
The frameworks based on R\'enyi divergences are collectively called R\'enyi DP \citep{mir17}, and related notions such as concentrated DP \citep{dwork2016concentrated} and its variants \citep{bs16, bun2018tcdp} have also been widely studied. While R\'enyi DP is desirable due to its analytical convenience (e.g., composition expressed as a sum; see Fact~\ref{fact:composition}), it is also known to be inherently lossy when converted to other notions of DP including $(\eps, \delta)$-DP (see, for example, \citep[Proposition B.7]{drs22}). 

On the other hand, the second line of frameworks---e.g., PLD \citep{dwork2016concentrated, meiser2018tight, sommer2019privacy}, PRV \citep{gopi2021numerical}, $f$-DP \citep{drs22}---are lossless when converted to any other notion of DP.
At a fundamental level, they are all based on a notion called Blackwell order \citep{blackwell1951comparison} and hence are equivalent to each other, sharing the same dominating pairs and privacy guarantees. See, for example, \citep[Section 2]{regehr2026privacy} for a summary of such facts. Despite their lossless property, the resulting privacy bounds are often analytically cumbersome, and their evaluations often rely on different numerical approaches \citep{koskela2020fft, doroshenko2022connect, ghazi2022faster, zdw22, alghamdi2023saddlepoint}.

\paragraph*{Privacy of DP-SGD.} 
For privacy accounting of DP-SGD, we follow the mainstream approach: first obtain a privacy bound for a single iteration of DP-SGD, and then extend that for multiple iterations by (adaptive) composition. Among this line of work, the original paper on DP-SGD \citep{dpsgd-dl} proposed a framework known as the moments accountant, which is now subsumed by R\'enyi DP \citep{mir17}. Since then, several works have provided tighter privacy guarantees for DP-SGD under R\'enyi DP \citep{mironov2019renyi, wang2021rdp, birrell2024rdp} as well as under the lossless frameworks \citep{bdls20, drs22}. It is also worth mentioning that recent literature has developed alternative approaches \citep{fmtt18, ys22, altschuler2024noisy, bok2024shifted, kong2024privacy, chien2025privacy}, %
assuming that only the final iterate of DP-SGD is released. While those results provide improved privacy bounds (sometimes even \emph{convergent} as the number of iterations increases), they are also based on restrictive assumptions such as loss functions being convex.
In that sense, they provide a complementary perspective for the former approach.

\paragraph*{Subsampling schemes for DP-SGD.} There are different ways to subsample the random batch in DP-SGD; a popular example is the Poisson subsampling \citep{dpsgd-dl}, where each index is included in the batch independently. Furthermore, developing and analyzing new subsampling schemes for DP-SGD has gained attention in recent years \citep{chua2025balls, shenfeld2025random, dong2026less}. We deliberately choose the combination of fixed-size batches and replace adjacency (Definition~\ref{def:dp}), because it is a practically relevant and analytically clean setting that exhibits a discrepancy between different privacy accountants in terms of the worst-case distributions. 
\citep{lebeda2025pitfalls} discuss several nuances around subsampling schemes and adjacency notions for DP-SGD, highlighting how different (but seemingly similar) choices have disparate effects for privacy accounting. In a similar vein, we study the implications of a particular set of DP-SGD specifications for privacy accounting.

\subsection{Notations}
We denote $[n] \triangleq \{1, \dots, n\}$ and $a \vee b \triangleq \max\{a, b\}, a \wedge b \triangleq \min\{a, b\}$. For a distribution $P$, $P^{\otimes T}$ denotes the product distribution of $T$ copies of $P$. $\phi$ and $\Phi$ respectively denote the PDF and CDF of $\calN(0, 1)$; $\Ber(p)$ denotes the Bernoulli distribution with success probability $p$. We often write the distribution itself to denote an independent random variable/vector that is distributed as such; for example, $a \Ber(p) + \calN(0, I_d)$ denotes a mixture distribution of $\calN(a, I_d)$ and $\calN(0, I_d)$ with respective mixture weights $p$ and $1-p$. We use standard big-$O$ notation, with subscripted variables under $O$ denoting factors that may depend on those variables.

\section{Preliminaries}\label{sec:prelim}

\subsection{Privacy accounting via dominating pairs}\label{subsec:privacyaccounting}
As a formal notion, DP evaluates the effect of an individual's data on the output of a mechanism by statistically comparing the outputs from adjacent datasets.

\begin{definition}[Replace adjacency and $(\eps, \delta)$-DP]\label{def:dp} Let $S \triangleq (z_1, \dots, z_n)$ and $S' \triangleq (z'_1, \dots, z'_n)$. Then $S$ and $S'$ are \emph{adjacent} if there exists $i^* \in [n]$ such that $z_i = z'_i$ for all $i \in [n] \setminus \{i^*\}$. A mechanism $\calA$ is \emph{$(\eps, \delta)$-DP} if for all adjacent datasets $S, S'$ and events $E$,
\[\Prob(\calA(S) \in E) \leq \exp(\eps)\Prob(\calA(S') \in E) + \delta\,.
\]   
\end{definition}
Intuitively, smaller values of $\eps$ or $\delta$ mean that the two distributions $\calA(S)$ and $\calA(S')$ are closer, and hence harder to distinguish. Formally, such privacy guarantees can be obtained by bounding the divergences between the two outputs. %

\begin{definition}[R\'enyi divergence and hockey-stick divergence] Let $P, Q$ be distributions. For $\alpha > 1$, the  \emph{R\'enyi divergence of order $\alpha$} between $P$ and $Q$ is defined as
\[D_\alpha(P \| Q) \triangleq \frac{1}{\alpha - 1} \log \int \left(\frac{\diff P}{\diff Q}(\omega)\right)^{\alpha} \diff Q(\omega)\,.
\]
For $\gamma \geq 0$, the \emph{hockey-stick divergence of order $\gamma$} between $P$ and $Q$ is defined as 
(with $x_+ \triangleq x \vee 0$)
\[H_\gamma(P \|Q) \triangleq \int \left(\frac{\diff P}{\diff Q}(\omega) - \gamma\right)_+ \diff Q (\omega)\,.
\]
\end{definition}
For any type of divergence under consideration, the corresponding $(\eps, \delta)$-DP guarantee can be obtained by first identifying a dominating pair \citep{sommer2019privacy, zdw22}, and then converting the corresponding divergence bound to a $(\eps, \delta)$-DP bound. Dominating pair can be considered as ``worst-case'' distributions for quantifying the privacy leakage of a mechanism.

\begin{definition}[Dominating pairs]\label{def:dominatingpairs} Let $\calA$ be a mechanism. Then $(P, Q)$ is an \emph{(order-$\alpha$) R\'enyi dominating pair} for $\calA$ if for all adjacent datasets $S$ and $S'$,
\[D_\alpha(\calA(S) \| \calA(S')) \leq D_\alpha (P \| Q)\,.
\]
$(P, Q)$ is a \emph{hockey-stick dominating pair}\footnote{In the literature, dominating pairs usually refer to the hockey-stick dominating pairs. Also, in principle, privacy accounting based on R\'enyi divergences can be formulated without the notion of dominating pairs. Here we choose to name and define dominating pairs according to the divergences, in order to highlight the conceptual difference between the corresponding privacy 
accountants.} for $\calA$ if for all adjacent datasets $S$ and $S'$,
\[H_\gamma(\calA(S) \| \calA(S')) \leq H_\gamma (P \| Q) \text{ for all }\gamma \geq 0.
\]
\end{definition}
A central concept in DP is composition, where a mechanism is built by composing other mechanisms. The notion of dominating pairs is particularly useful for handling composition, because the cumulative privacy leakage can be accounted by taking the corresponding product distributions as a dominating pair.

\begin{fact}[Composition using dominating pairs; {\citep[Proposition 1]{mir17}}, {\citep[Theorem 10]{zdw22}}]\label{fact:composition} For each $k \in [T]$, let $(P_k, Q_k)$ be an order-$\alpha$ R\'enyi dominating pair for mechanism\footnote{Here, each mechanism $\calA_k$ admits an additional input $\theta$ that can be considered as an initialization or an output from a previous iteration.} $\calA_k(\cdot, \theta)$ for all $\theta$. 
Then $(\otimes_{k=1}^T P_k, \otimes_{k=1}^T Q_k)$ is an order-$\alpha$ R\'enyi dominating pair for the composed mechanism (for any initialization $\theta_0$) $\calA \triangleq \calA_T \circ \calA_{T-1} \circ \dots \circ \calA_1:  S \mapsto  \calA_T(S, \calA_{T-1}(S, \dots , \calA_1(S, \theta_0))\dots)$,
i.e.,
\begin{align*}
    D_\alpha(\calA(S) \| \calA(S')) \leq D_\alpha(\otimes_{k = 1}^T P_k \| \otimes_{k=1}^T Q_k) = \sum_{k=1}^T D_\alpha(P_k \| Q_k)\,.
\end{align*}
Similarly, if $(P_k, Q_k)$ is a hockey-stick dominating pair for $\calA_k(\cdot, \theta)$ for all $\theta$, 
then $(\otimes_{k=1}^T P_k, \otimes_{k=1}^T Q_k)$ is a hockey-stick dominating pair for $\calA$, i.e.,
\[H_\gamma(\calA(S) \| \calA(S')) \leq H_\gamma(\otimes_{k=1}^T P_k \| \otimes_{k=1}^T Q_k) \text{ for all } \gamma \geq 0\,.
\]
\end{fact}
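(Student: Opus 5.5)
We prove the R\'enyi part by induction on the number of mechanisms $T$, using the conditional (chain-rule) structure of adaptive composition. The hockey-stick part follows the same template, with Blackwell-order arguments in place of moment bounds.

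\textbf{R\'enyi part.} Fix adjacent $S, S'$ and an initialization $\theta_0$. Let $\theta_{1:k}$ denote the first $k$ outputs. The joint law of $(\theta_1,\dots,\theta_T)$ under $S$ has density $\prod_k p_k(\theta_k \mid \theta_{k-1})$, where $p_k(\cdot\mid\theta)$ is the density of $\calA_k(S,\theta)$; define $q_k$ similarly for $S'$. Since $\calA(S)$ is a function (the last coordinate) of the full trajectory, the data-processing inequality for $D_\alpha$ reduces the claim to bounding the divergence between the trajectories. Write
\[
\int \prod_{k=1}^T \Big(\frac{p_k}{q_k}\Big)^{\alpha} q_k \, d\theta_{1:T}
\]
and integrate out $\theta_T$ first. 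For each fixed $\theta_{T-1}$, the inner integral equals $\exp\big((\alpha-1)D_\alpha(\calA_T(S,\theta_{T-1})\|\calA_T(S',\theta_{T-1}))\big)$. By hypothesis this is at most $\exp\big((\alpha-1)D_\alpha(P_T\|Q_T)\big)$, uniformly in $\theta_{T-1}$. Pulling this constant out and iterating backwards in $k$ gives
\[
D_\alpha \le \sum_k D_\alpha(P_k\|Q_k).
\]
This sum equals $D_\alpha(\otimes P_k\|\otimes Q_k)$ because the R\'enyi integrand factorizes over product measures. This is exactly the argument of \citep[Proposition 1]{mir17}.

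\textbf{Hockey-stick part.} Uniform dominance in all $\gamma$ is equivalent, by the Blackwell/trade-off-function characterization, to the existence of a Markov kernel $K_\theta$ satisfying $K_\theta P_k \approx$ ... More precisely, it is equivalent to the trade-off curve of $(\calA_k(S,\theta),\calA_k(S',\theta))$ lying above that of $(P_k,Q_k)$. I would induct on $T$ with the following key lemma: if $(X,X')$ is dominated by $(P,Q)$, and conditional pairs $(Y_x, Y'_x)$ are each dominated by $(P',Q')$ uniformly in $x$, then the joint pair $((X,Y_X),(X',Y'_{X'}))$ is dominated by $(P\otimes P', Q\otimes Q')$.

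To prove the lemma, write
\[
H_\gamma(\text{joint}) = \sup_E \big[\Pr(E)-\gamma\Pr'(E)\big].
\]
Condition on the first coordinate and use the $\gamma$-uniform dominance of the second stage. This bounds the joint divergence by $\int h(\gamma\, \tfrac{dX'}{dX})\,dX$-type expressions, where $h(\gamma) = H_\gamma(P'\|Q')$ is convex and decreasing. Then apply dominance of the first stage to $h$-weighted events; this is the step that requires hockey-stick dominance at \emph{all} $\gamma$ simultaneously.

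\textbf{Main obstacle.} The main difficulty is this last step. Pushing a convex function of the likelihood ratio through the dominance relation needs the representation of $h$ as a mixture of hinge functions $(\cdot-\gamma')_+$, so that $\gamma$-wise dominance integrates up. I would try this representation first, as done in \citep[Theorem 10]{zdw22}. Since both facts are cited results, in the paper it suffices to invoke them.
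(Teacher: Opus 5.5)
Your proposal is correct. The paper gives no proof of this fact; it simply cites \citep[Proposition 1]{mir17} and \citep[Theorem 10]{zdw22}. Your R\'enyi argument is exactly Mironov's proof: disintegrate along the trajectory, integrate out $\theta_T$, bound uniformly in $\theta_{T-1}$ (which preserves the inequality since $\alpha>1$), iterate, and then project to the last coordinate by data processing.

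For the hockey-stick part you take a different route from the standard one behind the citation. That standard route is the $f$-DP composition theorem of \citep{drs22}, using the fact that dominance at all $\gamma$ is equivalent to $f$-DP with $f$ the trade-off function of $(P,Q)$. That proof shares your conditioning step, which replaces each conditional pair by $(P_k,Q_k)$. It then handles the remaining non-adaptive comparison of $(\mu\otimes P',\mu'\otimes Q')$ with $(P\otimes P',Q\otimes Q')$ by Blackwell's theorem \citep{blackwell1951comparison}. Writing $\mu,\mu'$ for the laws of $X,X'$, this gives a kernel $K$ with $KP=\mu$ and $KQ=\mu'$, and then $K\otimes\mathrm{id}$ plus data processing finishes. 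That is what your unfinished sentence about $K_\theta$ was reaching for; either complete it or delete it.

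Your hinge-mixture route instead proves directly the direction ``dominance at all $\gamma$ implies dominance of the relevant $f$-divergences'' that Blackwell's theorem encodes. So it is elementary and self-contained, whereas the Blackwell route is shorter once that theorem is granted.

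To make your chain close, two points need to be stated:
\begin{enumerate}
\item Write $h(\gamma t)=h(\infty)+\int(s-t)_+\,\nu(\diff s)$ with $\nu\ge 0$ and $h(\infty)\triangleq\lim_{t\to\infty}h(t)$. This is possible because $h$ is convex, nonincreasing and bounded, so its slope vanishes at infinity. Each hinge then integrates to $\int(s-\tfrac{\diff\mu'}{\diff\mu})_+\,\diff\mu=s\,H_{1/s}(\mu\|\mu')$, which is exactly the dominated quantity. This orientation also avoids any affine term, which would pick up the part of $\mu'$ singular to $\mu$.
\item Record that $\int h(\gamma\,\tfrac{\diff Q}{\diff P})\,\diff P=H_\gamma(P\otimes P'\|Q\otimes Q')$. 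This is your conditioning computation, which holds with equality for product pairs.
\end{enumerate}
Tonelli then yields the two-stage lemma, and induction plus data processing yields the statement.
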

The divergence bounds for the dominating pairs can be converted to $(\eps, \delta)$-DP bounds based on certain formulae \citep{mir17, balle2020hypothesis, asoodeh2021conversions, zdw22, riess2026optimal}.

\subsection{R\'enyi accountants versus lossless accountants}
Throughout, we refer to the procedures of privacy accounting explained in Section~\ref{subsec:privacyaccounting} as \emph{R\'enyi accountants} if they are based on R\'enyi divergences; similarly, we refer to the corresponding procedures based on hockey-stick divergences as \emph{lossless accountants}. The latter term collectively denotes several accountants, including those based on PLD, PRV, $f$-DP, characteristic functions \citep{zdw22}, and more; such abstraction is justified by the fact that the underlying notions of dominating pair and composition are all mathematically equivalent \citep{zdw22}.
Both R\'enyi and lossless accountants are widely used and have been deployed in DP libraries such as Opacus \citep{opacus} and JAX-Privacy \citep{jaxprivacy}. 

A distinctive feature of lossless accountants is that they allow lossless conversion to $(\eps, \delta)$-DP, as demonstrated by the following necessary and sufficient condition.

\begin{fact}[{\citep[Lemma 5]{zdw22}}] $\calA$ is $(\eps, \delta)$-DP if and only if $H_{\exp(\eps)}(\calA(S) \| \calA(S')) \leq \delta$ for all adjacent datasets $S$ and $S'$.
\end{fact}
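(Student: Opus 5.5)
The plan is to prove the two directions separately, both by rewriting the defining inequality of $(\eps,\delta)$-DP as a statement about the hockey-stick divergence of order $\gamma = \exp(\eps)$. Fix adjacent $S, S'$ and write $P = \calA(S)$, $Q = \calA(S')$. Both are distributions on a common output space, so we may take a common dominating measure $\mu$ (e.g.\ $\mu = P + Q$) with densities $p, q$. If $P$ is not absolutely continuous with respect to $Q$, we interpret $\int (\diff P/\diff Q - \gamma)_+ \diff Q$ as $\int (p - \gamma q)_+ \diff\mu$, which is the standard extension. The key identity to establish first is
\[
H_\gamma(P\|Q) = \int (p - \gamma q)_+ \,\diff\mu = \sup_{E} \bigl(P(E) - \gamma Q(E)\bigr)\,,
\]
with the supremum over measurable events $E$.

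For the identity, take any event $E$. Then
\[
P(E) - \gamma Q(E) = \int_E (p - \gamma q)\,\diff\mu \le \int_E (p-\gamma q)_+\,\diff\mu \le \int (p-\gamma q)_+\,\diff\mu\,.
\]
Equality is attained at $E^* = \{\omega : p(\omega) > \gamma q(\omega)\}$, which is measurable because $p$ and $q$ are measurable. This Neyman--Pearson-type step is the only substantive part of the argument. The one point requiring care is the singular part of $P$: on $\{q = 0 < p\}$ the integrand equals $p$, and this region lies in $E^*$, so it is counted correctly.

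For the ``if'' direction, assume $H_{\exp(\eps)}(\calA(S)\|\calA(S')) \le \delta$ for all adjacent $S, S'$. The identity then gives, for every event $E$,
\[
\Prob(\calA(S)\in E) - e^{\eps}\,\Prob(\calA(S')\in E) \le \delta\,,
\]
which is exactly Definition~\ref{def:dp}. For the ``only if'' direction, $(\eps,\delta)$-DP states that $P(E) - e^\eps Q(E) \le \delta$ for every $E$. Taking the supremum over $E$ and applying the identity yields $H_{e^\eps}(P\|Q) \le \delta$. Since adjacency as defined is symmetric, no issue arises about the ordering of the pair. I expect no step to be a real obstacle; the only subtlety is the measure-theoretic care with non-absolutely-continuous pairs noted above.
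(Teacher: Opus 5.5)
Your proof is correct and is the standard argument: you establish the variational identity $H_\gamma(P\|Q)=\sup_E\,(P(E)-\gamma Q(E))$, attained at $E^*=\{p>\gamma q\}$, and then read off both directions with $\gamma=e^{\eps}$. The paper does not prove this fact itself but cites it from \citep[Lemma 5]{zdw22}, and your argument is the one underlying that lemma; your use of $\int (p-\gamma q)_+\,\diff\mu$ to handle the singular part is the right convention, since the paper's definition via $\diff P/\diff Q$ implicitly assumes $P$ is absolutely continuous with respect to $Q$.
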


On the other hand, such ``if and only if'' statement does not hold for R\'enyi divergences, and the conversions to $(\eps, \delta)$-DP bounds are inherently lossy; see, for example, \citep{bs16, riess2026optimal}. This implies that if a R\'enyi accountant and a lossless accountant both use the same pair of distributions as their dominating pair, the latter would provide a privacy guarantee that is at least as good as (and often strictly better than) what the former would provide. Furthermore, for several mechanisms such a gap cannot be improved because both accountants share the same tight (i.e., the best possible) dominating pair in those cases \citep{mir17, zdw22}.

\subsection{Differentially private stochastic gradient descent}
Throughout, we consider a standard version of DP-SGD where at each iteration, a fixed-size batch is chosen uniformly at random (i.e., sampling without replacement). 

\begin{definition}[DP-SGD with fixed-size batches]\label{def:dp-sgd} Given a dataset $(z_1, \dots, z_n)$, loss functions $\theta \in \reals^d \mapsto \ell(\theta, z_i)$ for $i \in [n]$, and initialization $\theta_0$, DP-SGD is a mechanism such that for $k = 0, 1,\dots, T-1$,
\begin{enumerate}[leftmargin=*]
    \item samples $b$ distinct elements $B_k$ from $[n]$ uniformly at random;
    \item updates as
    \[\theta_{k+1} \gets \theta_k - \eta_k\left(\frac{1}{b}\sum_{i \in B_k} \mathrm{clip}_C(\nabla \ell(\theta_k, z_i)) + \frac{\sigma C}{b}\xi_k\right)\,,
    \]
    where $\mathrm{clip}_C(x) \triangleq x / (1 \vee \|x\|/C)$, and $\xi_k \sim \calN(0, I_d)$ are independent;
\end{enumerate}
and outputs $\theta_T$.
\end{definition}

\section{Dominating pairs for DP-SGD}\label{sec:dominatingpairs}

What do the dominating pairs for one iteration of DP-SGD look like? Intuitively, the key point is whether the special index $i^*$ (see Definition~\ref{def:dp}) and the associated point $z_{i^*}$ is used.
It is straightforward to see that $i^*$ is included in the batch with probability $p \triangleq b/n$. Also, if $z_{i^*}$ is replaced with some other point $z$ ($z'_{i^*}$ from adjacent dataset, or $z_i$ with $i \neq i^*$ from batch subsampling), the gradient can differ by at most $\|\mathrm{clip}_C(\nabla \ell(\theta_t, z_{i^*})) - \mathrm{clip}_C(\nabla \ell(\theta_t, z))\| / b \leq 2C/b$. By scaling so that the Gaussian noise has unit variance, this difference in gradient becomes at most $\Delta \triangleq 2/\sigma$. Combining these arguments, one can consider the following family of candidates for dominating pairs:
\begin{equation}\label{eq:gaussianmixturefamily}
\begin{aligned}
    \{(a\Ber(p) + \calN(0, I_d), b\Ber(p) + \calN(0, I_d)):  \|a\| \leq \Delta, \|b\| \leq \Delta, \|a - b\|\leq \Delta \}\,.
\end{aligned}
\end{equation}
For R\'enyi accountants, \eqref{eq:gaussianmixturefamily} is indeed sufficient for characterizing dominating pairs in the following sense:
\begin{lemma}[{\citep[Equation 56]{birrell2024rdp}}] Define
\begin{equation}\label{eq:isoproblem}
\begin{aligned}
    \rho(\alpha, p, \Delta) \triangleq \sup_{\substack{\|a\| \leq \Delta, \\ \|b\| \leq \Delta, \\ \|a - b\| \leq \Delta}} D_\alpha(a\Ber(p) + \calN(0, I_d) \|     b\Ber(p) + \calN(0, I_d))\,. 
\end{aligned}
\end{equation}
Then $(P, Q)$ with $D_\alpha(P \| Q) = \rho(\alpha, p, \Delta)$ is an order-$\alpha$ R\'enyi dominating pair for DP-SGD (Definition~\ref{def:dp-sgd}), where $\Delta = 2/\sigma$.
\end{lemma}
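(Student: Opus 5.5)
Since the R\'enyi divergence is additive under composition (Fact~\ref{fact:composition}), the plan is to reduce to a single iteration. Concretely, I will show that for every fixed $\theta_k$ and every pair of adjacent datasets $S, S'$, the one-step output distributions $\calA_k(S,\theta_k)$ and $\calA_k(S',\theta_k)$ satisfy $D_\alpha(\calA_k(S,\theta_k)\|\calA_k(S',\theta_k)) \le \rho(\alpha,p,\Delta)$. Fact~\ref{fact:composition} then extends this to $T$ iterations, and post-processing (outputting only $\theta_T$) can only decrease the divergence.

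First I normalize. The map $g \mapsto \theta_k - \eta_k g$ is a bijection, so it preserves $D_\alpha$. After rescaling by $b/(\sigma C)$, each output becomes $\frac{1}{\sigma C}\sum_{i\in B}\mathrm{clip}_C(\nabla\ell(\theta_k,z_i)) + \calN(0,I_d)$. Write $g_i$ for the rescaled clipped gradients on $S$, and $g'_{i^*}$ for the replaced point on $S'$; all of these have norm at most $1/\sigma$. Next I condition on the batch. With probability $1-p$ the batch $B$ excludes $i^*$, and the sums agree. With probability $p$ the batch has the form $B=\{i^*\}\cup R$ with $R$ uniform over $(b-1)$-subsets of $[n]\setminus\{i^*\}$.

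The key device is to couple the $i^*\notin B$ case with the $i^*\in B$ case. I choose a uniform $(b-1)$-subset $R$ and an independent uniform $j\in[n]\setminus(R\cup\{i^*\})$. Then $R\cup\{j\}$ is a uniform $b$-subset avoiding $i^*$, and $R\cup\{i^*\}$ is a uniform $b$-subset containing $i^*$. Conditionally on $(R,j)$, with $c=\sum_{i\in R}g_i$, the two outputs are
\[
c+g_j+\big((g_{i^*}-g_j)\Ber(p)+\calN(0,I_d)\big)\quad\text{versus}\quad c+g_j+\big((g'_{i^*}-g_j)\Ber(p)+\calN(0,I_d)\big).
\]
Setting $a=g_{i^*}-g_j$ and $b=g'_{i^*}-g_j$, each has norm at most $2/\sigma=\Delta$, and $\|a-b\|=\|g_{i^*}-g'_{i^*}\|\le\Delta$. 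After translating by $c+g_j$ (which does not change $D_\alpha$), the conditional pair therefore lies in the family \eqref{eq:gaussianmixturefamily}, so its R\'enyi divergence is at most $\rho$.

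The main obstacle is passing from these conditional bounds to the unconditional mixture over $(R,j)$. Here I use joint quasi-convexity of $D_\alpha$: $\exp((\alpha-1)D_\alpha)$ is jointly convex for $\alpha>1$, since it is an $f$-divergence. Mixing both arguments with the same weights therefore gives $\exp((\alpha-1)D_\alpha(\text{mix}\|\text{mix}))\le\sum_w w\,\exp((\alpha-1)D_\alpha)\le\exp((\alpha-1)\rho)$. The one-step bound follows, and any $(P,Q)$ attaining $D_\alpha(P\|Q)=\rho$ then dominates, as stated in the lemma.
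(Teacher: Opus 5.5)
Your proposal is correct and follows the same reasoning the paper sketches just before the lemma, for which it otherwise defers to Birrell et al.: $i^*$ enters the batch with probability $p=b/n$, and otherwise $z_{i^*}$ is effectively replaced by another batch element $z_j$, so after rescaling all pairwise mean differences are at most $\Delta=2/\sigma$. You make that sketch rigorous by explicitly coupling $R\cup\{j\}$ with $R\cup\{i^*\}$ and handling the mixture over $(R,j)$ via joint convexity of $\exp((\alpha-1)D_\alpha)$.
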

It can be shown that for each $(\alpha, p, \Delta)$, the right hand side of \eqref{eq:isoproblem} (as the maximum of a continuous function on a compact set) attains its maximum at some $(a, b) = (a^*, b^*)$. This implies that one can take a single pair $(a^*\Ber(p) + \calN(0, I_d), b^*\Ber(p) + \calN(0, I_d))$ as a R\'enyi dominating pair.
On the other hand, for lossless accountants, at least two distinct elements from \eqref{eq:gaussianmixturefamily} need to be considered (see, e.g., \citep[Section C.4]{zdw22} or \citep[Section 7]{lebeda2025pitfalls}).
\begin{lemma}\label{lem:hockeystick-necessary} Let $(P, Q)$ be a hockey-stick dominating pair for DP-SGD (Definition~\ref{def:dp-sgd}). Then for $\Delta = 2/\sigma$,
\begin{equation}\label{eq:hockeystick-dominatingpair-g1}
    H_\gamma(P \| Q) \geq H_\gamma(\Delta \Ber(p) + \calN(0, 1) \| \calN(0, 1))
\end{equation}
for all $\gamma \geq 1$, and
\begin{equation}\label{eq:hockeystick-dominatingpair-s1}
    H_\gamma(P \| Q) \geq H_\gamma(\calN(0, 1) \| \Delta \Ber(p) + \calN(0, 1))
\end{equation}
for all $0 < \gamma < 1$.
\end{lemma}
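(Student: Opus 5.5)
The plan is to exhibit, for each of the two ranges of $\gamma$, a concrete pair of adjacent datasets together with loss functions for which one iteration of DP-SGD (started from a fixed $\theta_0$ with $T=1$) outputs exactly the mixture pair on the right-hand side, up to a bijective affine map. Hockey-stick divergences are invariant under bijective measurable maps applied to both arguments. Any hockey-stick dominating pair $(P,Q)$ must satisfy $H_\gamma(\calA(S)\|\calA(S'))\le H_\gamma(P\|Q)$ for every adjacent pair and every $\gamma\ge 0$. Hence it suffices to realize the two mixture pairs as outputs of the mechanism on suitable adjacent datasets. Since $T$ is arbitrary in Definition~\ref{def:dp-sgd}, I take $T=1$; dominating pairs are meant per iteration here.

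\textbf{Construction for \eqref{eq:hockeystick-dominatingpair-g1}.} Take $d=1$ and $\ell(\theta,z)=z\theta$, so that $\nabla\ell(\theta,z)=z$. Set all points $z_i=-C$ for $i\neq i^*$ and $z_{i^*}=C$, and in $S'$ replace $z'_{i^*}=-C$. Then the clipped gradients are $\pm C$. Under $S'$ the batch sum is $-bC$ deterministically. Under $S$ it is $-bC+2C\cdot\mathbb{1}\{i^*\in B_0\}$, and the indicator is $\Ber(p)$ with $p=b/n$.

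The output is $\theta_1=\theta_0-\eta_0(\text{sum}/b+\sigma C\xi_0/b)$, which is an affine bijection of $\text{sum}/(\sigma C)+\xi_0$. Subtracting the common constant $-b/\sigma$, we get $\calA(S)\cong\Delta\Ber(p)+\calN(0,1)$ and $\calA(S')\cong\calN(0,1)$ with $\Delta=2/\sigma$, up to a sign flip which is also a bijection. This gives \eqref{eq:hockeystick-dominatingpair-g1} for all $\gamma\ge 0$, in particular for $\gamma\ge1$.

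\textbf{Construction for \eqref{eq:hockeystick-dominatingpair-s1}.} Use the same construction with the roles of $S$ and $S'$ swapped; adjacency is symmetric. This yields $H_\gamma(\calN(0,1)\|\Delta\Ber(p)+\calN(0,1))\le H_\gamma(P\|Q)$ for all $\gamma$, in particular for $0<\gamma<1$.

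The only genuine points to check are the following. First, clipping does not alter the gradients, since $|\pm C|=C$. Second, the invariance of $H_\gamma$ under a common bijective affine map: the density ratio is preserved under pushforward. Third, the dimension. If $d$ is required to be general, I would use $\ell(\theta,z)=\langle z,\theta\rangle$ with $z=\pm Ce_1$. The remaining $d-1$ coordinates are then identical independent Gaussian noise in both outputs, and they factor out of the likelihood ratio.

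I do not expect a real obstacle. The mildest subtlety is the last point: justifying that the extra independent common components leave $H_\gamma$ unchanged. This follows because the Radon–Nikodym derivative depends only on the first coordinate.

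The lemma splits by $\gamma\ge1$ versus $\gamma<1$ only because the paper later uses these two inequalities to argue that no single element of \eqref{eq:gaussianmixturefamily} suffices. The construction above in fact proves both inequalities for all $\gamma\ge 0$.
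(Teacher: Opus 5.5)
Your proof is correct. The paper gives no proof of its own for this lemma; it only points to \citep[Section C.4]{zdw22} and \citep[Section 7]{lebeda2025pitfalls}. Your argument is the standard way this necessity is established: you realize $(\Delta\Ber(p)+\calN(0,1),\calN(0,1))$ exactly, up to a common affine bijection, as the one-step outputs on a concrete adjacent pair with linear losses, and you get the reversed pair from the symmetry of replace adjacency.

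As you note, this yields both inequalities for all $\gamma\geq 0$. The split at $\gamma=1$ in the statement only records which of the two bounds the pair of Proposition~\ref{prop:privacy-lossless} attains with equality.
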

In particular, it follows that neither $(\Delta \Ber(p) + \calN(0, 1), \calN(0, 1))$ nor $(\calN(0, 1), \Delta \Ber(p) +\calN(0, 1))$ is by itself a hockey-stick dominating pair; recall from Definition~\ref{def:dominatingpairs} that a dominating pair should simultaneously satisfy the inequalities for all $\gamma \geq 0$. Nonetheless, the conditions \eqref{eq:hockeystick-dominatingpair-g1} and \eqref{eq:hockeystick-dominatingpair-s1} are sufficient, and accordingly lossless accountants can provide the following privacy guarantee for DP-SGD \citep{drs22, zdw22, lebeda2025pitfalls}.
\begin{proposition}[Privacy guarantee from lossless accountants; {\citep[Corollary 32]{zdw22}}]\label{prop:privacy-lossless} For each $p \in (0, 1)$ and $\sigma > 0$, there exists a hockey-stick dominating pair $(\widetilde{P}, \widetilde{Q})$ for DP-SGD (Definition~\ref{def:dp-sgd}) such that \eqref{eq:hockeystick-dominatingpair-g1} holds with equality for all $\gamma \geq 1$ and \eqref{eq:hockeystick-dominatingpair-s1} holds with equality for all $0 < \gamma < 1$. 
Consequently, if $H_{\exp(\eps)}\left(\widetilde{P}^{\otimes T} \| \widetilde{Q}^{\otimes T}\right) \leq \delta$
then DP-SGD with $T$ iterations is $(\eps, \delta)$-DP.
\end{proposition}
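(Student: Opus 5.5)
The plan has two parts: build $(\widetilde{P}, \widetilde{Q})$ explicitly, then check it dominates. Write $P_1 \triangleq \Delta\Ber(p) + \calN(0,1)$ and $Q_1 \triangleq \calN(0,1)$, and let $f(\gamma) \triangleq H_\gamma(P_1\|Q_1)$ for $\gamma \ge 1$ and $g(\gamma) \triangleq H_\gamma(Q_1\|P_1)$ for $\gamma<1$. I would not try to find a natural pair of distributions directly. Instead I would work with trade-off functions, in the $f$-DP language that is equivalent to hockey-stick divergences.

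\textbf{Step 1: the target curve.} Let $T_1 \triangleq T(P_1, Q_1)$ be the trade-off function of $(P_1,Q_1)$. The reversed pair $(Q_1,P_1)$ has trade-off function $T_1^{-1}$, the inverse of $T_1$. Define $\widetilde{f}$ as the double convex conjugate of $\min\{T_1, T_1^{-1}\}$; this is the $C_p$-type operator of \citep{drs22}. Every trade-off function equals $T(\widetilde{P},\widetilde{Q})$ for some pair on $[0,1]$, so this produces $(\widetilde{P}, \widetilde{Q})$. The hockey-stick curve of a pair is the Legendre-type transform of its trade-off function: $H_\gamma(P\|Q) = \sup_{t}\,(1 - T(t) - \gamma t)$, up to which argument convention is used. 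Convexification does not change the supporting lines, and those lines are what compute $H_\gamma$. Hence $H_\gamma(\widetilde{P}\|\widetilde{Q}) = \max\{H_\gamma(P_1\|Q_1), H_\gamma(Q_1\|P_1)\}$ for every $\gamma$.

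\textbf{Step 2: equality in \eqref{eq:hockeystick-dominatingpair-g1} and \eqref{eq:hockeystick-dominatingpair-s1}.} By Step 1 it suffices to show that $H_\gamma(Q_1\|P_1) \le H_\gamma(P_1\|Q_1)$ for $\gamma\ge 1$, and the reverse inequality for $\gamma<1$. This follows from the identity $H_\gamma(Q\|P) = \gamma H_{1/\gamma}(P\|Q) + 1-\gamma$. Combined with the explicit mixture structure, it gives the comparison; alternatively, use the known fact that $T_1 \le T_1^{-1}$ exactly on the region corresponding to slopes steeper than $-1$. I would verify this by direct computation on the likelihood ratio $1-p+p\,e^{\Delta x-\Delta^2/2}$.

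\textbf{Step 3: domination.} Fix adjacent $S,S'$ and a previous iterate $\theta$. After rescaling, one DP-SGD step outputs $\calA(S)\sim \mu_S$ and $\calA(S')\sim\mu_{S'}$. I would couple the batch selection so that, with probability $1-p$, $i^*\notin B$ and both outputs have the same Gaussian with random mean $m_B$. With probability $p$, the batch contains $i^*$ and the means differ by a vector $v$ with $\|v\|\le\Delta$. Then $\mu_S = (1-p)R + pR_a$ and $\mu_{S'}=(1-p)R+pR_b$, where $R_a,R_b$ are mixtures of Gaussians whose shifts differ by at most $\Delta$. The subsampling lemma of \citep[Theorem 11 / Prop.~30]{zdw22} for sampling without replacement then says: the mixture over a shared component $R$ is dominated, separately for $\gamma\ge1$ and for $\gamma<1$, by the two one-sided Gaussian mixtures. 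Concretely, by joint convexity of $H_\gamma$ and the advanced joint convexity of \citep{balle2020hypothesis}, $H_\gamma(\mu_S\|\mu_{S'}) \le p\,H_{\gamma'}(R_a\|\,(1-\beta)R+\beta R_b)$ with $\gamma' = 1+(\gamma-1)/p$. Reducing the Gaussian shift to one dimension then gives $H_\gamma(\mu_S\|\mu_{S'})\le f(\gamma)$ for $\gamma \ge 1$, and symmetrically $\le g(\gamma)$ for $\gamma<1$. Together with Step 1 this shows $(\widetilde P,\widetilde Q)$ is a hockey-stick dominating pair. The final $(\eps,\delta)$ statement then follows from Fact~\ref{fact:composition} and the conversion fact \citep[Lemma 5]{zdw22}.

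I expect the main obstacle to be Step 3. The replace-adjacency mixtures share the component $R$ only approximately, because the batches not containing $i^*$ are arranged differently in $S$ and $S'$. Getting the careful coupling and the advanced joint convexity to yield exactly $f$ for $\gamma\ge1$ and exactly $g$ for $\gamma<1$, rather than something looser, is the delicate part. The fallback is to cite \citep[Corollary 32]{zdw22} for this bound directly.
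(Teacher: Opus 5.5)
Your architecture is sound, and it is a genuinely different route from the paper's: the paper gives no proof and simply imports the statement from \citep[Corollary 32]{zdw22}. You rebuild it from three standard pieces:
\begin{itemize}
\item the symmetrization $\min\{T_1,T_1^{-1}\}^{**}$, which is exactly the operator $C_p$ of \citep{drs22} applied to the Gaussian trade-off function with shift $2/\sigma$, so your $(\widetilde{P},\widetilde{Q})$ is the same object whose $T$-fold composition Lemma~\ref{lem:clt} passes to the limit;
\item advanced joint convexity for the one-step bound;
\item Fact~\ref{fact:composition} together with \citep[Lemma 5]{zdw22}.
\end{itemize}
Your route makes explicit why no single member of \eqref{eq:gaussianmixturefamily} suffices and where the two halves of the curve come from. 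The citation buys brevity.

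Step 2 is needed only for the equality clauses, and it closes in two lines. For $\gamma\geq1$ you have $H_\gamma(P_1\|Q_1)=p\,H_{\gamma'}(\calN(\Delta,1)\|\calN(0,1))$ with $\gamma'=1+(\gamma-1)/p$. Likewise $H_\gamma(Q_1\|P_1)=c\,H_{\gamma''}(\calN(0,1)\|\calN(\Delta,1))$ with $c=(1-\gamma(1-p))_+$ and $\gamma''=\gamma p/c$; this term is $0$ if $c=0$. For $\gamma\geq 1$ one has $c\leq p$ and $\gamma''-1=(\gamma-1)/c\geq(\gamma-1)/p=\gamma'-1$. Reflection symmetry of the Gaussian shift pair and monotonicity of $H_\gamma$ in $\gamma$ then give $H_\gamma(Q_1\|P_1)\leq H_\gamma(P_1\|Q_1)$. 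Your identity transfers this to $\gamma<1$.

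In Step 3, however, you have misplaced the difficulty and skipped the step that actually carries it. Under Definition~\ref{def:dp} the datasets are index-aligned and $B$ is drawn independently of the data. So for every batch with $i^*\notin B$ the two conditional output laws coincide, $R$ is shared exactly, and there is nothing to repair there.

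What is missing is a link between $R$ and $R_a$. After advanced joint convexity the second argument is $(1-\beta)R+\beta R_b$. Knowing only that $R_a$ and $R_b$ have componentwise shifts within $\Delta$ cannot bound $H_{\gamma'}(R_a\|(1-\beta)R+\beta R_b)$; if $R$ sat far from $R_a$, this quantity would be large.

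You need one coupling of all three mixtures: send each $B\ni i^*$ to $B\setminus\{i^*\}\cup\{j\}$, with $j$ uniform on $[n]\setminus B$.
\begin{itemize}
\item Every batch avoiding $i^*$ arises from exactly $b$ equally likely pairs $(B,j)$, so this pushes the uniform law on batches containing $i^*$ onto the uniform law on batches avoiding it.
\item The rescaled means differ by $(\mathrm{clip}_C(\nabla\ell(\theta,z_{i^*}))-\mathrm{clip}_C(\nabla\ell(\theta,z_j)))/(\sigma C)$, which has norm at most $\Delta$. This is where the constraint $\|a\|\leq\Delta$ in \eqref{eq:gaussianmixturefamily} comes from.
\end{itemize}
Apply joint convexity along this coupling, and once more inside each coupled component $(1-\beta)\calN(y_0,I_d)+\beta\calN(y_b,I_d)$, where $y_0,y_b$ are the means paired with a given component $\calN(y_a,I_d)$ of $R_a$. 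This gives $H_{\gamma'}(R_a\|(1-\beta)R+\beta R_b)\leq H_{\gamma'}(\calN(\Delta,1)\|\calN(0,1))$. Multiplying by $p$ yields $f(\gamma)$ by the first identity above.

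Finally, ``symmetrically'' for $\gamma<1$ cannot mean rerunning this argument, since $1-\beta<0$ there. Instead, apply the $\gamma\geq1$ bound to the swapped pair $(S',S)$ and use $H_\gamma(\mu_S\|\mu_{S'})=\gamma H_{1/\gamma}(\mu_{S'}\|\mu_S)+1-\gamma\leq\gamma f(1/\gamma)+1-\gamma=g(\gamma)$.
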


The discussion so far highlights the key difference between R\'enyi and lossless accountants that we study in this work: R\'enyi accountants can take a single element from \eqref{eq:gaussianmixturefamily} as a dominating pair, whereas lossless accountants need\footnote{This ``necessity'' argument assumes using the mainstream approach of using dominating pair as explained in Section~\ref{subsec:privacyaccounting}. Whether such approach can be bypassed is an open question; see also Sections~\ref{subsec:privacyaccounting} and \ref{sec:discussion} for discussion.} to take two distinct elements from \eqref{eq:gaussianmixturefamily} and ``combine'' them as in Proposition~\ref{prop:privacy-lossless}. In particular, this suggests that different dominating pairs can be used for those accountants.

\section{Improved R\'enyi accountant for DP-SGD}\label{sec:renyiaccountant}

\subsection{Characterization of R\'enyi DP bound}
We show that for the problem \eqref{eq:isoproblem} of maximization over Gaussian mixtures, it suffices to consider two-dimensional Gaussians whose means $a, b$ and $0$ form an isosceles ($\|a-0\| = \|a - b\| = \Delta$); this effectively reduces the problem \eqref{eq:isoproblem} over $d$-dimensional vectors to a one-dimensional problem. In words, this implies that for each iteration of DP-SGD, there exist two-dimensional Gaussian mixtures that serve as a dominating pair. See Figure~\ref{fig:renyi-mixturegeometry} for an illustration.

\begin{figure}[t]
    \centering
    \includegraphics[width=0.6\textwidth]{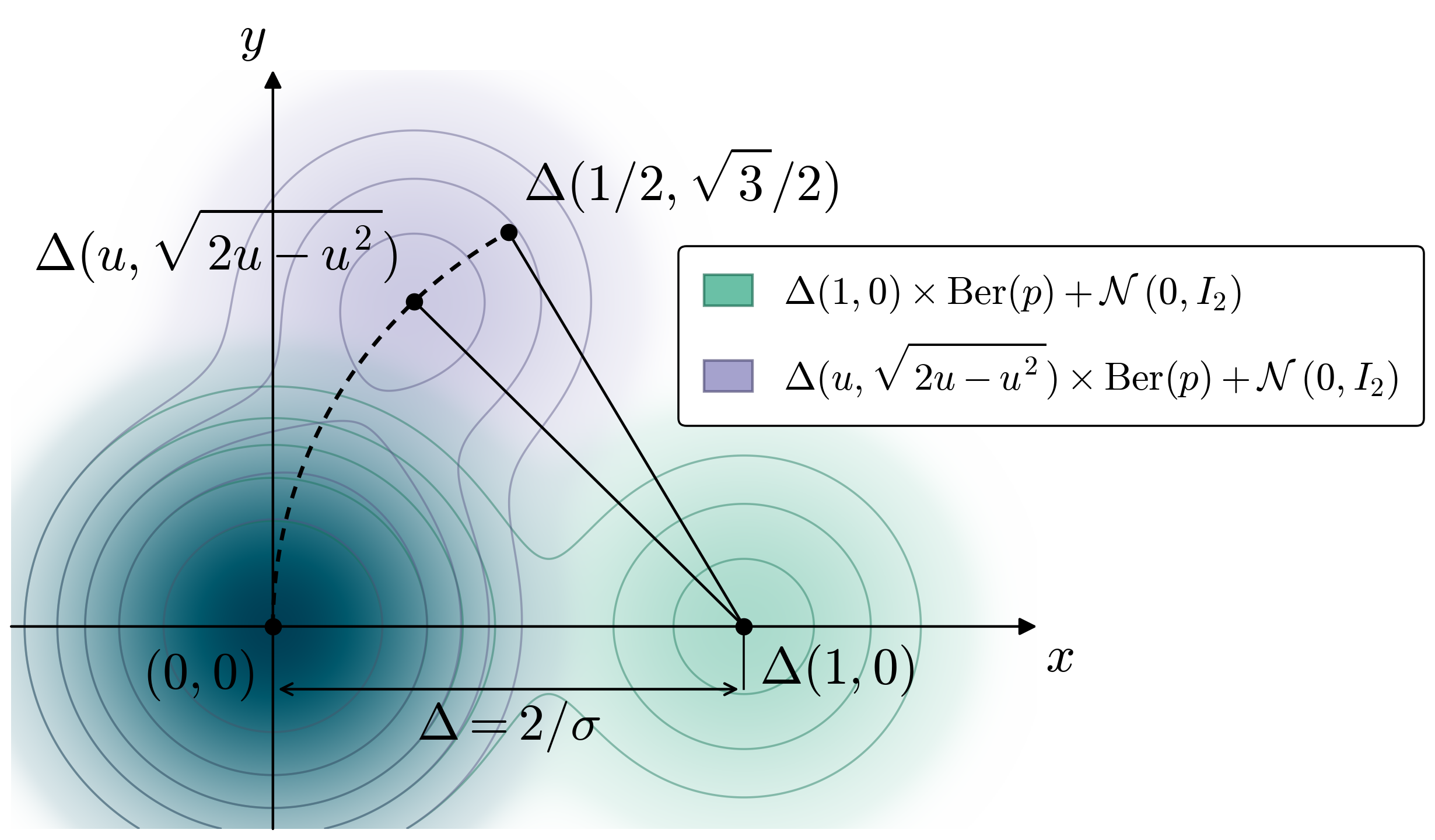}
    \caption{Illustration of the Gaussian mixtures that serve as a R\'enyi dominating pair for DP-SGD (see \eqref{eq:1dopt}); here $p = 0.2, \Delta = 4.5, u = 0.3$. The points $(0, 0)$, $\Delta(1, 0)$, and $\Delta(u, \sqrt{2u - u^2})$ (each corresponding to the mean of the mixture component) form an isosceles.} %
    \label{fig:renyi-mixturegeometry}
\end{figure}

\begin{theorem}\label{thm:1dopt} If $d \geq 2$, for \eqref{eq:isoproblem} we have
\begin{equation}\label{eq:1dopt}
    \rho(\alpha, p, \Delta) = \max_{u \in [0, 1/2]} D_\alpha\Big(\Delta(1, 0)\times \mathrm{Ber}(p) + \calN(0, I_2)  \Big \|\Delta(u, \sqrt{2u - u^2}) \times \mathrm{Ber}(p) + \calN(0, I_2)\Big)\,.
\end{equation}
\end{theorem}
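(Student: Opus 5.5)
Our plan has four steps: reduce to the plane, show the objective is monotone in the norms, push the constraint $\|a-b\|\le\Delta$ to equality, and fold the remaining angle into $u\in[0,1/2]$.

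\textbf{Reduction to the plane.} The objective in \eqref{eq:isoproblem} is invariant under orthogonal maps applied to both $a$ and $b$, since the standard Gaussian is rotation invariant and $D_\alpha$ is invariant under bijections. So we may rotate $a,b$ into $\mathrm{span}(e_1,e_2)$. The remaining $d-2$ coordinates are then independent $\calN(0,1)$ under both $P$ and $Q$, and they drop out by the tensorization/data-processing property of $D_\alpha$. Hence $\rho$ equals the same supremum with $d=2$, which requires $d\ge 2$. By a further rotation and reflection, we may take $a=r(1,0)$ with $r=\|a\|\le\Delta$ and $b$ in the closed upper half-plane.

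\textbf{Monotonicity in the norms.} Write $E(a,b)\triangleq\int \big(p\phi_a+(1-p)\phi_0\big)^\alpha\big(p\phi_b+(1-p)\phi_0\big)^{1-\alpha}$, where $\phi_x$ is the density of $\calN(x,I_2)$. We aim to show that the optimum has $\|a\|=\Delta$ and $\|a-b\|=\Delta$.

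For $\|a\|$, the natural argument is a scaling/convexity one. Fix the direction of $a$ and view $E$ as a function of $t$ along the segment $a=t\hat a$. This is a convex function of the pair of densities, because $(x,y)\mapsto x^\alpha y^{1-\alpha}$ is jointly convex. However, it is not obviously convex in $t$, since the densities are not affine in $t$.

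We would instead exploit the representation $\calN(t\hat a,I)=$ (post-processing of $\calN(\Delta\hat a,I)$ by adding independent noise along $\hat a$ and rescaling). This fails to preserve $b$. The cleaner route is a direct computation: expand
\[
E=\sum_{j}\binom{\alpha}{j}\cdots
\]
only for integer $\alpha$, which is insufficient.

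So we would first try the following. Parametrize by the triangle $0,a,b$ and use the coupling $P$ vs.\ $Q$: both are mixtures $p\,\delta$-shift with a common component $(1-p)\phi_0$. By joint convexity of $(x,y)\mapsto x^\alpha y^{1-\alpha}$ together with rotational symmetry, we would show that moving $a$ radially outward (while keeping $b$ fixed) does not decrease $E$ whenever the constraints stay feasible. Symmetrically, moving $b$ away from $a$ along the circle or radially does not decrease $E$.

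The key tool for these monotonicity claims is to differentiate $E$ in the position of $a$ and use Gaussian integration by parts. This gives a derivative of the form $\alpha p\int (x-a)\phi_a\,(P/Q)^{\alpha-1}$, and its sign can be controlled by a reflection argument across the perpendicular bisector. We expect this step to be the main obstacle.

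\textbf{Pushing $\|a-b\|$ to $\Delta$.} Given monotonicity, the maximum is attained on the boundary where $\|a-b\|=\Delta$, because otherwise $b$ can be moved away from $a$. It is also attained where $\|a\|=\Delta$: if $\|a\|<\Delta$ and $\|a-b\|=\Delta$, we simultaneously rotate and extend, using the swap symmetry to handle the case $\|b\|>\|a\|$.

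The swap symmetry needs care, since $D_\alpha$ is not symmetric. The relevant symmetry is the reflection mapping the triangle $(0,a,b)$ so as to exchange the roles of the vertices $0$ and $b$. We would check that, after translation by $-b$, the pair becomes $(a-b)\Ber(p)+\calN$ versus $(-b)\Ber(p)+\calN$ mixed in a different way. If this symmetry fails, we instead argue directly that increasing $\|a\|$ with $\|a-b\|$ fixed increases $E$.

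\textbf{Folding the angle into $u$.} With $a=\Delta(1,0)$ and $\|a-b\|=\Delta$, we have $b=\Delta(1-\cos\theta,\sin\theta)$, and the norm constraint $\|b\|\le\Delta$ forces $\theta\le\pi/3$. Writing $b_1=\Delta u$ gives $b=\Delta(u,\sqrt{2u-u^2})$ with $u=1-\cos\theta\in[0,1/2]$, which matches \eqref{eq:1dopt}. Compactness then gives attainment of the max.
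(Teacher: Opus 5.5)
Your reduction to the plane and your final parametrization $b=\Delta(1-\cos\theta,\sin\theta)$, $u=1-\cos\theta\in[0,1/2]$ are fine. The middle of the argument, however, has a genuine gap: the monotonicity you plan to prove is false as stated. Take $b=\Delta(1,0)$ and $a=t(1,0)$ with $t\in[0,\Delta]$. All three constraints hold along this path. At $t=0$ the divergence $D_\alpha(\calN(0,I_2)\,\|\,\Delta(1,0)\Ber(p)+\calN(0,I_2))$ is strictly positive, while at $t=\Delta$ the two mixtures coincide and the divergence is $0$. So moving $a$ radially outward with $b$ fixed can strictly decrease $E$, and no integration-by-parts sign argument can establish that claim.

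The step forcing $\|a\|=\Delta$ is also unsupported. As you suspected, exchanging the roles of $0$ and $b$ is not a symmetry, because the common component $(1-p)\calN(0,I)$ is pinned at the origin. Your fallback, monotonicity in $\|a\|$ at fixed $\|a-b\|$, is left unproved. What actually closes this step is a pointwise comparison between two mirror families, not a monotonicity in $\|a\|$.

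The missing idea is that the objective is monotone in the Loewner order of the Gram matrix $\Sigma=\begin{pmatrix}\|a\|^2&\langle a,b\rangle\\ \langle a,b\rangle&\|b\|^2\end{pmatrix}$. This is exactly the repair of your observation that post-processing along $\hat a$ ``fails to preserve $b$''. If $\Sigma(a,b)\preceq\Sigma(a',b')$, then $sa'+tb'\mapsto sa+tb$, extended by $0$ on $\mathrm{span}\{a',b'\}^\perp$, is a well-defined linear contraction $A$. The map $z\mapsto Az+\calN(0,I_d-AA^\top)$ then sends $a'\Ber(p)+\calN(0,I_d)$ and $b'\Ber(p)+\calN(0,I_d)$ to $a\Ber(p)+\calN(0,I_d)$ and $b\Ber(p)+\calN(0,I_d)$ simultaneously, so data processing gives $E(a,b)\le E(a',b')$. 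Adding $\epsilon e_2e_2^\top$, $\epsilon e_1e_1^\top$ or $\epsilon\mathbf{1}\mathbf{1}^\top$ to $\Sigma$ then lets you assume that two of the three constraints are tight.

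The case $\|a\|=\|b\|=\Delta$ cannot be handled by Loewner moves, since rotating $b$ on the circle changes $\Sigma$ by an indefinite matrix. There you use that $\partial E/\partial\Sigma_{12}$ has the sign of $\mathbb{E}[\partial_x\partial_y h(X,Y)]<0$, where $h(x,y)=(1-p+pe^x)^\alpha(1-p+pe^y)^{1-\alpha}$ and $(X,Y)=(\langle a,Z\rangle-\|a\|^2/2,\langle b,Z\rangle-\|b\|^2/2)$. This is the correct form of your ``move $b$ along the circle'' claim, and it pushes $\|a-b\|$ up to $\Delta$.

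You are then left with the family $\{\|a-b\|=\Delta,\ \|a\|=\Delta\ge\|b\|\}$ and its mirror $\{\|a-b\|=\Delta,\ \|b\|=\Delta\ge\|a\|\}$. What you need is $D_\alpha(P_a\|P_b)\ge D_\alpha(P_b\|P_a)$ whenever $\|a\|\ge\|b\|$. Your reflection idea is the right tool, but for this comparison rather than for a derivative sign. Let $R$ be the reflection across the perpendicular bisector of $[a,b]$. It swaps $a$ and $b$, and on the half-plane containing $a$ it sends $x$ to a point no farther from the origin. With $\phi$ the density of $\calN(0,I_2)$, the integrand difference there is $f(u_1,v,w)-f(u_2,v,w)$, where $u_1=(1-p)\phi(x)\le u_2=(1-p)\phi(R(x))$ and $v=p\phi(x-a)\ge w=p\phi(x-b)$. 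Here $f(u,v,w)=(u+v)^\alpha(u+w)^{1-\alpha}-(u+w)^\alpha(u+v)^{1-\alpha}$ is decreasing in $u$ for $v\ge w$, by convexity of $t\mapsto t^\alpha-t^{1-\alpha}$ on $t\ge 1$, so the difference is nonnegative.
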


\begin{proof}[Proof sketch] The proof proceeds by reducing \eqref{eq:isoproblem} to simpler problems. It is straightforward to check that the R\'enyi divergence in \eqref{eq:isoproblem} for each $a, b$ only depends on the covariance matrix $\begin{pmatrix}
    \|a\|^2 & \langle a, b \rangle \\ \langle a, b \rangle & \|b\|^2
\end{pmatrix}$. Thus, we can assume $d = 2$. Furthermore, we show that it suffices to consider such covariance matrices that are maximal in the Loewner order. This leaves only a few possible cases, from which we derive \eqref{eq:1dopt} by ruling out most of the cases based on direct comparisons. See Appendix~\ref{subsec:1dopt} for a complete proof.
\end{proof}

Theorem~\ref{thm:1dopt} provides a one-step R\'enyi DP bound. Our proposed R\'enyi accountant is based on this one-step bound, and uses R\'enyi composition (Fact~\ref{fact:composition}) as well as conversion from R\'enyi DP to $(\eps, \delta)$-DP from \citep[Theorem 20]{balle2020hypothesis} to calculate the following privacy bound for DP-SGD.\footnote{Recently, \citep{riess2026optimal} provided an optimal procedure for converting R\'enyi DP guarantees to $(\eps, \delta)$-DP guarantees, which would in principle provide a tighter bound. Nonetheless, we focus on the formula from \citep{balle2020hypothesis}, as it has a simple closed form, is convenient for implementation (see Proposition~\ref{prop:unimodal}), and can already provide privacy bounds that improve over those from lossless accountants (see Section~\ref{sec:numerics}).}
\begin{corollary}[Privacy guarantee from \eqref{eq:1dopt}]\label{cor:1dopt} For any $\alpha > 1$ and $\delta \in (0, 1)$, DP-SGD (Definition~\ref{def:dp-sgd}) with $T$ iterations is $(\eps, \delta)$-DP with
\begin{equation}\label{eq:eps}
    \eps = T\rho\left(\alpha, p, \frac{2}{\sigma}\right) + \log \left(1 - \frac{1}{\alpha}\right) - \frac{1}{\alpha-1}\log (\delta \alpha)\,,
\end{equation}
where $\rho$ satisfies \eqref{eq:1dopt}.
\end{corollary}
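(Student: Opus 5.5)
The proof chains three results already available in the paper: the R\'enyi dominating-pair lemma citing \citep[Equation 56]{birrell2024rdp}, R\'enyi composition (Fact~\ref{fact:composition}), and the R\'enyi-to-$(\eps,\delta)$ conversion of \citep[Theorem 20]{balle2020hypothesis}.

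First, fix $\alpha>1$ and consider one iteration of DP-SGD as a mechanism $\calA_k(\cdot,\theta)$ taking the current iterate $\theta$ as input. After rescaling by $\sigma C/b$, which is a fixed bijection and so leaves R\'enyi divergences unchanged, the dominating-pair lemma applies for every $\theta$. It shows that any pair $(P,Q)$ with $D_\alpha(P\|Q)=\rho(\alpha,p,2/\sigma)$ is an order-$\alpha$ R\'enyi dominating pair. Theorem~\ref{thm:1dopt} gives a concrete value for $\rho$ via \eqref{eq:1dopt}. If $d=1$ we may embed into $d\ge2$, since the supremum is nondecreasing in $d$; in fact the corollary only needs an upper bound.

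Second, apply Fact~\ref{fact:composition} with $P_k=P$ and $Q_k=Q$ for all $k$. This gives
\[
D_\alpha(\calA(S)\|\calA(S'))\le T\rho(\alpha,p,2/\sigma)
\]
for all adjacent $S,S'$. Outputting only $\theta_T$ rather than the full trajectory can only decrease the divergence, by post-processing (data processing for $D_\alpha$). So DP-SGD is $(\alpha, T\rho)$-R\'enyi DP.

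Third, invoke \citep[Theorem 20]{balle2020hypothesis}. It states that $(\alpha,\varepsilon_R)$-RDP implies $(\eps,\delta)$-DP with
\[
\eps=\varepsilon_R+\log\bigl((\alpha-1)/\alpha\bigr)-\bigl(\log\delta+\log\alpha\bigr)/(\alpha-1).
\]
Substituting $\varepsilon_R=T\rho$ yields exactly \eqref{eq:eps}.

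The only real care is needed in the first step. The lemma is stated per adjacent pair, but composition requires a dominating pair that is uniform over the auxiliary input $\theta$. This holds because $\rho$ does not depend on $\theta$: the sensitivity bound $2C/b$ holds at every $\theta$. We should also make sure the conversion theorem's hypotheses ($\alpha>1$, $\delta\in(0,1)$) match those in the statement, which they do. Everything else is direct substitution.
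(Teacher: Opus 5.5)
Your proposal is correct and follows the paper's argument: the one-step bound from the Birrell et al.\ dominating-pair lemma combined with Theorem~\ref{thm:1dopt}, then R\'enyi composition via Fact~\ref{fact:composition}, then the conversion of \citep[Theorem 20]{balle2020hypothesis} with $\eps_R = T\rho(\alpha,p,2/\sigma)$, which gives exactly \eqref{eq:eps}. Your extra remarks are sound details the paper leaves implicit: the one-step bound is uniform in $\theta$, and the case $d=1$ can be embedded into $d \geq 2$, which matters because Theorem~\ref{thm:1dopt} assumes $d \geq 2$.
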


\subsection{Theoretical results for implementation}
The problem \eqref{eq:1dopt} over $u \in [0, 1/2]$ is parametrized by $\alpha, p$ and $\Delta$ (equivalently $\sigma$). We explore simple and sufficient conditions for these hyperparameters, under which the solutions of the  problem \eqref{eq:1dopt} are explicitly determined. For numerical implementation with hyperparameters outside of those conditions, we use a branch-and-bound approach to iteratively update an upper bound on \eqref{eq:1dopt}---see Appendix~\ref{subsec:numerics-implementation} for details.

\begin{proposition}[Sufficient conditions for characterizing \eqref{eq:1dopt}]\label{prop:sufficient} Consider the problem \eqref{eq:1dopt} over $u \in [0, 1/2]$.
\begin{enumerate}
    \item[(a)] (Large $\alpha$) \eqref{eq:1dopt} is maximized at $u = 0$ if
    \[\alpha \geq 2 \vee \left(1 + \frac{\sigma^2 \log(2/p)}{2(1-p)}\right)\,.
    \]
    \item[(b)] (Small $p$) \eqref{eq:1dopt} is maximized at either $u = 0$ or $u = 1/2$ if $\sigma^2 > \alpha$ and
    \[p \leq \left(1 + \left(3 + \frac{(12\sqrt{2} + 8)\alpha + 8\sqrt{2}-8}{\sigma^2 - \alpha}\right)e^{\frac{2}{\sigma^2} + \frac{6}{\sigma^2 - \alpha}}\right)^{-1}\,.
    \]
\end{enumerate}
\end{proposition}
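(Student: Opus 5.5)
We will work with the explicit form of the objective. Write $a=\Delta(1,0)$ and $b_u=\Delta(u,\sqrt{2u-u^2})$, so that $\|a\|=\|a-b_u\|=\Delta$ and $\langle a,b_u\rangle=\Delta^2 u$, $\|b_u\|^2=2u\Delta^2$. Expanding the mixtures as $P=(1-p)\calN(0,I)+p\calN(a,I)$ and $Q=(1-p)\calN(0,I)+p\calN(b_u,I)$, we have $e^{(\alpha-1)D_\alpha(P\|Q)}=\E_{Q}[(\diff P/\diff Q)^\alpha]$. Since $\alpha$ enters non-polynomially, we plan to compare the objective $F(u)\triangleq \E_Q[(\diff P/\diff Q)^\alpha]$ at different $u$ directly. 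Both parts reduce to showing that $F(u)\le F(0)$, respectively $F(u)\le F(0)\vee F(1/2)$, for all $u\in[0,1/2]$. Here $u=0$ corresponds to $Q=\calN(0,I)$, i.e.\ the pair $(\Delta\Ber(p)+\calN(0,1),\calN(0,1))$. The case $u=1/2$ gives an equilateral triangle with $\|b\|=\Delta$.

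\textbf{Part (a).} At $u=0$, $F(0)=\E_{\calN(0,1)}[(1-p+pe^{\Delta X-\Delta^2/2})^\alpha]$, which contains the term $p^\alpha e^{\alpha(\alpha-1)\Delta^2/2}$. For general $u$ we will bound $F(u)$ from above using convexity of $(x,y)\mapsto x^\alpha y^{1-\alpha}$, viewing $\diff P/\diff Q$ as a ratio of mixtures. Joint convexity gives the bound $\E_Q[(\diff P/\diff Q)^\alpha]\le \sum$ over mixture couplings, for instance pairing component $0$ with component $0$ and $a$ with $b_u$. Better, we pair $a$ with $0$ and $0$ with $b_u$ partially, and then compute the Gaussian Rényi terms $e^{\alpha(\alpha-1)\|\cdot\|^2/2}$ explicitly. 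The goal is an upper bound on $F(u)$ whose dominant exponential factor is $e^{\alpha(\alpha-1)\Delta^2/2}$ times a coefficient that is at most $p^\alpha$ plus lower-order terms. The condition $\alpha-1\ge \sigma^2\log(2/p)/(2(1-p))$ reads $(\alpha-1)\Delta^2(1-p)/2\ge 2\log(2/p)$ after substituting $\Delta=2/\sigma$, i.e.\ $e^{(\alpha-1)\Delta^2(1-p)/2}\ge (2/p)^2$. This is exactly what is needed to make the leading term of $F(0)$ dominate the cross terms arising in $F(u)$, which involve $\|a-b_u\|^2=\Delta^2$ but carry weights $(1-p)$ or $p^{\alpha-1}$. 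The requirement $\alpha\ge2$ will be used to apply Jensen and convexity of $x\mapsto x^{\alpha-1}$.

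\textbf{Part (b).} For small $p$ we will Taylor-expand $F(u)$ in $p$: $F(u)=1+p^2 c_2(u)+R(u)$, with the linear term vanishing. The explicit coefficient is $c_2(u)=\binom{\alpha}{2}\E[(e^{\langle a,X\rangle-\Delta^2/2}-e^{\langle b_u,X\rangle-\|b_u\|^2/2})^2]$. Using $\E e^{\langle a,X\rangle+\langle b,X\rangle-\cdots}=e^{\langle a,b\rangle}$, this equals $\binom{\alpha}{2}(e^{\Delta^2}+e^{2u\Delta^2}-2e^{u\Delta^2})$. As a function of $v=e^{u\Delta^2}$, the quantity $v^2-2v$ is convex, so $c_2$ is maximized at an endpoint $u\in\{0,1/2\}$. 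The main obstacle is controlling the remainder $R(u)$ uniformly in $u$ by $p^3$ times a constant depending on $\alpha,\sigma$. It must also be shown that, away from the maximizing endpoint, the gap in $c_2$ dominates $p\cdot$(that constant). The remainder's control near the endpoint is also needed, and there the derivative of $c_2$ in $v$ should be compared to the derivative of the remainder.

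We expect to bound the remainder with the integral form of Taylor's theorem for $(1+p(L-1))^\alpha$-type expressions. Moments of likelihood ratios of order up to about $\alpha$ and $\sigma^2/\alpha$ will then be controlled via Hölder. This is where the hypothesis $\sigma^2>\alpha$ and the factors $e^{2/\sigma^2+6/(\sigma^2-\alpha)}$ enter: they are Gaussian moment generating functions $e^{c\Delta^2}$ with $\Delta^2=4/\sigma^2$ and exponents inflated by a Hölder conjugate $\sigma^2/(\sigma^2-\alpha)$. The constants $12\sqrt2+8$ and similar will come from bounding the third-order terms, whose $\sqrt2$ arises from $\|b_u\|\le\sqrt2\Delta$ in the Gaussian computations. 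We would first try to prove the endpoint claim by showing that $F$ is quasi-convex in $v$ on $[1,e^{\Delta^2/2}]$.
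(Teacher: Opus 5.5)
For part (a), your joint-convexity plan cannot give $F(u)\le F(0)$. For any coupling $\pi$ of the component weights, joint convexity yields $F(u)\le\sum_{ij}\pi_{ij}e^{\alpha(\alpha-1)\|m_i-m'_j\|^2/2}$. Because $\|a\|=\|a-b_u\|=\Delta$, the whole mass $p$ of the $\calN(a,I_2)$ component pays $e^{\alpha(\alpha-1)\Delta^2/2}$ whatever it is paired with. The best such bound is therefore $(1-p)+pe^{\alpha(\alpha-1)\Delta^2/2}$. This is independent of $u$, equals what joint convexity gives at $u=0$, and is strictly larger than $F(0)=\mathbb{E}[(1-p+pL)^\alpha]$ by strict convexity. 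A coefficient of order $p^\alpha$ in front of $e^{\alpha(\alpha-1)\Delta^2/2}$ is out of reach this way.

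Since the target inequality is an equality at $u=0$, you need a bound on $F(u)/F(0)$ that equals $1$ at $u=0$ and is nonincreasing in $u$. The paper builds it in four steps:
\begin{enumerate}
\item Apply $1-p+pe^{y}\ge e^{py}$ only to the $Q$-factor, whose exponent $1-\alpha$ is negative.
\item Shift $Z\mapsto Z-(\alpha-1)pb_u$, which turns the tilt into a factor $e^{-x}$, with $x=(\alpha-1)p\Delta^2u$, multiplying $pe^{\langle a,Z\rangle-\|a\|^2/2}$.
\item Use Minkowski in $L^\alpha$ to obtain $F(u)/F(0)\le e^{(1+(\alpha-1)p)x}\bigl(e^{-x}+(1-p)F(0)^{-1/\alpha}(1-e^{-x})\bigr)^{\alpha}$.
\item Show this right-hand side is $1$ at $x=0$ and nonincreasing, using $F(0)^{1/\alpha}\ge pe^{(\alpha-1)\Delta^2/2}$ and $pe^{(\alpha-1)(1-p)\Delta^2/2}\ge\alpha/(\alpha-1)$.
\end{enumerate}
The assumption $\alpha\ge 2$ only serves to replace $\alpha/(\alpha-1)$ by $2$. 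Note also that you misread the hypothesis: with $\Delta=2/\sigma$ it says $e^{(\alpha-1)(1-p)\Delta^2/2}\ge 2/p$, not $(2/p)^2$.

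For part (b), $c_2$ is increasing in $u$, so your perturbative argument amounts to showing that $u=1/2$ is the maximizer. That is false on part of the range (b) covers. Expand $F(u)=\mathbb{E}_Q[(1+r)^\alpha]$, with $r=p(A-B)/(1+pB)$, to order $\Delta^4$, exactly in $p$. The $u$-dependent part of $F$ is $\tfrac{\alpha(\alpha-1)}{2}p^2(1-p)\Delta^4\bigl[(1-2p)u^2-2p(\alpha-1)u\bigr]$. The $p^3$ terms contribute $-\Theta(\alpha^3p^3\Delta^4u)$, whereas the $c_2$ gap is only $\Theta(\alpha^2p^2\Delta^4)$. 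Hence for small $\Delta$, $F(0)>F(1/2)$ as soon as $p>1/(4\alpha-2)$. Meanwhile the threshold in (b) tends to $1/4$ as $\sigma^2/\alpha\to\infty$. For example, $\alpha=10$, $\sigma=100$, $p=0.1$ satisfies (b)'s hypotheses, but the maximum is at $u=0$.

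So ``the $c_2$ gap dominates $p$ times the remainder constant'' can hold only for $p\lesssim 1/\alpha$. This route gives a non-explicit small-$p$ statement, essentially the claim proved inside Proposition~\ref{prop:pexpansion}, not the stated threshold. What is missing is a structural argument that rules out interior maxima without deciding which endpoint wins; your closing remark about quasi-convexity names this target but supplies no mechanism.

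The paper shows that $F'(u)=0$ implies $F''(u)>0$ on $(0,1/2)$, as follows:
\begin{enumerate}
\item Both derivatives are expectations under the tilted law with density proportional to $w(y)h(x,y)\psi(x,y)$. Its Hessian is bounded below by $(1-\alpha\Delta^2/4)$ times the inverse Gaussian covariance, which is exactly where $\sigma^2>\alpha$ enters, via $v(1-v)\le 1/4$.
\item Brascamp--Lieb, a log-Sobolev reverse H\"older bound, and Cauchy--Schwarz (the source of the $\sqrt2$) control the quadratic and cross terms by $(\widetilde{\mathbb{E}}[w])^2$.
\item Herbst's argument plus Stein's lemma give $\widetilde{\mathbb{E}}[w]\le \frac{p}{1-p}e^{\Delta^2/2+\Delta^2/(2-\alpha\Delta^2/2)}$.
\end{enumerate}
The factors $1/(\sigma^2-\alpha)$ come from $\Delta^2/(1-\alpha\Delta^2/4)=4/(\sigma^2-\alpha)$, not from H\"older conjugates.
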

The proof is deferred to Appendix~\ref{subsec:sufficient}. 
Another question lies in choosing the R\'enyi order $\alpha$ for Corollary~\ref{cor:1dopt}.
We show that for the final converted $\eps$, the curve $\alpha \mapsto \eps(\alpha)$ is unimodal in $\alpha$. This implies, for example, that calculating \eqref{eq:eps} over integers $2 \leq \alpha \leq \alpha_{\max}$ only requires evaluation at $O(\log \alpha_{\max})$ orders, based on binary search (for sufficiently large $\alpha_{\max}$).

\begin{proposition}[Unimodality in R\'enyi order]\label{prop:unimodal} For any $T, p, \sigma$ and $\delta\in (0, 1)$, $\eps$ in \eqref{eq:eps} as a function of $\alpha \in (1, \infty)$ is unimodal, i.e., minimized at a unique point.
\end{proposition}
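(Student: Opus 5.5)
The plan is to write $\eps(\alpha) = T\rho(\alpha) + \log(1 - 1/\alpha) - \frac{\log(\delta\alpha)}{\alpha - 1}$, where $\rho(\alpha) \triangleq \rho(\alpha, p, 2/\sigma)$. I would reparametrize by $\lambda = \alpha - 1 > 0$ and show that $\eps$ is a sum of a convex nondecreasing part and a part with a unique-minimum structure. The first ingredient is a standard fact about R\'enyi divergences: for any fixed pair $(P,Q)$, the map $\alpha \mapsto (\alpha-1) D_\alpha(P\|Q)$ is convex, being the log of a moment generating function of the privacy loss, and $\alpha \mapsto D_\alpha(P\|Q)$ is nondecreasing. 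Since $\rho$ is a supremum over pairs, by \eqref{eq:isoproblem} or \eqref{eq:1dopt}, it follows that $g(\lambda) \triangleq \lambda\rho(1+\lambda)$ is convex, as a supremum of convex functions, and that $\rho$ is nondecreasing. I will also note $\rho \geq 0$.

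Next I would study $h(\lambda) \triangleq \log\frac{\lambda}{1+\lambda} - \frac{\log\delta + \log(1+\lambda)}{\lambda}$, so that $\eps = T g(\lambda)/\lambda + h(\lambda)$. A direct unimodality argument is cleanest via a sign-change argument on the derivative of $\lambda\eps(\lambda)$, or more robustly via quasi-convexity. Multiplying $\eps$ by $\lambda$ is not order-preserving in $\lambda$, so instead I would work with sublevel sets. For each level $c$, the set $\{\lambda : \eps(\lambda) \le c\}$ equals $\{\lambda : Tg(\lambda) + \lambda h(\lambda) - c\lambda \le 0\}$. Here $\lambda h(\lambda) = \lambda\log\lambda - \lambda\log(1+\lambda) - \log(1+\lambda) - \log\delta$, whose second derivative is $1/\lambda - 1/(1+\lambda) + 1/(1+\lambda)^2 > 0$, so it is convex. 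The function $Tg + \lambda h - c\lambda$ is therefore convex, its sublevel set is an interval, and $\eps$ is quasi-convex on $(0,\infty)$.

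To upgrade quasi-convexity to a unique minimizer, I would use the strict convexity of $\lambda h$. This makes $F_c(\lambda) = Tg + \lambda h - c\lambda$ strictly convex. At the minimal level $c^* = \inf \eps$, any two minimizers $\lambda_1 < \lambda_2$ would satisfy $F_{c^*}(\lambda_i) = 0$ with $F_{c^*} \ge 0$ everywhere, which contradicts strict convexity. Existence of a minimizer comes from the boundary behavior. As $\lambda \to 0^+$, the term $-\log\delta/\lambda \to +\infty$ since $\delta < 1$, and this dominates $\log\lambda$, so $\eps \to \infty$. As $\lambda \to \infty$, $\eps \ge T\rho(2) + o(1) + \log(1-1/\alpha)$. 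I will need either $\rho \to \infty$ or a decreasing argument. Continuity of $\eps$ follows from convexity of $g$.

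The main obstacle is the behavior as $\alpha \to \infty$. If $\rho$ stays bounded, $\eps$ might decrease toward an infimum that is never attained. I expect $\rho(\alpha) \to \infty$ for Gaussian mixtures, since the R\'enyi divergence of the $u=0$ pair grows at least like $\alpha\Delta^2/2$ times a factor for large $\alpha$, so $\eps \to \infty$. I would verify this by lower bounding $\rho$ using the $u=0$ element of \eqref{eq:1dopt} and a one-point restriction of the integral. Failing that, I would interpret ``unimodal'' as allowing a monotone curve and handle that case separately.
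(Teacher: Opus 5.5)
Your proposal is correct. The uniqueness half is the paper's argument. The paper also shows that $(\alpha-1)\varepsilon(\alpha) = T(\alpha-1)\rho(\alpha) + (\alpha-1)\log(\alpha-1) - \alpha\log\alpha - \log\delta$ is strictly convex. It uses convexity of $(\alpha-1)D_\alpha$ for each fixed $u$, preserved under the supremum over $u$, plus strict convexity of the conversion part. It then gets strict quasiconvexity from a chord inequality, which is equivalent to your minimal-sublevel-set argument. There is a small slip in your computation: $(\lambda h)'' = 1/\lambda - 1/(1+\lambda) = 1/(\lambda(1+\lambda))$, with no extra $1/(1+\lambda)^2$ term, but the sign is unaffected.

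Where you differ is existence. Your coercivity route works once made concrete. Take the $u=0$ element, with $a=\Delta(1,0)$ and $Z\sim\mathcal{N}(0,I_2)$, and drop the $1-p$ term. This gives $e^{(\alpha-1)\rho(\alpha)} \ge \mathbb{E}[(pe^{\langle a,Z\rangle-\Delta^2/2})^\alpha] = p^\alpha e^{\alpha(\alpha-1)\Delta^2/2}$. Hence $\rho(\alpha)\ge \alpha\Delta^2/2 + \frac{\alpha}{\alpha-1}\log p\to\infty$, so $\varepsilon\to\infty$ at both ends and a minimizer exists by continuity.

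The paper needs no growth estimate on $\rho$. It uses only that $\rho$ is nondecreasing in $\alpha$, which you already have. It combines this with the fact that the conversion term $\log(1-1/\alpha)-\log(\delta\alpha)/(\alpha-1)$ has derivative $\log(\delta\alpha)/(\alpha-1)^2>0$ for $\alpha>1/\delta$. So $\varepsilon$ is increasing on $(1/\delta,\infty)$, and the minimizer lies in $(1,1/\delta]$. This explicit localization is convenient for the binary search.

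Consequently, your worry that $\varepsilon$ might decrease toward an unattained infimum if $\rho$ stayed bounded cannot happen. Your fallback of reading ``unimodal'' as allowing a monotone curve should be dropped. It is unnecessary, and it would not prove the statement, which asserts that a minimizer exists.
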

The proof is deferred to Appendix~\ref{subsec:unimodal}.

\section{Numerical comparisons}\label{sec:numerics}

We numerically compare the $(\eps, \delta)$-DP guarantees from three different R\'enyi accountants (ours from Corollary~\ref{cor:1dopt}, \citep{birrell2024rdp}, \citep{wang2021rdp}) and lossless accountants (Proposition~\ref{prop:privacy-lossless}), for different values of the DP-SGD hyperparameters $p$ (subsampling rate), $\sigma$ (noise multiplier), and $T$ (number of iterations). 
Our code for reproducing the numerics can be found at: \url{https://github.com/jinhobok/renyi_vs_lossless_dpsgd}.
\begin{figure}[t]
    \centering

    \begin{minipage}{\textwidth}
        \includegraphics[width=0.45\textwidth]{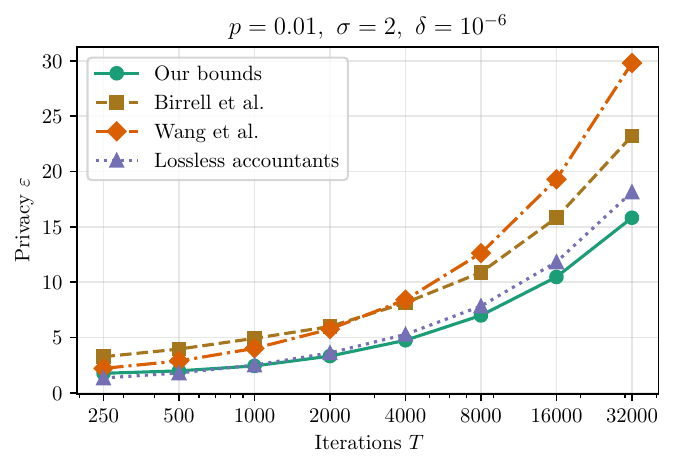}
        \hfill
        \includegraphics[width=0.45\textwidth]{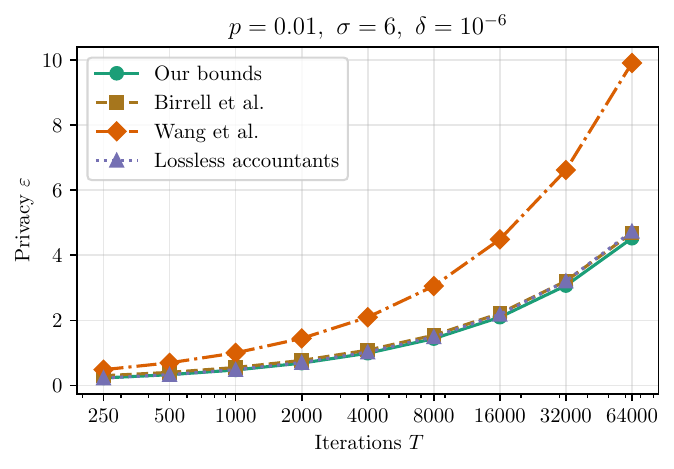}
        \caption{Accounted privacy $\eps$ for noise multipliers $\sigma \in \{2, 6\}$. Our bounds improve over existing bounds as the number of iterations increases.}
        \label{fig:varyingsigma}
    \end{minipage}

    \begin{minipage}{\textwidth}
        \includegraphics[width=0.45\textwidth]{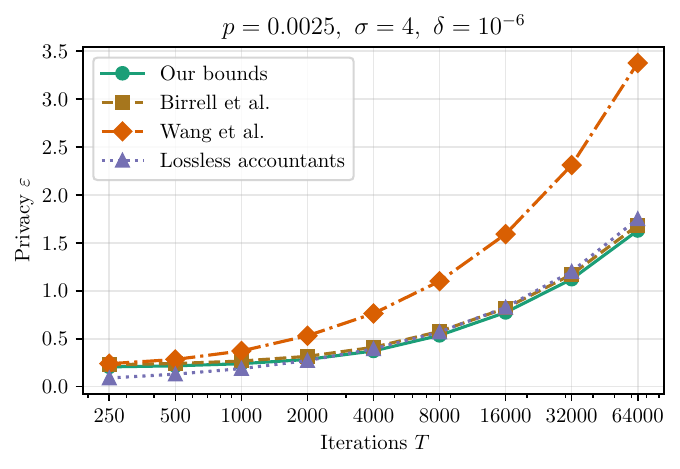}
        \hfill
        \includegraphics[width=0.45\textwidth]{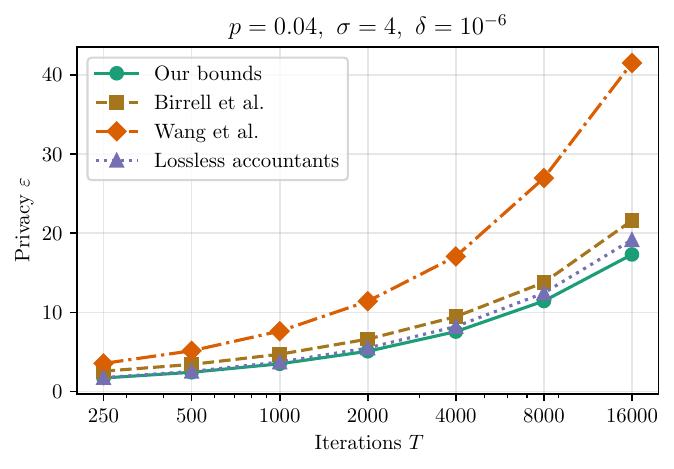}
        \hfill
        \caption{Accounted privacy $\eps$ for subsampling rates $p \in \{0.0025, 0.04\}$. The improvements from our bounds over existing bounds are more prominent for larger $p$.}
        \label{fig:varyingp}
    \end{minipage}
\end{figure}

Figure~\ref{fig:varyingsigma} displays numerical values of $\eps$ for different values of $\sigma$, whereas Figure~\ref{fig:varyingp} displays those for different values of $p$. Throughout, it can be observed that our privacy bounds always improve over those from existing R\'enyi accountants. Furthermore, while our bounds are sometimes larger than those from lossless accountants (for small $T$ here; see also Appendix~\ref{subsec:numerics-full} for other cases), they improve over those as the number of iterations increases. This is consistent with the original motivation of this work on using different dominating pairs: compared to R\'enyi accountants, lossless accountants use another dominating pair whose corresponding privacy guarantee becomes relatively loose as the number of compositions increases (see Section~\ref{sec:dominatingpairs}). We observe similar findings over broader ranges of the parameters; the full results can be found in Appendix~\ref{subsec:numerics-full}.

\section{Asymptotic comparisons}\label{sec:asymp}
In addition to the numerical comparisons, we compare different privacy accounting methods for DP-SGD based on their asymptotics as the subsampling rate $p \to 0$. This simplification helps us provide a theoretical explanation about how our privacy guarantees may improve over those from existing accountants.

\subsection{Comparison with lossless accountants} 
Here, we compare our bounds with those provided by lossless accountants in the asymptotic regime $T \to \infty, p \to 0$ and $Tp^2 \to \nu \in (0, \infty)$. We focus on this regime because the privacy guarantee from the lossless accountants converges to that of the Gaussian mechanism; this is the celebrated central limit theorem, originally formulated in $f$-DP \citep[Corollary 4]{drs22}. For notational simplicity, throughout this subsection we fix $\eps$ and let $\delta$ denote the limiting value (limit superior if the limit is not guaranteed to exist) of the corresponding privacy parameter.

\begin{lemma}[Limit of $\delta$ for lossless accountants]\label{lem:clt} If $T \to \infty, p \to 0$ and $Tp^2 \to \nu$, the limiting privacy parameter $\delta = \delta_{\mathrm{L}}(\eps)$ for lossless accountants satisfies 
\[\delta_{\mathrm{L}}(\eps) = \Phi\left(-\frac{\eps}{\mu} + \frac{\mu}{2}\right) - e^{\eps}\Phi\left(-\frac{\eps}{\mu} - \frac{\mu}{2}\right)\,,
\]
where $\mu \triangleq \sqrt{2\nu(e^{4/\sigma^2}\Phi(3/\sigma) + 3\Phi(-1/\sigma) - 2)}$.
\end{lemma}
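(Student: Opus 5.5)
The plan is to read Lemma~\ref{lem:clt} as a direct consequence of the central limit theorem for $f$-DP \citep[Corollary 4 and Theorem 6]{drs22}. I would apply it to the dominating pair $(\widetilde P,\widetilde Q)$ of Proposition~\ref{prop:privacy-lossless}, and then convert the resulting limiting tradeoff function to $(\eps,\delta)$.

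\textbf{Step 1: identify the one-step tradeoff function.} Write $f_p \triangleq T(\calN(0,1), \Delta\Ber(p)+\calN(0,1))$, where $T(P,Q)$ denotes the tradeoff function (Type II error as a function of Type I error) of testing $P$ against $Q$. Since $(\Delta\Ber(p)+\calN(0,1),\calN(0,1))$ is a mixture, $f_p = pG_\Delta + (1-p)\mathrm{Id}$, where $G_\Delta \triangleq T(\calN(0,1),\calN(\Delta,1))$.

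By Lemma~\ref{lem:hockeystick-necessary} and the equality statement in Proposition~\ref{prop:privacy-lossless}, $H_\gamma(\widetilde P\|\widetilde Q)$ coincides with the hockey-stick profile of $(\Delta\Ber(p)+\calN(0,1),\calN(0,1))$ for $\gamma\ge 1$. For $\gamma<1$ it coincides with the profile of the reversed pair. Using the duality between hockey-stick curves and tradeoff functions (the $\delta(\eps)=1+f^*(-e^\eps)$ correspondence, extended to $\eps<0$), this means $T(\widetilde P,\widetilde Q)$ is exactly the symmetrization $C_p(G_\Delta) \triangleq \min\{f_p,f_p^{-1}\}^{**}$ of \citep{drs22}. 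So the $T$-fold composition corresponds to $C_p(G_\Delta)^{\otimes T}$.

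\textbf{Step 2: apply the CLT.} Corollary 4 of \citep{drs22} (sampling without replacement and replace adjacency, i.e.\ exactly this $C_p(G_{\mu_0})$) gives the following. If $p\sqrt T\to \sqrt\nu$, then $C_p(G_{\mu_0})^{\otimes T}\to G_\mu$ uniformly, with
\[
\mu=\sqrt{2\nu\bigl(e^{\mu_0^2}\Phi(3\mu_0/2)+3\Phi(-\mu_0/2)-2\bigr)}.
\]
Substituting $\mu_0=\Delta=2/\sigma$ yields the stated $\mu$.

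Should one prefer to verify this rather than cite it, the route is Theorem 6 of \citep{drs22}. One computes the functionals $\mathrm{kl}$, $\kappa_2$ and $\bar\kappa_3$ of $C_p(G_{\Delta})$ as $p\to0$. The requirements are:
\begin{itemize}[leftmargin=*]
\item $T\,\mathrm{kl}\to\mu^2/2$;
\item $T\kappa_2\to\mu^2$;
\item $T\bar\kappa_3\to0$.
\end{itemize}
The symmetrization is what truncates the log-likelihood ratio at the self-dual point, and this truncation is where the $\Phi(3/\sigma)$ and $\Phi(-1/\sigma)$ terms come from. The second moment of the truncated privacy loss is of order $p^2$ and the third of order $p^3$, so with $Tp^2\to\nu$ the third-moment term vanishes.

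\textbf{Step 3: convert to $(\eps,\delta)$.} For a symmetric tradeoff function $f$, the tight $\delta$ at level $\eps$ is $1+f^*(-e^\eps)$. Uniform convergence of tradeoff functions on $[0,1]$ implies pointwise convergence of the convex conjugates at each fixed slope. The limit $\delta$ therefore equals that of $G_\mu$, which is the standard Gaussian-mechanism formula
\[
\Phi(-\eps/\mu+\mu/2)-e^\eps\Phi(-\eps/\mu-\mu/2).
\]
Taking limsup is harmless, since the limit exists.

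\textbf{Main obstacle.} I expect the main obstacle to be Step 1: rigorously matching the hockey-stick dominating pair of Proposition~\ref{prop:privacy-lossless} with $C_p(G_\Delta)$, including the $\gamma<1$ branch. I would handle it via the one-to-one correspondence between the full curve $\gamma\mapsto H_\gamma$ and the tradeoff function (the Blackwell equivalence cited in the paper). The moment computations in Step 2 are the other laborious part, but they can be taken from \citep{drs22}.
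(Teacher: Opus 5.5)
Your proposal is correct and takes essentially the same route as the paper. The paper gives no separate proof: it reads the lemma off the $f$-DP central limit theorem for subsampled noisy SGD (Corollary~4 of \citep{drs22}) with $\Delta = 2/\sigma$, followed by the standard conversion of $G_\mu$ to $(\eps,\delta)$. Your additional steps, identifying the pair of Proposition~\ref{prop:privacy-lossless} with $C_p(G_\Delta)$ via hockey-stick/tradeoff duality and passing the uniform limit through the convex conjugate, are sound details that the paper leaves implicit.
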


The following is a simple upper bound for the limiting $\delta$ for our accountant, in the same asymptotic regime. As a caveat, we note that this result is based on a R\'enyi DP bound with R\'enyi order that may not be an integer.
\begin{proposition}[Limit of $\delta$ for our accountant]\label{prop:lim} If $T \to \infty, p \to 0$ and $Tp^2 \to \nu$, the limiting privacy parameter $\delta = \delta(\eps)$ for our accountant satisfies
\[\delta(\eps) \leq \frac{\lambda }{\lambda +\eps}\left(\frac{\eps}{\lambda + \eps}\right)^{\eps / \lambda }\exp \left(-\frac{\eps^2}{2\lambda} + \frac{\eps}{2}\right)\,,
\]
where $\lambda \triangleq 2\nu(e^{4/\sigma^2}-e^{2/\sigma^2})$.
\end{proposition}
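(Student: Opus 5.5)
We bound $\delta$ through Corollary~\ref{cor:1dopt}, which needs an asymptotic estimate of $T\rho(\alpha,p,2/\sigma)$ for fixed $\alpha$ as $p\to 0$. We then optimize over a (possibly non-integer) order $\alpha$ and invert \eqref{eq:eps} to express $\delta$ as a function of $\eps$.

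\textbf{Step 1: second-order expansion of the one-step divergence.} Fix $\alpha>1$ and $u\in[0,1/2]$, and write $a=\Delta(1,0)$ and $b=\Delta(u,\sqrt{2u-u^2})$, so that $\|a\|=\|b-a\|=\Delta$ and $\langle a,b\rangle=\Delta^2 u$. Let $P$ and $Q$ be the two mixtures in \eqref{eq:1dopt}. We write the likelihood ratio as
\[
\frac{\diff P}{\diff Q}=\frac{1+p(e^{\langle a,x\rangle-\|a\|^2/2}-1)}{1+p(e^{\langle b,x\rangle-\|b\|^2/2}-1)}=1+p\,(L_a-L_b)+O(p^2)
\]
under $\calN(0,I_2)$, with $L_a=e^{\langle a,x\rangle-\Delta^2/2}$ and $L_b$ defined analogously. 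Expanding $\E_Q[(\diff P/\diff Q)^\alpha]$ to second order in $p$, the first-order term vanishes because $\E[L_a-L_b]=0$. The second-order term is $\binom{\alpha}{2}p^2\,\E_{\calN(0,I)}[(L_a-L_b)^2]$, plus corrections from $Q$ being a mixture, which enter at order $p^3$. Using $\E[L_aL_b]=e^{\langle a,b\rangle}$, this gives
\[
\E_{\calN(0,I)}[(L_a-L_b)^2]=e^{\Delta^2}+e^{\Delta^2}-2e^{\Delta^2u}.
\]
Hence $D_\alpha(P\|Q)=\tfrac{\alpha}{2}p^2\bigl(2e^{\Delta^2}-2e^{\Delta^2u}\bigr)+O_{\alpha,\Delta}(p^3)$. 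This leading coefficient is maximized at $u=1/2$, giving
\[
\tfrac{\alpha}{2}p^2\cdot 2\bigl(e^{\Delta^2}-e^{\Delta^2/2}\bigr)=\alpha p^2\bigl(e^{4/\sigma^2}-e^{2/\sigma^2}\bigr).
\]
Multiplying by $T$ with $Tp^2\to\nu$ yields $T\rho\to\alpha\lambda/2$, where $\lambda=2\nu(e^{4/\sigma^2}-e^{2/\sigma^2})$; this identifies $\lambda$. The main obstacle is justifying this uniformly in $u$ (so that the max over $u$ commutes with the limit) and controlling the remainder so that $T\cdot O(p^3)\to0$. Since $u$ ranges over a compact set and $\alpha$ is fixed, I would bound the Taylor remainder of $(1+pZ)^\alpha$ using Gaussian moment bounds on $L_a,L_b$ that are uniform in $u$. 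For non-integer $\alpha$ this requires care, which I would handle by splitting on the event $\{p|Z|\le 1/2\}$ and bounding its complement via Markov's inequality together with exponential moments.

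\textbf{Step 2: limiting $\eps$--$\delta$ relation.} Substituting into \eqref{eq:eps}, for each fixed $\alpha>1$ the limiting guarantee is that $\delta$ works whenever
\[
\eps\ge \frac{\alpha\lambda}{2}+\log\Bigl(1-\frac1\alpha\Bigr)-\frac{1}{\alpha-1}\log(\delta\alpha).
\]
Solving for $\delta$ gives
\[
\delta\le \frac{1}{\alpha}\Bigl(1-\frac1\alpha\Bigr)^{\alpha-1}\exp\Bigl(-(\alpha-1)\bigl(\eps-\tfrac{\alpha\lambda}{2}\bigr)\Bigr).
\]
To pass to the limit, I would take the lim sup of $\delta$ at fixed $\eps$ and use the continuity of this expression in the limiting value of $T\rho$.

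\textbf{Step 3: choice of $\alpha$.} Rather than optimizing exactly, take the Gaussian-optimal choice $\alpha=1+\eps/\lambda$. Then $\tfrac1\alpha=\tfrac{\lambda}{\lambda+\eps}$ and $\bigl(1-\tfrac1\alpha\bigr)^{\alpha-1}=\bigl(\tfrac{\eps}{\lambda+\eps}\bigr)^{\eps/\lambda}$. The exponent becomes
\[
-\frac{\eps}{\lambda}\Bigl(\eps-\frac{\lambda+\eps}{2}\Bigr)=-\frac{\eps^2}{2\lambda}+\frac{\eps}{2},
\]
which reproduces the claimed bound exactly. This choice of $\alpha$ need not be an integer, consistent with the caveat stated before the proposition.
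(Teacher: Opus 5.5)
Your route is the paper's: obtain $T\rho(\alpha,p,2/\sigma)\to\nu\alpha(e^{4/\sigma^2}-e^{2/\sigma^2})=\alpha\lambda/2$ from the second-order expansion in $p$ (the paper invokes Proposition~\ref{prop:pexpansion}), invert \eqref{eq:eps} to get exactly \eqref{eq:asymprenyi-delta}, and substitute $\alpha=1+\eps/\lambda$. Your Steps 2 and 3 are correct.

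Step 1, however, contains an error that breaks the maximization over $u$ as you wrote it. You used $\E[L_b^2]=e^{\Delta^2}$. But $\|b\|^2=\Delta^2(u^2+2u-u^2)=2u\Delta^2$; it is $b-a$, not $b$, that has norm $\Delta$. So $\E[L_b^2]=e^{2u\Delta^2}$, and
\[
\E_{\calN(0,I_2)}[(L_a-L_b)^2]=e^{\Delta^2}-2e^{u\Delta^2}+e^{2u\Delta^2},
\]
which is the paper's \eqref{eq:gaussianmgf}. Your expression $2e^{\Delta^2}-2e^{u\Delta^2}$ is \emph{decreasing} in $u$, so taken at face value its maximum over $[0,1/2]$ is at $u=0$. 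That would give $T\rho\to\alpha\nu(e^{4/\sigma^2}-1)$, and hence the bound with $\lambda$ replaced by the larger $2\nu(e^{4/\sigma^2}-1)$, not the stated one. Your claim that the maximum is at $u=1/2$ contradicts your own formula. You arrive at the right constant only because the two expressions coincide at $u=1/2$.

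The repair is one line. The correct coefficient equals $(e^{\Delta^2}-1)+(e^{u\Delta^2}-1)^2$, which is increasing in $u$ and equals $2(e^{\Delta^2}-e^{\Delta^2/2})$ at $u=1/2$. With that fixed, your plan for exchanging $\max_u$ with the limit works. You need a Taylor remainder $O_{\alpha,\Delta}(p^3)$ that is uniform over $u\in[0,1/2]$; then $\max_u$ of the expansion differs from $\max_u$ of the leading term by $O(p^3)$, and $Tp^3=(Tp^2)p\to 0$. This is lighter than the paper's argument, which identifies $u=1/2$ as the exact maximizer for small $p$ because Proposition~\ref{prop:pexpansion} also needs the $p^3$ term.

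Also note that $1-p+pL_b\ge 1-p$ and $1-p+pL_a>0$, so the integrand $(1-p+pL_a)^{\alpha}(1-p+pL_b)^{1-\alpha}$ is smooth in $p$ on $[0,1)$ for every real $\alpha>1$. A direct integral-form Taylor remainder in $p$, bounded via lognormal moments, therefore handles non-integer $\alpha$, and your event-splitting/Markov argument is unnecessary.
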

\begin{proof}
    For each $\alpha$, from Proposition~\ref{prop:pexpansion} the limiting R\'enyi DP bound is given as $\nu \alpha(e^{4/\sigma^2}-e^{2/\sigma^2})$. Plugging this in Corollary~\ref{cor:1dopt} and rearranging, we obtain
    \begin{equation}\label{eq:asymprenyi-delta}
        \delta(\eps) \leq \frac{1}{\alpha}\left(1-\frac{1}{\alpha}\right)^{\alpha-1}\exp((\alpha-1)(\nu \alpha(e^{4/\sigma^2}-e^{2/\sigma^2}) - \eps))\,.
    \end{equation}
    The conclusion follows with $\alpha = 1+\eps/\lambda$ in \eqref{eq:asymprenyi-delta}.\footnote{This choice of $\alpha$ is loosely based on first-order optimality: %
    it is straightforward to check that the derivative of \eqref{eq:asymprenyi-delta} being $0$ is equivalent to $\lambda \alpha - (\eps +\lambda/2) - \log(\alpha/(\alpha-1)) = 0$. For $\alpha = 1 + \eps / \lambda$, this last line is equal to $\lambda/2 - \log(1+\lambda/\eps)$ which is close to $0$ for small $\lambda$.}
\end{proof}

\begin{figure}[t]
    \centering
    \includegraphics[width=0.7\textwidth]{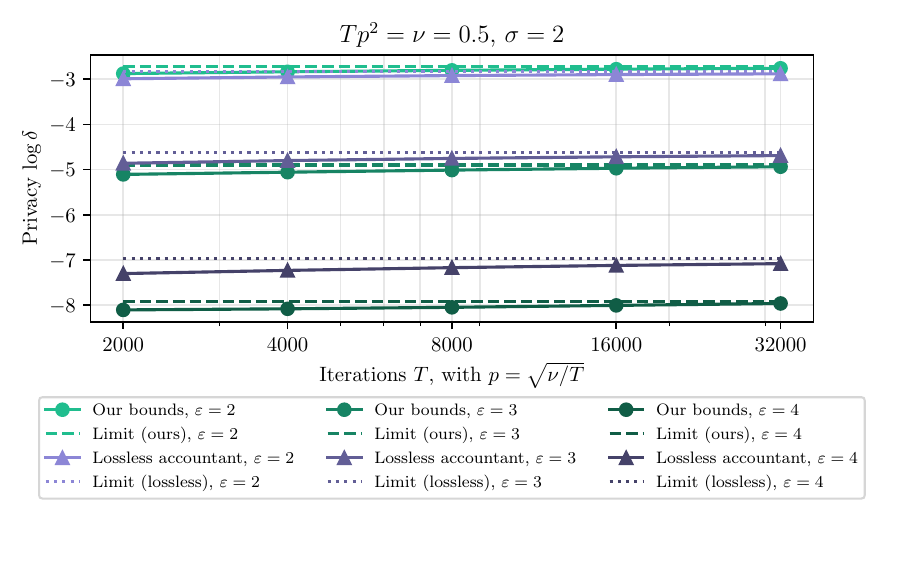}
    \caption{Illustration of privacy bounds for finite $T$ and their asymptotic limit (Lemma~\ref{lem:clt} and Proposition~\ref{prop:lim}; note that the latter is a limit superior). Here, $p$ depends on $T$, keeping $Tp^2 = \nu$. As $\eps$ increases, our bounds provide better privacy guarantees than those from lossless accountants. The privacy bounds become closer to their limits as $T$ increases.}
    \label{fig:asymp-renyi-lossless}
\end{figure}

Figure~\ref{fig:asymp-renyi-lossless} illustrates the privacy parameter $\delta$ (in log scale) for $(\eps, \delta)$-DP obtained from finite $T$, as well as their limiting values. It can be observed from the plot that as $\eps$ increases, our bounds yield smaller $\delta$ than those from lossless accountants, both for finite $T$ and the limits. In fact, the following result can be shown:
\begin{corollary}[Comparison between limiting $\delta$]\label{cor:lim} In the setting of Lemma~\ref{lem:clt} and Proposition~\ref{prop:lim}, $\mu^2 > \lambda$ and 
\[\log \delta(\eps) - \log \delta_{\mathrm{L}}(\eps) \leq  - \frac{1}{2}\left(\frac{1}{\lambda} - \frac{1}{\mu^2}\right)\eps^2 + \log \eps + O_{\mu, \lambda}(1)\,.
\]
\end{corollary}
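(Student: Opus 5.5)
The plan has two parts. First I show $\mu^2>\lambda$ by an elementary inequality in $x=1/\sigma$. Then I compare the explicit bound of Proposition~\ref{prop:lim} with a Gaussian-tail lower bound on $\delta_{\mathrm{L}}(\eps)$ from Lemma~\ref{lem:clt}.

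\textbf{Step 1: $\mu^2>\lambda$.} Dividing by $2\nu$, the claim is
\[
g(x)\triangleq e^{4x^2}\Phi(3x)+3\Phi(-x)-2-(e^{4x^2}-e^{2x^2})>0 \qquad\text{for } x>0.
\]
Rewriting, $g(x)=e^{2x^2}-e^{4x^2}\Phi(-3x)+3\Phi(-x)-2$. At $x=0$ we get $g(0)=1-\tfrac12+\tfrac32-2=0$. I will show $g'>0$. We have
\[
g'(x)=4xe^{2x^2}-8xe^{4x^2}\Phi(-3x)+3e^{4x^2}\phi(3x)-3\phi(x).
\]
Since $e^{4x^2}\phi(3x)=\phi(x)e^{-x^2/2}\cdot e^{4x^2-4x^2}$, the sum of the last two terms is $3\phi(x)(e^{-x^2/2}\cdot\ldots)$. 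This bookkeeping needs care. An alternative I would try first is a series or Mills-ratio argument: bound $\Phi(-3x)\le \phi(3x)/(3x)$ for large $x$, and use Taylor expansion near $0$. Verifying this inequality uniformly in $\sigma$ is the main obstacle. If the calculus becomes messy, I would split into $x\le x_0$ (series) and $x\ge x_0$ (tail bounds), since $g(x)\sim e^{2x^2}$ for large $x$.

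\textbf{Step 2: lower bound on $\delta_{\mathrm{L}}$.} Write $\delta_{\mathrm{L}}(\eps)=\Phi(-\eps/\mu+\mu/2)-e^\eps\Phi(-\eps/\mu-\mu/2)$. The identity $e^\eps\phi(\eps/\mu+\mu/2)=\phi(\eps/\mu-\mu/2)$ and the Mills-ratio bounds $\frac{t}{1+t^2}\phi(t)\le\Phi(-t)\le\phi(t)/t$ give, for $t=\eps/\mu-\mu/2>0$,
\[
\delta_{\mathrm{L}}(\eps)\ \ge\ \phi(t)\Big(\frac{t}{1+t^2}-\frac{1}{t+\mu}\Big)=\phi(t)\,\frac{t\mu-1}{(1+t^2)(t+\mu)}.
\]
This is of order $\phi(t)/t^2$ for large $\eps$. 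Hence
\[
\log\delta_{\mathrm{L}}(\eps)\ \ge\ -\frac{\eps^2}{2\mu^2}+\frac{\eps}{2}-2\log\eps+O_\mu(1)\qquad\text{for large } \eps.
\]

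\textbf{Step 3: upper bound on $\delta$ and combination.} From Proposition~\ref{prop:lim}, the factor $\frac{\lambda}{\lambda+\eps}\le\lambda/\eps$. Also $(\eps/(\lambda+\eps))^{\eps/\lambda}=(1+\lambda/\eps)^{-\eps/\lambda}\to e^{-1}$, so this factor is bounded. Hence
\[
\log\delta(\eps)\le -\frac{\eps^2}{2\lambda}+\frac{\eps}{2}-\log\eps+O_\lambda(1).
\]
Subtracting, the $\eps/2$ terms cancel and
\[
\log\delta(\eps)-\log\delta_{\mathrm{L}}(\eps)\le -\tfrac12\big(\tfrac1\lambda-\tfrac1{\mu^2}\big)\eps^2+\log\eps+O(1).
\]
The error term is uniform for $\eps\ge\eps_0$, and by continuity it can be absorbed into the constant on bounded ranges of $\eps$, or the statement is read asymptotically. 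Step 1 guarantees that the leading coefficient is negative.
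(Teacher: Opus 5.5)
Your Steps 2 and 3 are essentially the paper's argument. The paper uses the lower Mills bound $(1/t-1/t^3)\phi(t)$ where you use $\frac{t}{1+t^2}\phi(t)$, which makes no difference. In Step 1, your function $g$ and your formula for $g'$ coincide with the paper's.

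The genuine gap is that Step 1 is never completed. So the first assertion of the corollary, $\mu^2>\lambda$, is left unproven, and that is also what makes the leading coefficient negative. The missing observation is the exact identity $\phi(3x)=e^{-4x^2}\phi(x)$, which holds because $\tfrac{9x^2}{2}-\tfrac{x^2}{2}=4x^2$. Equivalently, $e^{4x^2}\phi(3x)=\phi(x)$; your rewrite ``$\phi(x)e^{-x^2/2}\cdots$'' is incorrect. With this identity the two density terms in $g'$ cancel exactly, giving
\[
g'(x)=4xe^{2x^2}\bigl(1-2e^{2x^2}\Phi(-3x)\bigr).
\]
The Chernoff bound $\Phi(-y)\le\tfrac12 e^{-y^2/2}$ for $y\ge 0$ then gives $2e^{2x^2}\Phi(-3x)\le e^{-5x^2/2}<1$ for $x>0$. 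Hence $g'>0$ on $(0,\infty)$ and $g(x)>g(0)=0$, which is exactly how the paper proceeds.

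Your fallback (Taylor expansion near $0$ plus Mills-ratio tails) is only a plan, and it is more delicate than it appears. The constant, linear and quadratic terms of $g$ all cancel at the origin, with $g(x)=\tfrac{8}{\sqrt{2\pi}}x^3+O(x^4)$. You would therefore need explicit third-order control with a quantitative remainder on some $[0,x_0]$, matched to a tail estimate on $[x_0,\infty)$.

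Finally, do not claim that the constant can be absorbed by continuity on arbitrary bounded ranges of $\eps$. The right-hand side tends to $-\infty$ as $\eps\to 0$, and your Step 2 bound is only meaningful once $t\mu>1$. The statement is asymptotic in $\eps$, and the paper proves it for $\eps\ge\mu^2\vee 16$.
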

See Appendix~\ref{subsec:lim} for its proof. Here, the right hand side eventually decreases to $-\infty$ as $\eps$ increases, illustrating how our privacy bounds may improve over those from lossless accountants.

\subsection{Comparison between R\'enyi accountants}

In this subsection, we provide a simple comparison between R\'enyi accountants, focusing on their one-step R\'enyi DP bounds. Namely, we consider our bound (Theorem~\ref{thm:1dopt}) and bounds from \citep{wang2021rdp} and \citep{birrell2024rdp}, comparing their 
asymptotic expansions in the subsampling rate $p$. 

As highlighted in \citep[Section 3.4]{birrell2024rdp}, the one-step R\'enyi DP bound $\rho_{\mathrm{W}}$ from \citep[Theorem 9]{wang2021rdp} satisfies $\rho_{\mathrm{W}} = 2\alpha(e^{4/\sigma^2}-1)p^2 + O_{\alpha, \sigma}(p^3)$,
whereas the one-step R\'enyi bound $\rho_{\mathrm{B}}$ from \citep[Theorem 3.4]{birrell2024rdp} satisfies $\rho_{\mathrm{B}} = \alpha (e^{4/\sigma^2}-e^{2/\sigma^2})p^2  + O_{\alpha, \sigma}(p^3)$.
Thus, \cite{birrell2024rdp} provides an improvement by a factor of roughly $4$ (leading term of $8\alpha p^2/\sigma^2$ versus $2\alpha p^2 /\sigma^2$, for large $\sigma$) over \cite{wang2021rdp} in R\'enyi DP.

In the following, we present the asymptotic expansion of our bound \eqref{eq:1dopt} in $p$. Notably, while its leading order term $\alpha (e^{4/\sigma^2}-e^{2/\sigma^2})p^2$ is exactly the same as that in \citep{birrell2024rdp}, its next order term (cubic in $p$) is strictly smaller. We remark that the bound from \citep{birrell2024rdp} has a tuning order $m$ that controls the degree of the Taylor expansion in their result, and $m \geq 4$ is explicitly suggested.

\begin{proposition}\label{prop:pexpansion} The one-step R\'enyi DP bound $\rho = \rho(\alpha, p, 2/\sigma)$ from \eqref{eq:1dopt} satisfies
\begin{equation}\label{eq:1dopt-pexpansion}
\begin{aligned}
    \rho
    &= \alpha (e^{\frac{4}{\sigma^2}} - e^{\frac{2}{\sigma^2}})p^2 -\frac{\alpha}{2}(e^{\frac{12}{\sigma^2}}-e^{\frac{8}{\sigma^2}} - 2e^{\frac{4}{\sigma^2}} + 2e^{\frac{2}{\sigma^2}})p^3 + O_{\alpha, \sigma}(p^4)\,,
\end{aligned}
\end{equation}
whereas the one-step R\'enyi DP bound $\rho_B$ from \cite{birrell2024rdp} (for any tuning order $m \geq 4$) satisfies
\begin{equation}
\begin{aligned}
    \rho_B &= \alpha (e^{\frac{4}{\sigma^2}} - e^{\frac{2}{\sigma^2}})p^2 + \frac{\alpha(\alpha + 2)}{2}\sqrt{(e^{\frac{4}{\sigma^2}} - 1)(e^{\frac{24}{\sigma^2}} - 4e^{\frac{12}{\sigma^2}}+6e^{\frac{4}{\sigma^2}}-3)}p^3 + O_{\alpha, \sigma}(p^4)\,.
\end{aligned}
\end{equation}
\end{proposition}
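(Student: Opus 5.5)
The plan is to expand the divergence in \eqref{eq:1dopt} in powers of $p$ for each fixed $u$, and show that for small $p$ the maximizer is $u = 1/2$. Then read off the $p^2$ and $p^3$ coefficients there. The expansion of $\rho_B$ is a separate Taylor expansion of the explicit formula in \citep[Theorem 3.4]{birrell2024rdp}.

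\textbf{Step 1 (exact reduction).} Write $a = \Delta(1,0)$ and $b = \Delta(u, \sqrt{2u-u^2})$, so that $\|a\|^2 = \Delta^2$, $\|b\|^2 = 2u\Delta^2$ and $\langle a, b\rangle = u\Delta^2$. Set $L_v(x) \triangleq e^{\langle v, x\rangle - \|v\|^2/2}$ and let $\E_0$ denote expectation under $\calN(0, I_2)$. Then $\diff Q = (1 + p(L_b - 1))\,\diff\calN(0,I_2)$, and the likelihood ratio is
\[W = \frac{1 + p(L_a - 1)}{1 + p(L_b - 1)}, \qquad\text{so}\qquad W - 1 = \frac{p(L_a - L_b)}{1 + p(L_b-1)}.\]
Since $\E_Q[W-1] = 0$, the binomial series gives
\[\E_Q[W^\alpha] = 1 + \binom{\alpha}{2}\E_Q[(W-1)^2] + \binom{\alpha}{3}\E_Q[(W-1)^3] + O(p^4),\]
where the remainder is uniform in $u \in [0, 1/2]$ because all Gaussian moments involved are finite and continuous in $u$. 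Moreover
\[\E_Q[(W-1)^2] = p^2\,\E_0[(L_a-L_b)^2] - p^3\,\E_0[(L_a-L_b)^2(L_b-1)] + O(p^4), \qquad \E_Q[(W-1)^3] = p^3\,\E_0[(L_a-L_b)^3] + O(p^4).\]
Every expectation here reduces to $\E_0[L_{v_1}\cdots L_{v_k}] = \exp\bigl(\sum_{i<j}\langle v_i, v_j\rangle\bigr)$. Finally, $\frac{1}{\alpha-1}\log(1+x) = \frac{x}{\alpha-1} + O(x^2)$ with $x = O(p^2)$, so the logarithm contributes nothing through order $p^3$.

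\textbf{Step 2 (locating the maximizer).} The leading term is
\[\frac{\alpha}{2}p^2\, h(u), \qquad h(u) \triangleq e^{\Delta^2} - 2e^{u\Delta^2} + e^{2u\Delta^2}.\]
With $t = u\Delta^2$ we have $\frac{d}{dt}(e^{2t} - 2e^t) = 2e^t(e^t - 1) \ge 0$, with strict inequality for $t > 0$. So $h$ is strictly increasing on $(0, 1/2]$, its maximum is at $u = 1/2$, and
\[\frac{\alpha}{2}h(1/2) = \alpha\bigl(e^{4/\sigma^2} - e^{2/\sigma^2}\bigr),\]
which is the claimed $p^2$ coefficient.

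The main obstacle is justifying that the maximizer over $u$ is exactly $u = 1/2$, and not merely $1/2 + O(p)$, with an error that does not spoil the $p^3$ term. I would bound the $u$-derivative of the $O(p^3)$ remainder by $Cp^3$ uniformly in $u$, by differentiating under the integral. The derivative of the leading term is bounded below by $c\,p^2 > 0$ on $[u_0, 1/2]$ for any fixed $u_0 > 0$. On $[0, u_0]$ the leading term sits a fixed amount below $h(1/2)$. Together these give that the maximizer is $u = 1/2$ for all sufficiently small $p$.

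Even without this, a cruder argument suffices for the expansion. The maximum is at least the value at $u = 1/2$. It is also at most $\max_u$ of the leading term plus the uniform $p^3$ term, and the perturbation of the maximizer only costs a second-order amount.

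\textbf{Step 3 (cubic term and $\rho_B$).} At $u = 1/2$ we have $\|a\| = \|b\| = \|a - b\| = \Delta$ and $\langle a, b\rangle = \Delta^2/2$. I would compute $\E_0[(L_a-L_b)^2(L_b-1)]$ and $\E_0[(L_a-L_b)^3]$ via the product formula above. Combining them with the coefficients $\frac{\alpha}{2}$ and $\frac{\alpha(\alpha-2)}{6}$ should give the cubic coefficient $-\frac{\alpha}{2}\bigl(e^{12/\sigma^2} - e^{8/\sigma^2} - 2e^{4/\sigma^2} + 2e^{2/\sigma^2}\bigr)$. I would check this algebra carefully, in particular whether the $\alpha^2$ contributions cancel.

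For $\rho_B$, I would take the explicit series-type bound of \citep[Theorem 3.4]{birrell2024rdp} with tuning order $m \ge 4$ and expand it in $p$. The terms of order $j \ge 4$ in their Taylor truncation, and their remainder term, are $O(p^4)$. This leaves the quadratic term and their cubic bound, which uses Cauchy--Schwarz and produces the displayed square-root coefficient.
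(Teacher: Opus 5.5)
Your plan is correct and is essentially the paper's proof. It uses a third-order expansion in $p$ with remainder uniform in $u$, a localization showing $u=1/2$ is the exact maximizer for small $p$ (the paper uses one mean-value bound on $\partial_u\partial_p^3 F$ over all of $(0,1/2]$ instead of your split at $u_0$), evaluation at $u=1/2$, and reading $\rho_B$ off \citep[Theorem 3.4]{birrell2024rdp}, where the relevant combinatorial sum equals $6/\alpha$ and produces the factor $\alpha+2$. The algebra you flagged does close, and your grouping makes it transparent: at $u=1/2$ the pair $(L_a,L_b)$ is exchangeable under $\calN(0,I_2)$, so $\E_0[(L_a-L_b)^3]=0$ and every $\alpha^2$ contribution drops out, leaving $-\frac{\alpha}{2}\E_0[(L_a-L_b)^2(L_b-1)]=-\frac{\alpha}{2}\bigl(e^{3\Delta^2}-e^{2\Delta^2}-2e^{\Delta^2}+2e^{\Delta^2/2}\bigr)$, exactly the claimed cubic coefficient (phrase Step 1 via Taylor's theorem with remainder rather than the binomial series, which diverges where $|W-1|>1$ for non-integer $\alpha$).
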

See Appendix~\ref{subsec:pexpansion} for its proof. It can be checked that for our bound, the third-order term is strictly negative, 
whereas the bound from \citep{birrell2024rdp} has a strictly positive third-order term. This is consistent with the difference between the plots in Figure~\ref{fig:varyingp}: for larger $p$, the bound from \citep{birrell2024rdp} is further from our bound, as the difference between the $p^3$ terms becomes more prominent. Figure~\ref{fig:asymp-renyi-renyi} illustrates the asymptotic expansions (up to $p^3$ terms) as well as their limitations, where their quality of approximation depends on other hyperparameters.

\begin{figure}[t]
    \centering
    \includegraphics[width=0.7\textwidth]{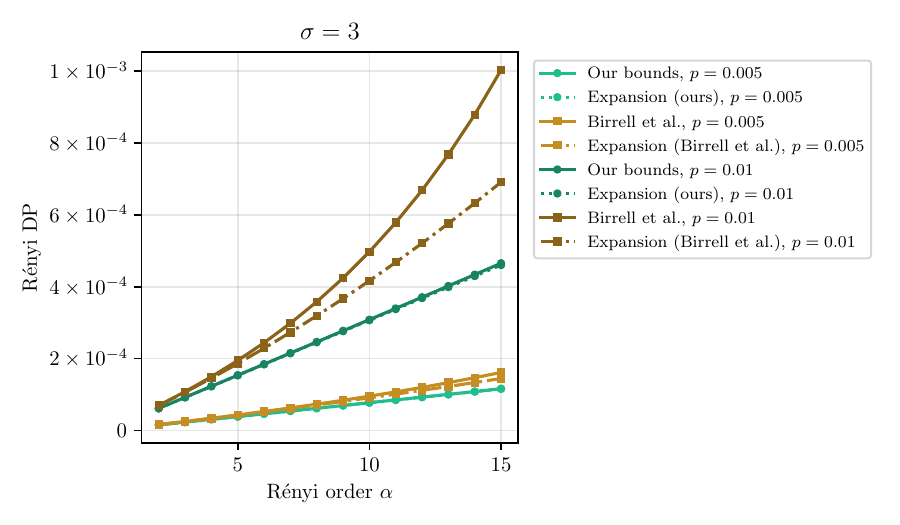}
    \caption{Illustration of R\'enyi DP bounds and their asymptotic expansions (Proposition~\ref{prop:pexpansion}, truncated at $p^3$ terms). For \citep{birrell2024rdp}, we set the tuning parameter to be $m = 4$. The actual bounds are close to the asymptotic expansions when the R\'enyi order $\alpha$ is small, but they deviate more as $\alpha$ increases, in particular for the bounds from \citep{birrell2024rdp}.}
    \label{fig:asymp-renyi-renyi}
\end{figure}

\section{Discussion}\label{sec:discussion}

In this work, we studied R\'enyi accountants and lossless accountants for DP-SGD with fixed-size batches under replace adjacency. In particular, we characterized the privacy guarantee of a R\'enyi accountant as the value of a one-dimensional problem, and provided numerical illustrations as well as theoretical explanations on how it can improve over privacy guarantees from lossless accountants throughout different hyperparameter settings.

An interesting follow-up question is to study whether the notion of hockey-stick dominating pair (Definition~\ref{def:dominatingpairs}) can be relaxed, e.g., for inequalities in $\gamma$ holding for a subset of $[0, \infty)$. While such a modification is known to be invalid if done ``na\"ively'' \citep[Section 7]{lebeda2025pitfalls}, it may be possible that there are other valid approaches, especially for specific mechanisms. More generally, one can ask what types of divergences can be used for privacy accounting---e.g., for handling data processing inequality and composition \citep{mpst21, cruz2026complete}. At a fundamental level, we believe that understanding the powers and limits of different privacy accountants is an important question for both theorists and practitioners of differential privacy.

\section*{AI use}
We used OpenAI's GPT-5.6 to assist with generating new content for this paper, including formulating mathematical claims, writing code for numerical experiments, and proofreading. We reviewed and verified all AI-assisted content, and all writing in the main text and appendix was prepared by the authors who take full responsibility for the content of the paper.

\addcontentsline{toc}{section}{References}
\bibliographystyle{alpha}
\bibliography{ref}

\newpage
\appendix
\section{Deferred details for proofs}\label{sec:proofs}

\subsection{Proof of Theorem~\ref{thm:1dopt}}\label{subsec:1dopt}
The proof proceeds with successive steps of simplifying the problem \eqref{eq:isoproblem}.

    \paragraph*{Reduction with respect to Loewner order.} We first claim that it suffices to consider positive semidefinite matrices that are maximal in the Loewner order. For that, let $a', b' \in \reals^d$ be such that
    \[\begin{pmatrix}
        \|a\|^2 & \langle a, b \rangle \\ \langle a, b \rangle & \|b\|^2
    \end{pmatrix} \preceq \begin{pmatrix}
        \|a'\|^2 & \langle a', b' \rangle \\ \langle a', b' \rangle & \|b'\|^2
    \end{pmatrix}\,,
    \]
    or equivalently, $\|sa + tb\|^2 \leq \|sa' + tb'\|^2$ for all $s, t \in \reals$. Consider a linear map $A: \reals^d \to \reals^d$ such that
    \begin{align*}
        A(sa' + tb') &= sa + tb \text{ for all } s, t \in \reals, \\
        \quad A(u') &= 0 \text{ for all } u' \in \mathrm{span}\{a', b'\}^{\perp}\,.
    \end{align*}
    Then $A$ is well-defined, since $sa' + tb' = 0$ implies $sa + tb = 0$. Furthermore, $A^T A \preceq I_d$ because for any $x \in \reals^d$, writing as $x = sa' + tb' + x^\perp$ with $x^\perp \in \mathrm{span}\{a', b'\}^{\perp}$ we have
    \[x^T A^T A x = \|sa + tb\|^2 \leq \|sa' + tb'\|^2 \leq \|sa' + tb' + x^\perp\|^2 = \|x\|^2\,.
    \]
    Now consider a (randomized) post-processing map $z \mapsto  Az + \calN(0, I_d - AA^T)$ (note that $I_d - AA^T \succeq 0$, from $A^TA \preceq I_d$). Then from its definition, $a'\Ber(p) + \calN(0, I_d)$ and $b'\Ber(p) + \calN(0, I_d)$ are respectively mapped by this post-processing to $a\Ber(p) + \calN(0, I_d)$ and $b\Ber(p) + \calN(0, I_d)$. By data processing inequality,
    \[D_\alpha(a\Ber(p) + \calN(0, I_d) \| b\Ber(p) + \calN(0, I_d)) \leq  D_\alpha(a'\Ber(p) + \calN(0, I_d) \| b'\Ber(p) + \calN(0, I_d))
    \]
    \paragraph*{Reduction to $d = 2$.} 
    For each fixed $a, b$, maximizing the objective in \eqref{eq:isoproblem} is equivalent to maximizing   \begin{equation}\label{eq:2dgaussianintegral}
        \exp((\alpha-1)\rho(\alpha, p, \Delta)) =\E[(1-p + pe^{\langle a, Z \rangle - \|a\|^2 / 2})^{\alpha}(1-p + pe^{\langle b, Z \rangle - \|b\|^2 / 2})^{1-\alpha}]\,,
    \end{equation}
    where $Z \sim \calN(0, I_d)$. In particular, \eqref{eq:2dgaussianintegral} only depends on the joint distribution $(\langle a, Z \rangle, \langle b, Z \rangle) \sim \calN\left(0, \begin{pmatrix}
        \|a\|^2 & \langle a, b \rangle \\ \langle a, b \rangle & \|b\|^2
    \end{pmatrix}\right)$. By reparametrizing the covariance matrix as $\Sigma \triangleq \begin{pmatrix}
        \|a\|^2 & \langle a, b \rangle \\ \langle a, b \rangle & \|b\|^2
    \end{pmatrix}$, it suffices to maximize \eqref{eq:2dgaussianintegral} over all $2 \times 2$ positive semidefinite matrices $\Sigma$.

    \paragraph*{Further reduction within $2 \times 2$ covariances.}
    The constraints $\|a\| \leq \Delta, \|b\| \leq \Delta, \|a - b\|\leq \Delta$ are together equivalent to 
    \[\Sigma_{11} \leq \Delta^2, \Sigma_{22} \leq \Delta^2, \Sigma_{11} + \Sigma_{22} - 2\Sigma_{12} \leq \Delta^2.
    \]
    First, it suffices to consider $\Sigma$ such that at least two of the inequalities are equalities. Indeed, if $\Sigma_{22} < \Delta^2$ and $\Sigma_{11} + \Sigma_{22} - 2\Sigma_{12} < \Delta^2$, one can consider $\Sigma + \eps\begin{pmatrix}
        0 & 0 \\ 0 & 1
    \end{pmatrix}$ for a sufficiently small $\eps > 0$, which dominates $\Sigma$ in the Loewner order and still satisfies the constraints; a similar argument applies if $\Sigma_{11} < \Delta^2$ and $\Sigma_{11} + \Sigma_{22} - 2\Sigma_{12} < \Delta^2$. If $\Sigma_{11} < \Delta^2$ and $\Sigma_{22} < \Delta^2$, one can consider $\Sigma + \eps\begin{pmatrix}
        1 & 1 \\ 1 & 1
    \end{pmatrix}$.

    Thus, we have three (possibly overlapping) cases:
    \begin{align}
        \Sigma_{11} &\leq  \Delta^2, \Sigma_{22} = \Delta^2, \Sigma_{11} + \Sigma_{22} - 2\Sigma_{12} = \Delta^2 \label{eq:1dopt-case1}\\
        \Sigma_{11} &= \Delta^2, \Sigma_{22} \leq \Delta^2, \Sigma_{11} + \Sigma_{22} - 2\Sigma_{12} = \Delta^2 \label{eq:1dopt-case2}\\
        \Sigma_{11} &= \Delta^2, \Sigma_{22} = \Delta^2, \Sigma_{11} + \Sigma_{22} - 2\Sigma_{12} \leq \Delta^2\,. \label{eq:1dopt-case3}
    \end{align}
    The third case \eqref{eq:1dopt-case3} is redundant: the partial derivative of \eqref{eq:2dgaussianintegral} with respect to $\Sigma_{12}$ is equal to
    \[\alpha(1-\alpha)p^2\E[e^{\langle a, Z \rangle - \|a\|^2/2}(1-p + pe^{\langle a, Z \rangle - \|a\|^2 / 2})^{\alpha-1}\times e^{\langle b, Z \rangle - \|b\|^2/2}(1-p + pe^{\langle b, Z \rangle - \|b\|^2 / 2})^{-\alpha}]\,,
    \]
    which is negative. Combined with $\Sigma_{12} \geq \Delta^2/2$ from \eqref{eq:1dopt-case3}, it follows that \eqref{eq:2dgaussianintegral} is maximized at $\Sigma_{12} = \Delta^2/2$ in this case. This implies $\Sigma_{11} + \Sigma_{22} - 2\Sigma_{12} = \Delta^2$, and hence the case \eqref{eq:1dopt-case3} subsumed in the first two cases \eqref{eq:1dopt-case1} and \eqref{eq:1dopt-case2}.

    For \eqref{eq:1dopt-case1}, without loss of generality we can parametrize as (with $d = 2$) $b = \Delta(1, 0)$ and $a = \Delta(u, \sqrt{2u - u^2})$ for some $u \in [0, 1/2]$. Thus the maximum of \eqref{eq:2dgaussianintegral} subject to \eqref{eq:1dopt-case1} is equal to
    \[\max_{u \in [0, 1/2]}D_\alpha\left(\Delta(u, \sqrt{2u - u^2}) \times \mathrm{Ber}(p) + \calN(0, I_2) 
\Big \|\Delta(1, 0)\times \mathrm{Ber}(p) + \calN(0, I_2)\right)\,.
    \]
    Similarly, for \eqref{eq:1dopt-case2} we can let $a = \Delta(1, 0)$ and $b = \Delta(u, \sqrt{2u - u^2})$; the maximum of \eqref{eq:2dgaussianintegral} subject to \eqref{eq:1dopt-case2} is then equal to
    \[\max_{u \in [0, 1/2]}D_\alpha\left(\Delta(1, 0)\times \mathrm{Ber}(p) + \calN(0, I_2)     \Big \| \Delta(u, \sqrt{2u - u^2}) \times \mathrm{Ber}(p) + \calN(0, I_2)\right)\,.
    \]
    We show that the latter is always at least as large as the former, i.e., it only suffices to consider the second case \eqref{eq:1dopt-case2}.
    \begin{lemma}\label{lem:renyiasymm} For any $u \in [0, 1/2]$,
    \begin{align*}
        &D_\alpha\Big(\Delta(1, 0)\times \mathrm{Ber}(p) + \calN(0, I_2) \Big \| \Delta(u, \sqrt{2u - u^2}) \times \mathrm{Ber}(p) + \calN(0, I_2)\Big) \\
        &\geq D_\alpha\Big(\Delta(u, \sqrt{2u - u^2}) \times \mathrm{Ber}(p) + \calN(0, I_2)  \Big \|\Delta(1, 0)\times \mathrm{Ber}(p) + \calN(0, I_2)\Big)\,.
    \end{align*}
    \end{lemma}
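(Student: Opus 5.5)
The plan is to write both sides through the Gaussian integral \eqref{eq:2dgaussianintegral} and interpolate between the two covariance matrices along a path that respects the constraints. Write $a = \Delta(1,0)$ and $b = \Delta(u,\sqrt{2u-u^2})$, so that $\|a\|^2 = \Delta^2$, $\|b\|^2 = 2u\Delta^2 \le \Delta^2$, $\|a-b\|^2=\Delta^2$ and $\langle a, b\rangle = u\Delta^2$. For a $2\times 2$ covariance $\Sigma$ and $(X,Y)\sim\calN(0,\Sigma)$, set
\[
F(\Sigma) \triangleq \E\Big[\big(1-p+pe^{X-\Sigma_{11}/2}\big)^{\alpha}\big(1-p+pe^{Y-\Sigma_{22}/2}\big)^{1-\alpha}\Big].
\]
The left-hand side of Lemma~\ref{lem:renyiasymm} is $\frac{1}{\alpha-1}\log F(\Sigma^{(1)})$ and the right-hand side is $\frac{1}{\alpha-1}\log F(\Sigma^{(0)})$, where $\Sigma^{(1)} = \begin{pmatrix}\Delta^2 & u\Delta^2\\ u\Delta^2 & 2u\Delta^2\end{pmatrix}$ and $\Sigma^{(0)}$ is the same matrix with the diagonal entries swapped. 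It therefore suffices to show $F(\Sigma^{(1)}) \ge F(\Sigma^{(0)})$.

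I would use the path $\Sigma(t)$, $t\in[2u\Delta^2,\Delta^2]$, with $\Sigma_{11}=t$, $\Sigma_{22}=(1+2u)\Delta^2-t$ and $\Sigma_{12}=u\Delta^2$ fixed. Along this path $\Sigma_{11}+\Sigma_{22}-2\Sigma_{12}=\Delta^2$ stays constant, so every point is admissible. One should check that $\Sigma(t)\succeq 0$ throughout; this holds since $t\big((1+2u)\Delta^2-t\big)\ge 2u\Delta^4\ge u^2\Delta^4$ at the endpoints and the product is concave in $t$. The goal is then to show $\frac{\diff}{\diff t}F(\Sigma(t))\ge 0$.

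The derivative will be computed with the Gaussian heat-equation identity $\partial_{\Sigma_{ii}}\E[h(X,Y)] = \tfrac12\E[\partial_{ii}h]$, together with the explicit $-\Sigma_{ii}/2$ in the exponents. Writing $L_1 = e^{X-\Sigma_{11}/2}$, $g_1 = 1-p+pL_1$ and similarly $L_2, g_2$, the cleanest way is to note that $\partial_{\Sigma_{11}}$ applied to $(1-p+pe^{X-\Sigma_{11}/2})^\alpha$ equals $\tfrac12(\partial_x^2-\partial_x)$ of it, since $e^{x-s/2}$ solves the backward heat equation. This gives
\[
\frac{\diff F}{\diff t} = \tfrac12\,\E\Big[\alpha(\alpha-1)p^2L_1^2 g_1^{\alpha-2}g_2^{1-\alpha}\Big] - \tfrac12\,\E\Big[(1-\alpha)(-\alpha)p^2L_2^2 g_2^{-\alpha-1}g_1^{\alpha}\Big].
\]
This reduces the claim to the single inequality
\[
\E\big[(pL_1/g_1)^2\, g_1^{\alpha}g_2^{1-\alpha}\big] \ \ge\ \E\big[(pL_2/g_2)^2\, g_1^{\alpha}g_2^{1-\alpha}\big],
\]
that is, under the tilted measure $\diff\widetilde{\Prob}\propto g_1^\alpha g_2^{1-\alpha}\,\diff\Prob$, the second moment of $r_1 = pL_1/g_1\in(0,1)$ dominates that of $r_2 = pL_2/g_2$.

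The main obstacle is proving this tilted moment comparison for $t$ in the range where $\Sigma_{11}\ge\Sigma_{22}$. By the symmetry of the path about its midpoint, that is the only range needed after a possible reparametrization, since $F$ at the swapped point is what we compare against. My first attempt would be a change of measure: the tilt by $g_1^\alpha g_2^{1-\alpha}$ with $\alpha>1$ favors large $L_1$ and small $L_2$, which should push $r_1$ up and $r_2$ down relative to the untilted law. To make this rigorous, I would condition on $X - cY$, choosing the coordinates so that the comparison becomes one-dimensional, and apply a Harris/FKG-type inequality, using that $r_i$ is increasing in its own coordinate.

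If the monotone-path argument proves too delicate, the fallback is a direct comparison of the two endpoints only. Expand $g_1^\alpha g_2^{1-\alpha}$ in the mixture structure (for integer $\alpha$, binomially in $p$) and compare term by term, using that each term is an explicit Gaussian moment of the form $\exp(\text{quadratic in }\Sigma)$. The fact that $\|a\|\ge\|b\|$ should then favor putting the longer vector in the numerator. Non-integer $\alpha$ would follow by a convexity/interpolation argument in $\alpha$.
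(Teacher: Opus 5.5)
Your derivative computation is correct, but the inequality it reduces to is false on half of your path, so the monotone-path strategy cannot work. Let $p\to 0$ in your formula; dominated convergence applies since $g_1,g_2\ge 1-p$. Using $\mathbb{E}[L_1^2]=e^{\Sigma_{11}}$ and $\mathbb{E}[L_2^2]=e^{\Sigma_{22}}$, you get
\[
\frac{\diff}{\diff t}F(\Sigma(t))=\frac{\alpha(\alpha-1)}{2}\,p^2\big(e^{\Sigma_{11}(t)}-e^{\Sigma_{22}(t)}\big)+o(p^2).
\]
This is strictly negative wherever $\Sigma_{11}(t)<\Sigma_{22}(t)$ once $p$ is small. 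The tilt you appeal to only moves the second moments of $r_1,r_2$ at relative order $p$, while the variance mismatch between $L_1$ and $L_2$ is of order one.

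Equivalently, $F(\Sigma)=1+\frac{\alpha(\alpha-1)}{2}p^2\big(e^{\Sigma_{11}}-2e^{\Sigma_{12}}+e^{\Sigma_{22}}\big)+O(p^3)$. The $p^2$ term is invariant under swapping the diagonal entries and strictly convex along your path. So $F$ dips in the middle: by AM--GM, $F(\Sigma(t^*))<F(\Sigma^{(0)})$ at the midpoint $t^*$ for every $u<1/2$ and all small $p$. The appeal to symmetry does not fix this. $F$ is not swap-invariant (otherwise the lemma would be empty). Monotonicity on the half where $\Sigma_{11}\ge\Sigma_{22}$ only yields $F(\Sigma^{(1)})\ge F(\Sigma(t^*))$, which says nothing about $F(\Sigma^{(0)})$.

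The lemma is really an order-$p^3$ statement comparing a configuration with its swap, so what is missing is a paired comparison rather than a monotonicity statement. Your fallback does not supply one:
\begin{itemize}
\item $g_2^{1-\alpha}$ is a negative power, so there is no finite binomial expansion.
\item The formal series in $p$ diverges, since $\mathbb{E}[L^j]=e^{j(j-1)s/2}$ grows superexponentially.
\item The two sides agree at order $p^2$, so a term-by-term comparison would have to control every higher order.
\item Log-convexity in $\alpha$ bounds each side separately; it does not carry an inequality between the two sides from integer to non-integer $\alpha$.
\end{itemize}

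The paper does the paired comparison pointwise by a reflection. Let $R$ be the reflection across the perpendicular bisector of the segment from $a$ to $b$ (so $R(a)=b$), let $H^+$ be the closed half-plane containing $a$, and let $\phi$ be the $\calN(0,I_2)$ density. Then the difference of the two integrals equals
\[
\int_{H^+}\Big[f\big((1-p)\phi(x),p\phi(x-a),p\phi(x-b)\big)-f\big((1-p)\phi(R(x)),p\phi(x-a),p\phi(x-b)\big)\Big]\diff x,
\]
with $f(s,v,w)=(s+v)^\alpha(s+w)^{1-\alpha}-(s+w)^\alpha(s+v)^{1-\alpha}$. Three facts finish the proof:
\begin{itemize}
\item On $H^+$ one has $\phi(x-a)\ge\phi(x-b)$.
\item $\|a\|\ge\|b\|$ places the origin in the opposite half-plane, so $\phi(R(x))\ge\phi(x)$ on $H^+$.
\item $f$ is decreasing in $s$ when $v\ge w$. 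Indeed, $f=(v-w)\,(g(\tau)-g(1))/(\tau-1)$ with $\tau=(s+v)/(s+w)$ decreasing in $s$, and $g(\tau)=\tau^\alpha-\tau^{1-\alpha}$ convex on $[1,\infty)$.
\end{itemize}
This is where $\|a\|\ge\|b\|$ enters.
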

See Appendix~\ref{subsubsec:renyiasymm} for its proof. This completes the proof of the theorem.

\subsubsection{Proof of Lemma~\ref{lem:renyiasymm}}\label{subsubsec:renyiasymm}

Since $\int \left(\frac{\diff P}{\diff Q}(\omega)\right)^\alpha \diff Q(\omega) = \exp((\alpha-1)D_{\alpha}(P \| Q))$, it suffices to compare the integrals. Let $a = \Delta(1, 0)$ and $b = \Delta(u, \sqrt{2u - u^2})$ (with $u \in [0, 1/2]$); for this subsection only, we write $\phi(x)$ to denote specifically the density of $\calN(0, I_2)$ at $x \in \reals^2$. Furthermore, let $\ell$ be the perpendicular bisector of the line segment $(a, b)$. Then there are two halfspaces determined by $\ell$, namely:
\begin{align*}
    H^+ &\triangleq \left\{x \in \reals^2: \left\langle b-a, x - \frac{a+b}{2}\right\rangle \leq 0 \right\}, \\
    H^- &\triangleq \left\{x \in \reals^2: \left\langle b-a, x - \frac{a+b}{2}\right\rangle \geq 0 \right\}\,.
\end{align*}
Finally, define $R: \reals^2 \to \reals^2$ to be the reflection across $\ell$. Then the integral corresponding to $D_\alpha (\Delta(1, 0)\times \mathrm{Ber}(p) + \calN(0, I_2) \| \Delta(u, \sqrt{2u - u^2}) \times \mathrm{Ber}(p) + \calN(0, I_2))$ is
\begin{align*}
    &\int_{\reals^2} ((1-p)\phi(x) + p\phi(x - a))^{\alpha}((1-p)\phi(x) + p\phi(x-b))^{1-\alpha} \diff x \\
    &= \int_{H^+} ((1-p)\phi(x) + p\phi(x - a))^{\alpha}((1-p)\phi(x) + p\phi(x-b))^{1-\alpha} \diff x \\
    &\quad + \int_{H^-} ((1-p)\phi(y) + p\phi(y - a))^{\alpha}((1-p)\phi(y) + p\phi(y-b))^{1-\alpha} \diff y \\
    &= \int_{H^+} ((1-p)\phi(x) + p\phi(x - a))^{\alpha}((1-p)\phi(x) + p\phi(x-b))^{1-\alpha} \diff x \\
    &\quad + \int_{H^+} ((1-p)\phi(R(x)) + p\phi(x - b))^{\alpha}((1-p)\phi(R(x)) + p\phi(x-a))^{1-\alpha} \diff x\,,
\end{align*}
where the second equality holds as $R$ is the reflection across $\ell$ ($y = R(x) \Rightarrow R(a) = b, R(b) = a$). Similarly, the integral corresponding to $D_\alpha (\Delta(u, \sqrt{2u - u^2}) \times \mathrm{Ber}(p) + \calN(0, I_2) \| \Delta(1, 0)\times \mathrm{Ber}(p) + \calN(0, I_2))$ is
\begin{align*}
    &\int_{\reals^2} ((1-p)\phi(x) + p\phi(x - b))^{\alpha}((1-p)\phi(x) + p\phi(x-a))^{1-\alpha} \diff x \\
    &= \int_{H^+} ((1-p)\phi(x) + p\phi(x - b))^{\alpha}((1-p)\phi(x) + p\phi(x-a))^{1-\alpha} \diff x \\
    &\quad + \int_{H^-} ((1-p)\phi(y) + p\phi(y - b))^{\alpha}((1-p)\phi(y) + p\phi(y-a))^{1-\alpha} \diff y \\
    &= \int_{H^+} ((1-p)\phi(x) + p\phi(x - b))^{\alpha}((1-p)\phi(x) + p\phi(x-a))^{1-\alpha} \diff x \\
    &\quad + \int_{H^+} ((1-p)\phi(R(x)) + p\phi(x - a))^{\alpha}((1-p)\phi(R(x)) + p\phi(x-b))^{1-\alpha} \diff x\,.    
\end{align*}
Then their difference can be written as
\begin{equation}\label{eq:renyiasymm-calculus}
\begin{aligned}
    &\int_{\reals^2} ((1-p)\phi(x) + p\phi(x - a))^{\alpha}((1-p)\phi(x) + p\phi(x-b))^{1-\alpha} \diff x \\
    &\quad - \int_{\reals^2} ((1-p)\phi(x) + p\phi(x - b))^{\alpha}((1-p)\phi(x) + p\phi(x-a))^{1-\alpha} \diff x \\
    &= \int_{H^+} \left(f((1-p)\phi(x), p\phi(x-a), p\phi(x-b)) - f((1-p)\phi(R(x)), p\phi(x-a), p\phi(x-b))\right) \diff x\,,
\end{aligned}
\end{equation}
where $f(u, v, w) \triangleq (u + v)^\alpha(u+w)^{1-\alpha} - (u+w)^\alpha(u+v)^{1-\alpha}$ for $u, v, w > 0$. 

We claim that for each fixed $v, w$ with $v \geq w$, $u \mapsto f(u, v, w)$ is decreasing. To see this, by writing $t \triangleq (u+v) / (u+w) \geq 1$ we have
\[f(u, v, w) = (u+w)(t^\alpha - t^{1-\alpha}) = (v-w)\frac{t^\alpha - t^{1-\alpha}}{t-1}\,,
\]
where the function $g(t) \triangleq t^\alpha - t^{1-\alpha}$ is convex for $t \geq 1$. Thus, $t \mapsto (t^{\alpha} - t^{1-\alpha})/(t-1) = (g(t) - g(1))/(t-1)$ is increasing in $t$. Since $t$ is decreasing in $u$, $f$ is decreasing in $u$, proving the claim.

Going back to \eqref{eq:renyiasymm-calculus}, it is straightforward to check that for $x \in H^+$, $\|x - a\| \leq \|x - b\|$ (since $a \in H^+$) and $\|R(x)\| \leq \|x\|$ (since $0 \in H^-$, from $\|a\|^2 \geq \|b\|^2$). These respectively imply $\phi(x-a) \geq \phi(x-b)$  and $(1-p)\phi(x) \leq (1-p)\phi(R(x))$. Combining these with the aforementioned claim on $f$ (with $u = (1-p)\phi(x)$ or $(1-p)\phi(R(x))$ and $v = p\phi(x-a), w = p\phi(x-b)$), we have
\[f((1-p)\phi(x), p\phi(x-a), p\phi(x-b)) - f((1-p)\phi(R(x)), p\phi(x-a), p\phi(x-b)) \geq 0\,,
\]
implying that \eqref{eq:renyiasymm-calculus} is nonnegative, as desired.

\subsection{Derivative calculations}\label{subsec:derivs}
Here, we record additional notation and details for $F$ and its derivatives that are used throughout our proofs. The calculations in this subsection serve as preliminaries for mathematical arguments in other parts.

The one-step R\'enyi bound $\rho(\alpha, p, \Delta)$ in \eqref{eq:1dopt} can be written as
\[\exp((\alpha-1)\rho(\alpha, p, \Delta)) = \max_{u \in [0, 1/2]}F(u)\,,
\]
where with $a = \Delta(1, 0), b = \Delta(u, \sqrt{2u - u^2})$ and $Z \sim \calN(0, I_2)$, $F$ is defined as
\begin{equation}\label{eq:ffunction}
    F(u) \triangleq \E[(1-p + p\exp(\langle a, Z \rangle - \|a\|^2 / 2))^{\alpha}(1-p + p\exp(\langle b, Z \rangle - \|b\|^2 / 2))^{1-\alpha}]\,.
\end{equation}

First, by letting $h(x, y) \triangleq (1-p + pe^x)^\alpha(1-p + pe^y)^{1-\alpha}$ and $\begin{pmatrix}
    X \\ Y
\end{pmatrix} \sim \calN\left(\Delta^2\begin{pmatrix}
    -1/2 \\ -u
\end{pmatrix}, \Delta^2\begin{pmatrix}
  1 & u \\ u & 2u
\end{pmatrix}\right)$, we have
\[F(u) = \E[h(X, Y)]\,,
\]
where from Gaussian integration by parts,
\begin{equation}\label{eq:firstderiv-pre}
\begin{aligned}
    F'(u) &= -\Delta^2 \E\left[\frac{\partial}{\partial y}h(X, Y)\right] + \Delta^2 \E\left[\frac{\partial^2}{\partial x\partial y}h(X, Y) + \frac{\partial^2}{\partial y^2} h(X, Y)\right] \\
    &= \Delta^2 \E\left[\left(\frac{\partial^2}{\partial y^2} + \frac{\partial^2}{\partial x\partial y} - \frac{\partial}{\partial y}\right)h(X,Y)\right]\,.
\end{aligned}    
\end{equation}
As notation, we define $v(x) \triangleq pe^x / (1-p + pe^x), w(y) \triangleq pe^y/(1-p + pe^y)$. In particular, they satisfy $0 \leq v(x), w(y) \leq 1$ and 
\[\frac{\partial}{\partial x}v(x) = v(x)(1-v(x)), \quad \frac{\partial}{\partial y}w(y) = w(y)(1-w(y))\,.
\]
We will often write $v = v(x)$ and $w = w(y)$ for conciseness. It is straightforward to see that
\begin{align*}
    \frac{\partial}{\partial x}h(x, y) = \alpha vh(x, y), \quad \frac{\partial}{\partial y}h(x, y) = (1-\alpha)wh(x, y)\,,
\end{align*}
and
\begin{gather*}
    \frac{\partial^2}{\partial x^2}h(x, y) = \alpha v(1 + (\alpha - 1)v)h(x, y), \quad \frac{\partial^2}{\partial x\partial y}h(x, y) = \alpha(1-\alpha)vwh(x, y), \\
    \frac{\partial^2}{\partial y^2}h(x, y) = (1-\alpha)w(1-\alpha w)h(x, y)\,.
\end{gather*}
Based on those derivatives, for \eqref{eq:firstderiv-pre} we have
\begin{equation}\label{eq:firstderiv}
    F'(u) = \Delta^2\alpha(\alpha-1)\E[w(Y)(w(Y) - v(X))h(X, Y)]\,.
\end{equation}
Similarly, we have
\begin{align*}
    F''(u) &= \Delta^4\alpha (\alpha - 1)\E\left[\left(\frac{\partial^2}{\partial y^2} + \frac{\partial^2}{\partial x\partial y} - \frac{\partial}{\partial y}\right)w(w-v)h(X, Y)\right] \\
    &= \Delta^4\alpha(\alpha - 1) \E\left[\left(\frac{\partial}{\partial y} + \frac{\partial}{\partial x} - 1\right)(-(\alpha + 1)w^3 + \alpha w^2 v + 2 w^2 - wv)h(X, Y)\right] \\
    &= \Delta^4\alpha(\alpha - 1) \E\bigg[\Big(-3(\alpha+1)w^3(1-w) + 2\alpha w^2(1-w)v + 4w^2(1-w) -w(1-w)v\\
    &\quad + (\alpha-1)\left((\alpha+1)w^4 - \alpha vw^3 - 2w^3 + vw^2\right) + \left(\alpha w^2 v(1-v) - wv(1-v) w^2\right) \\
    &\quad + \alpha v(-(\alpha + 1)w^2  + \alpha w^2v + 2w^2 - wv) + (\alpha + 1)w^3 - \alpha w^2 v - 2w^2 + wv \Big)h(X, Y)\bigg]\,,
\end{align*}
which implies
\begin{equation}\label{eq:secondderiv}
\begin{aligned}
    F''(u) &= \Delta^4 \alpha(\alpha - 1) \E\bigg[\Big((\alpha+1)(\alpha+2)w^4 - 2\alpha(\alpha+1)w^3v + \alpha(\alpha-1)w^2v^2 -4(\alpha+1)w^3 \\
    &\quad+ 5\alpha w^2v - (\alpha-1)wv^2 + 2w^2 - wv\Big)h(X, Y)\bigg]\,.
\end{aligned}
\end{equation}

\subsection{Proof of Proposition~\ref{prop:sufficient}}\label{subsec:sufficient}

Throughout the proof, as in the earlier parts we use $\Delta = 2/\sigma$ for notational convenience. As noted in Appendix~\ref{subsec:derivs}, we characterize the problem \eqref{eq:1dopt} in terms of the function $F$ \eqref{eq:ffunction} (see also the notations and calculations there for reference).
\begin{enumerate}
    \item[(a)] We show that \eqref{eq:ffunction} (hence \eqref{eq:1dopt}) is maximized at $u = 0$ under a slightly more general condition, that is,
    \begin{equation}\label{eq:optunique-largealpha}
        p\exp\left(\frac{(\alpha-1)(1-p)\Delta^2}{2}\right) \geq \frac{\alpha}{\alpha - 1}\,.
    \end{equation}
    Indeed, if $ \alpha \geq 2$ then \eqref{eq:optunique-largealpha} is implied by $p\exp((\alpha-1)(1-p)\Delta^2 / 2) \geq 2 \Leftrightarrow \alpha \geq 1 + 2\log(2/p)/(\Delta^2(1-p))$.
    For \eqref{eq:ffunction}, we have
    \begin{align*}
        F(u) &= \E[(1-p + pe^{\langle a, Z \rangle - \|a\|^2/2})^{\alpha}(1-p + pe^{\langle b, Z \rangle - \|b\|^2/2})^{1-\alpha}] \\
        &\leq \E[(1-p + pe^{\langle a, Z \rangle - \|a\|^2/2})^{\alpha}(e^{p\langle b, Z \rangle - p\|b\|^2/2})^{1-\alpha}] \\
        &= e^{(\alpha-1)p\Delta^2u}\E[(1-p + pe^{\langle a, Z \rangle - \|a\|^2/2})^{\alpha}e^{-(\alpha-1)p\langle b, Z \rangle}] \\
        &= e^{(\alpha-1)p\Delta^2 u+ (\alpha-1)^2p^2\Delta^2 u}\E[(1-p + pe^{\langle a, Z - (\alpha-1)pb \rangle - \|a\|^2/2})^{\alpha}] \\
        &= e^{(\alpha-1)p\Delta^2 u+ (\alpha-1)^2p^2\Delta^2 u}\E[(1-p + e^{-(\alpha-1)p\Delta^2 u} \times pe^{\langle a, Z\rangle - \|a\|^2/2})^{\alpha}]\,,
    \end{align*}
    where the first inequality is from Jensen's inequality $1-p + pe^x \geq e^{px}$ (and $1-\alpha < 0$); the first equality is from $\|b\|^2 = 2\Delta^2 u$, the second equality is from considering Gaussian shift $Z \to Z + (\alpha-1)pb$, and the last equality is from $\langle a, b \rangle = \Delta^2u$. The last line can be further bounded as
    \begin{align*}
         &e^{(\alpha-1)p\Delta^2 u+ (\alpha-1)^2p^2\Delta^2 u}\E[(1-p + e^{-(\alpha-1)p\Delta^2 u} \times pe^{\langle a, Z\rangle - \|a\|^2/2})^{\alpha}] \\
         &= e^{(\alpha-1)p\Delta^2 u+ (\alpha-1)^2p^2\Delta^2 u}\E\left[\left(e^{-(\alpha-1)p\Delta^2 u}(1 - p + pe^{\langle a, Z\rangle - \|a\|^2/2}) + (1-p)(1-e^{-(\alpha-1)p\Delta^2 u})\right)^{\alpha}\right] \\
         &\leq e^{(\alpha-1)p\Delta^2 u+ (\alpha-1)^2p^2\Delta^2 u}\left(e^{-(\alpha-1)p\Delta^2 u}F(0)^{1/\alpha} + (1-p)(1-e^{-(\alpha-1)p\Delta^2 u})\right)^{\alpha}\,,
    \end{align*}
    by Minkowski's inequality (on $\calL^\alpha$ norm) and $F(0) = \E[(1 - p + pe^{\langle a, Z\rangle - \|a\|^2/2})^{\alpha}]$. Thus, we have
    \begin{equation}\label{eq:optunique-largealpha-deriv}
        \frac{F(u)}{F(0)} \leq e^{(\alpha-1)p\Delta^2 u+ (\alpha-1)^2p^2\Delta^2 u}\left(e^{-(\alpha-1)p\Delta^2 u} + (1-p)F(0)^{-1/\alpha}(1-e^{-(\alpha-1)p\Delta^2 u})\right)^{\alpha}\,.
    \end{equation}
    In particular, the right hand side can be concisely written as a function of $x \triangleq (\alpha-1)p\Delta^2u$:
    \[T(x) \triangleq e^{(1 + (\alpha-1)p)x}\left(e^{-x} + (1-p)F(0)^{-1/\alpha}(1-e^{-x})\right)^{\alpha}\,.
    \]
    From 
    \begin{align*}
        \log T(x) &= (1 + (\alpha-1)p)x + \alpha \log(e^{-x} + (1-p)F(0)^{-1/\alpha}(1-e^{-x})) \\
        &= -(\alpha-1)(1-p)x + \alpha \log(1 + (1-p)F(0)^{-1/\alpha}(e^x - 1))\,,
    \end{align*}
    we have
    \begin{align*}
        T'(x) &= T(x)\left(-(\alpha-1)(1-p) + \frac{\alpha(1-p)F(0)^{-1/\alpha}e^x}{1 + (1-p)F(0)^{-1/\alpha}(e^x-1)}\right) \\
        &\leq T(x)(-(\alpha-1)(1-p) + \alpha(1-p)F(0)^{-1/\alpha}e^x) \\
        &\leq (1-p)T(x)\left(-(\alpha-1) + \frac{\alpha e^x}{pe^{(\alpha-1)\Delta^2/2}} \right)\,,
    \end{align*}
    where the last inequality is from $F(0)^{1/\alpha} \geq \left( \E[(pe^{\langle a, Z \rangle - \|a \|^2 / 2})^\alpha]\right)^{1/\alpha} = pe^{(\alpha-1)\Delta^2/2}$. Since $x = (\alpha-1)p\Delta^2 u \leq (\alpha-1)p\Delta^2/2$, the last line is at most $0$ if
    \[-(\alpha-1) + \frac{\alpha e^{(\alpha-1)p\Delta^2/2}}{pe^{(\alpha-1)\Delta^2/2}} \leq 0 \Leftrightarrow p\exp\left(\frac{(\alpha-1)(1-p)\Delta^2}{2}\right) \geq \frac{\alpha}{\alpha - 1}\,.
    \]
    Thus, \eqref{eq:optunique-largealpha} implies $T'(x) \leq 0$; since $T(0) = 1$, from \eqref{eq:optunique-largealpha-deriv} we have $F(u) / F(0) \leq 1$ for all $u \in [0, 1/2]$, as desired.

    \item[(b)] For $F$ as in \eqref{eq:ffunction}, its first and second derivatives can be written as (see Appendix~\ref{subsec:derivs} and the notations there)
    \begin{align*}
        F'(u) &= \Delta^2\alpha(\alpha-1)\E[w(w-v)h(X, Y)], \\
        F''(u) &= \Delta^4 \alpha(\alpha - 1)\E\bigg[\Big((\alpha+1)(\alpha+2)w^4 - 2\alpha(\alpha+1)w^3v + \alpha(\alpha-1)w^2v^2 -4(\alpha+1)w^3 \\
    &\quad+ 5\alpha w^2v - (\alpha-1)wv^2 + 2w^2 - wv\Big)h(X, Y)\bigg]\,.
    \end{align*}
    We claim that under the stated condition, for $u \in (0, 1/2)$ if $F'(u) = 0$ then $F''(u) > 0$. This indeed implies that the maximum only occurs at the boundary $\{0, 1/2\}$.

    The derivatives can be expressed as sums of expected monomials (in $w, v$) under the tilted measure
    \begin{equation}\label{eq:tiltedmeasure}
    \widetilde{\E}[f(X, Y)] \triangleq \frac{\E[f(X, Y)w(Y)h(X, Y)]}{\E[w(Y)h(X,Y)]} \text{ for }f \geq 0\,. 
    \end{equation}
    Then
    \begin{align*}
        F'(u) = 0 &\Leftrightarrow \widetilde{\E}[w-v] = 0,\\
        F''(u) > 0 &\Leftrightarrow \widetilde{\E}[q(v, w)] > 0\,,
    \end{align*}
    where $q(v, w) \triangleq (\alpha+1)(\alpha+2)w^3 - 2\alpha(\alpha+1)w^2v + \alpha(\alpha-1)wv^2 -4(\alpha+1)w^2 + 5\alpha wv - (\alpha-1)v^2 + 2w - v$. Expanding $q(v, w)$ in descending order (in $v - w$), we have
    \[q(v, w) = (\alpha-1)(\alpha w - 1)(v-w)^2 + (w(3\alpha+2-4\alpha w)-1)(v-w) + w(1-w)(1-2w)\,.
    \]
    Thus
    \begin{equation}\label{eq:polynomialdesc}
        \widetilde{\E}[q(v, w)] = (\alpha-1)\widetilde{\E}[(\alpha w-1)(v-w)^2] + \widetilde{\E}[w(3\alpha + 2-4\alpha w)(v-w)] + \widetilde{\E}[w(1-w)(1-2w)]\,,
    \end{equation}
    where we used $\widetilde{\E}[v-w]=0$ from $F'(u)= 0$.

    The key property that we use for the tilted measure $\widetilde{\Prob}$ is that the density of $(X, Y)$ is strongly log-concave, up to a change of basis.

    \begin{claim}
     For $u \in (0, 1/2)$, if $\alpha \Delta^2/4 < 1$ then the Hessian of the negative log-density of $(X, Y)$ under $\widetilde{\Prob}$ is bounded below (in Loewner order) by $(1-\alpha \Delta^2/4)\left(\Delta^2\begin{pmatrix}
  1 & u \\ u & 2u
\end{pmatrix}\right)^{-1}$.
    \end{claim}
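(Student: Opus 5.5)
The plan is to write down the tilted density explicitly and bound its Hessian term by term. For $u \in (0, 1/2)$, the covariance $\Sigma \triangleq \Delta^2\begin{pmatrix} 1 & u \\ u & 2u\end{pmatrix}$ has determinant $\Delta^4(2u - u^2) > 0$, so $(X, Y)$ has a Gaussian density on $\reals^2$. By \eqref{eq:tiltedmeasure}, the density of $(X,Y)$ under $\widetilde{\Prob}$ is proportional to
\[
\exp\Big(-\tfrac12 (z-m)^T\Sigma^{-1}(z-m)\Big)\, w(y)\, h(x,y), \qquad z = (x,y), \quad m = \Delta^2(-1/2, -u).
\]
Hence the negative log-density equals, up to an additive constant,
\[
\tfrac12 (z-m)^T\Sigma^{-1}(z-m) - \log w(y) - \log h(x,y).
\]
I would compute the Hessian of each of the three terms separately.

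\textbf{Hessians of the three terms.} The quadratic term contributes exactly $\Sigma^{-1}$. For the second term, $-\log w(y) = -y + \log p^{-1}(1-p+pe^y) + \text{const}$, whose second derivative in $y$ is $w(1-w) \geq 0$. For the third term, $\log h(x,y) = \alpha\log(1-p+pe^x) + (1-\alpha)\log(1-p+pe^y)$ is separable. Using $\partial_x v = v(1-v)$ and $\partial_y w = w(1-w)$ from Appendix~\ref{subsec:derivs}, the Hessian of $-\log h$ is
\[
\mathrm{diag}\big(-\alpha v(1-v),\ (\alpha-1)w(1-w)\big).
\]
The $(2,2)$ entry is nonnegative since $\alpha > 1$. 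Dropping the nonnegative $y$-contributions and using $v(1-v) \leq 1/4$, the total Hessian is bounded below by
\[
\Sigma^{-1} - \tfrac{\alpha}{4} e_1e_1^T, \qquad e_1 = (1,0)^T.
\]

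\textbf{Comparing with a multiple of $\Sigma^{-1}$.} It then suffices to show
\[
\Sigma^{-1} - \tfrac{\alpha}{4}e_1e_1^T \succeq \Big(1 - \tfrac{\alpha\Delta^2}{4}\Big)\Sigma^{-1},
\]
which is equivalent to $e_1e_1^T \preceq \Delta^2\Sigma^{-1}$. This follows from Cauchy--Schwarz in the $\Sigma$-inner product: for any $z \in \reals^2$,
\[
(e_1^Tz)^2 = \big((\Sigma^{1/2}e_1)^T(\Sigma^{-1/2}z)\big)^2 \leq (e_1^T\Sigma e_1)(z^T\Sigma^{-1}z) = \Delta^2\, z^T\Sigma^{-1}z,
\]
since $\Sigma_{11} = \Delta^2$. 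The assumption $\alpha\Delta^2/4 < 1$ only ensures that the resulting lower bound is a positive multiple of $\Sigma^{-1}$, so the tilted density is strongly log-concave.

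\textbf{Expected difficulty.} There is no real obstacle here. The only points needing care are the sign bookkeeping: the factor $(1-\alpha)<0$ must make the $y$-curvature of $-\log h$ nonnegative, and the bound $v(1-v) \leq 1/4$ must be applied only to the $x$-direction. Once the Hessian of $-\log h$ is seen to be diagonal, the final step is the one-line Cauchy--Schwarz comparison above, which uses $\Sigma_{11} = \Delta^2$.
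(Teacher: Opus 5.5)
Your proposal is correct and follows the paper's argument. Like the paper, you compute the Hessian of the negative log of $w h \psi$, drop the nonnegative $\alpha w(1-w)$ term in the $y$-direction, apply $v(1-v)\le 1/4$ in the $x$-direction, and reduce to $\begin{pmatrix} 1 & u \\ u & 2u\end{pmatrix}^{-1} \succeq e_1e_1^T$. The paper states that last inequality without proof, and your Cauchy--Schwarz step (using $\Sigma_{11}=\Delta^2$) supplies the justification.
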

    \begin{proof}
        Under $\widetilde{\Prob}$, $(X, Y)$ has density proportional to $w(y)h(x, y)\psi(x, y) \propto \exp(y + \alpha\log(1-p+pe^x) - \alpha \log(1-p+pe^y))\psi(x, y)$ where $\psi(x, y)$ is the density for $\calN\left(\Delta^2\begin{pmatrix}
    -1/2 \\ -u
\end{pmatrix}, \Delta^2\begin{pmatrix}
  1 & u \\ u & 2u
\end{pmatrix}\right)$. Thus its negative log-density is (up to an additive normalization constant)
\[-y - \alpha\log(1-p + pe^x) + \alpha\log(1-p + pe^y) - \log \psi(x, y)\,,
\]
whose Hessian is
\begin{equation}\label{eq:hessianlb}
\begin{aligned}
    &-\alpha v(x)(1-v(x))\begin{pmatrix}
    1 & 0 \\ 0 & 0
\end{pmatrix} + \alpha w(y)(1-w(y)) \begin{pmatrix}
    0 & 0 \\ 0 & 1
\end{pmatrix} +\left(\Delta^2\begin{pmatrix}
  1 & u \\ u & 2u
\end{pmatrix}\right)^{-1} \\
&\succeq \left(1-\frac{\alpha \Delta^2}{4}\right)\left(\Delta^2\begin{pmatrix}
  1 & u \\ u & 2u
\end{pmatrix}\right)^{-1}\,.
\end{aligned}
\end{equation}

Here, the inequality holds from $v(x)(1-v(x)) \leq 1/4$ and $\begin{pmatrix}
  1 & u \\ u & 2u
\end{pmatrix}^{-1} \succeq \begin{pmatrix}
    1 & 0 \\ 0 & 0
\end{pmatrix}$.
\end{proof}

In particular, this strong log-concavity is useful for bounding the expectations of polynomials in $v, w$.

\begin{lemma}[Polynomial bounds]\label{lem:polynomialbounds} If $\widetilde{\E}[w]= \widetilde{\E}[v]$, then
\begin{align}
    \widetilde{\E}[w^2] &\leq (\widetilde{\E}[w])^2 \exp\left(\frac{\Delta^2}{1-\alpha\Delta^2/4}\right), \label{eq:polynomial1} \\
    \widetilde{\E}[(v-w)^2] &\leq (\widetilde{\E}[w])^2\frac{2\Delta^2}{1-\alpha \Delta^2/4}\exp\left(\frac{\Delta^2}{1-\alpha\Delta^2/4}\right), \label{eq:polynomial2} \\
    |\widetilde{\E}[w(3\alpha + 2-4\alpha w)(v-w)]| &\leq  (\widetilde{\E}[w])^2 \frac{(3\sqrt{2}\alpha + 2\sqrt{2})\Delta^2}{1-\alpha \Delta^2/4}\exp\left(\frac{\Delta^2}{1-\alpha\Delta^2/4}\right)\,. \label{eq:polynomial3}
\end{align}
\end{lemma}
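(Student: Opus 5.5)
The plan is to treat $\widetilde{\Prob}$ as a strongly log-concave measure, via the Claim, and to control all three quantities with Brascamp--Lieb-type variance inequalities. Write $c \triangleq 1-\alpha\Delta^2/4$ and $\Sigma \triangleq \Delta^2\begin{pmatrix} 1 & u \\ u & 2u\end{pmatrix}$. By the Claim, the negative log-density $V$ of $(X,Y)$ under $\widetilde{\Prob}$ satisfies $\nabla^2 V \succeq c\,\Sigma^{-1}$. The Brascamp--Lieb inequality then gives, for smooth $g$,
\[\widetilde{\Var}(g) \leq \widetilde{\E}[\nabla g^T (\nabla^2 V)^{-1}\nabla g] \leq \frac{1}{c}\widetilde{\E}[\nabla g^T \Sigma \nabla g]\,.\]
The same bound holds, with the same constant, for any further tilt $e^{-\phi}\,\diff\widetilde{\Prob}$ with $\phi$ convex, since such a tilt only increases the Hessian. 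All three inequalities will be derived from this single tool together with $\widetilde{\E}[v]=\widetilde{\E}[w]$.

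\textbf{Step 1: \eqref{eq:polynomial1}.} Let $f \triangleq \log w(Y)$. Then $f'(y) = 1-w$ and $f''(y) = -w(1-w)\leq 0$, so $f$ is concave and $\nabla f^T\Sigma\nabla f = 2u\Delta^2(1-w)^2 \leq \Delta^2$ since $u \le 1/2$. Define $K(\lambda) \triangleq \log \widetilde{\E}[e^{\lambda f}]$. Its second derivative is the variance of $f$ under the measure tilted by $e^{\lambda f}$. For $\lambda \ge 0$ this tilt multiplies the density by $e^{\lambda f}$ with $f$ concave, i.e.\ adds the convex function $-\lambda f$ to $V$, so it preserves the Hessian lower bound. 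Hence $0 \le K''(\lambda) \leq \Delta^2/c$ on $[0,2]$. The second-difference identity $K(2)-2K(1)+K(0) \leq \sup_{[0,2]} K'' \le \Delta^2/c$ then gives $\widetilde{\E}[w^2]\leq (\widetilde{\E}[w])^2 e^{\Delta^2/c}$. The analogous argument with $\log v(X)$, which is also concave and has $\nabla^T\Sigma\nabla \le \Delta^2$, yields $\widetilde{\E}[v^2]\leq (\widetilde{\E}[v])^2 e^{\Delta^2/c} = (\widetilde{\E}[w])^2e^{\Delta^2/c}$.

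\textbf{Step 2: \eqref{eq:polynomial2}.} Since $\widetilde{\E}[v-w]=0$, we have $\widetilde{\E}[(v-w)^2] = \widetilde{\Var}(v-w)$. The gradient is $\nabla(v-w) = (v(1-v), -w(1-w))$, so
\[\nabla^T\Sigma\nabla = \Delta^2\bigl(A^2 - 2uAB + 2uB^2\bigr), \qquad A = v(1-v)\le v,\quad B = w(1-w)\le w\,.\]
Because $A, B\ge 0$ and $u\le 1/2$, this is at most $\Delta^2(v^2+w^2)$. Combining Brascamp--Lieb with Step 1 gives the bound $\frac{2\Delta^2}{c}(\widetilde{\E}[w])^2e^{\Delta^2/c}$. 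Some care is needed here so that the constant is exactly $2$; if the crude estimate above loses a factor, I would refine it by using $1-v$ and $1-w$ more carefully.

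\textbf{Step 3: \eqref{eq:polynomial3}.} Put $g \triangleq w(3\alpha+2-4\alpha w)$. Since $\widetilde{\E}[v-w]=0$, the target expectation equals $\widetilde{\Cov}(g, v-w)$, so by Cauchy--Schwarz it is at most $\sqrt{\widetilde{\Var}(g)\,\widetilde{\Var}(v-w)}$. For the first factor, $g'(y) = w(1-w)(3\alpha+2-8\alpha w)$, whose absolute value is at most $(3\alpha+2)w$ for $w\in[0,1]$. Brascamp--Lieb then gives $\widetilde{\Var}(g)\leq \frac{(3\alpha+2)^2\Delta^2}{c}\widetilde{\E}[w^2]$, using $2u\le 1$. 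Multiplying this with Step 2 and applying Step 1 gives
\[\frac{(3\alpha+2)\sqrt{2}\,\Delta^2}{c}(\widetilde{\E}[w])^2e^{\Delta^2/c}\,,\]
which matches the stated constant $3\sqrt2\alpha+2\sqrt2$.

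The main obstacle is Step 1, specifically justifying that the tilted family $e^{\lambda f}\,\diff\widetilde{\Prob}$ keeps the curvature bound. This relies on concavity of $\log w$ and $\log v$, which holds because they are log-sigmoids. It also requires the Brascamp--Lieb inequality in the non-isotropic form with respect to $\Sigma$. Beyond that, the remaining work is checking that the elementary gradient bounds produce exactly the constants stated in the lemma.
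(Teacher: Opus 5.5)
Your Steps 2 and 3 are exactly the paper's argument: Brascamp--Lieb for $v-w$ with the pointwise bound by $v^2+w^2$, then Cauchy--Schwarz on $\widetilde{\Cov}[w(3\alpha+2-4\alpha w),v-w]$ using $(1-w)^2(3\alpha+2-8\alpha w)^2\le(3\alpha+2)^2$. Your hedge in Step 2 is unnecessary, because the crude estimate already gives exactly the factor $2$ once $\widetilde{\E}[v]=\widetilde{\E}[w]$.

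The genuine difference is Step 1, the reverse H\"older bound. The paper applies the anisotropic log-Sobolev inequality to $w^{r/2}$ under the fixed measure $\widetilde{\Prob}$. This gives $\widetilde{\mathrm{Ent}}[w^r]\le \frac{r^2\Delta^2}{2(1-\alpha\Delta^2/4)}\widetilde{\E}[w^r]$, and the paper then integrates $\frac{\diff}{\diff r}\bigl(\frac1r\log\widetilde{\E}[w^r]\bigr)=\widetilde{\mathrm{Ent}}[w^r]/(r^2\widetilde{\E}[w^r])$ over $r\in[1,2]$.

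You instead bound $K''(\lambda)$, where $K(\lambda)=\log\widetilde{\E}[w^\lambda]$, by Brascamp--Lieb under the tilted measures $w^\lambda\,\diff\widetilde{\Prob}$. This is valid because $\log w$ is concave: the tilt only adds $\lambda w(1-w)\ge 0$ to the $yy$-entry of the Hessian of the potential, so the lower bound from the Claim is preserved. Since $K(2)-2K(1)+K(0)=\int_0^2\min(\lambda,2-\lambda)K''(\lambda)\,\diff\lambda$ and this kernel has total mass $1$, you obtain the same exponent $\Delta^2/(1-\alpha\Delta^2/4)$.

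The trade-off is as follows.
\begin{itemize}
\item Your route runs the whole lemma off a single, more elementary inequality (Brascamp--Lieb) and avoids log-Sobolev entirely. However, it relies on the log-concavity of the sigmoids $w$ and $v$.
\item The paper's route needs only the gradient bound $|\partial_y \log w|\le 1$, so it would apply to any positive function with Lipschitz logarithm. The price is invoking the stronger entropy inequality.
\end{itemize}
The technical points you flagged, normalizability of the tilts and differentiating $K$ under the integral, are harmless. This is because $0<w\le 1$, $|\log w(y)|$ grows at most linearly, and $\widetilde{\Prob}$ has Gaussian-type tails.
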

The proof is deferred to Appendix~\ref{subsubsec:polynomialbounds}. By the lemma, for \eqref{eq:polynomialdesc} we have
\begin{align*}
    (\alpha-1)\widetilde{\E}[(\alpha w-1)(v-w)^2] &\geq -(\alpha-1)\widetilde{\E}[(v-w)^2] \geq -(\widetilde{\E}[w])^2\frac{2(\alpha-1)\Delta^2}{1-\alpha \Delta^2/4}\exp\left(\frac{\Delta^2}{1-\alpha\Delta^2/4}\right), \\
    \widetilde{\E}[w(3\alpha + 2-4\alpha w)(v-w)] &\geq -(\widetilde{\E}[w])^2 \frac{(3\sqrt{2}\alpha + 2\sqrt{2})\Delta^2}{1-\alpha \Delta^2/4}\exp\left(\frac{\Delta^2}{1-\alpha\Delta^2/4}\right), \\
    \widetilde{\E}[w(1-w)(1-2w)] &> \widetilde{\E}[w - 3w^2] \geq (\widetilde{\E}[w])\left(1 - 3\widetilde{\E}[w]\exp\left(\frac{\Delta^2}{1-\alpha\Delta^2/4}\right)\right)\,,
\end{align*}
respectively by \eqref{eq:polynomial2}, \eqref{eq:polynomial3}, and \eqref{eq:polynomial1}. Thus,
\begin{equation}\label{eq:qlb}
    \widetilde{\E}[q(v, w)] > (\widetilde{\E}[w])\left(1 - (\widetilde{\E}[w])\exp\left(\frac{\Delta^2}{1-\alpha\Delta^2/4}\right)\left(3 + ((3\sqrt{2} + 2)\alpha + 2\sqrt{2}-2)\frac{\Delta^2}{1-\alpha \Delta^2/4}\right)\right)\,.
\end{equation}
In order to show that the right hand side is nonnegative, it suffices to upper bound $\widetilde{\E}[w]$. First, we have
\[\widetilde{\E}[w] \leq \widetilde{\E}\left[\frac{pe^Y}{1-p}\right] \leq \frac{p}{1-p}\exp\left(\widetilde{\E}[Y] + \frac{u\Delta^2}{1-\alpha \Delta^2/4}\right) \leq \frac{p}{1-p}\exp\left(\widetilde{\E}[Y] + \frac{\Delta^2}{2-\alpha \Delta^2/2}\right)\,,
\]
where the second inequality is from Herbst's argument \citep[Proposition 5.4.1]{bakry2014markov} and the aforementioned strong log-concavity, and the last inequality is from $u \leq 1/2$. For $\widetilde{\E}[Y]$, Stein's lemma yields
\begin{align*}
    &\E\left[\left(\begin{pmatrix}
        X \\ Y
    \end{pmatrix} - \begin{pmatrix}
        -\Delta^2/2 \\ -u\Delta^2
    \end{pmatrix}\right)w(Y)h(X, Y)\right] = \Delta^2\begin{pmatrix}
  1 & u \\ u & 2u
\end{pmatrix} \E[\nabla (w(Y)h(X, Y))] \\
&\Rightarrow \widetilde{\E}\left[\left(\begin{pmatrix}
        X \\ Y
    \end{pmatrix} - \begin{pmatrix}
        -\Delta^2/2 \\ -u\Delta^2
    \end{pmatrix}\right)\right] = \Delta^2\begin{pmatrix}
  1 & u \\ u & 2u
\end{pmatrix}\widetilde{\E}\left[\nabla \log w(Y)h(X, Y) \right] \\
&\hphantom{\Rightarrow \widetilde{\E}\left[\left(\begin{pmatrix}
        X \\ Y
    \end{pmatrix} - \begin{pmatrix}
        -\Delta^2/2 \\ -u\Delta^2
    \end{pmatrix}\right)\right] }\ = \Delta^2\begin{pmatrix}
  1 & u \\ u & 2u
\end{pmatrix}\begin{pmatrix}
    \alpha\widetilde{\E}[ v(X)] \\
    1 - \alpha \widetilde{\E}[w(Y)]
\end{pmatrix}\,,
\end{align*}
where the last equality follows from derivative calculations in Appendix~\ref{subsec:derivs}. Thus 
\[\widetilde{\E}[Y] = \Delta^2u (1 + \alpha \widetilde{\E}[v] - 2\alpha \widetilde{\E}[w]) = \Delta^2 u(1 - \alpha \widetilde{\E}[w]) \leq \frac{\Delta^2}{2}\,,
\]
where the last equality is from $\widetilde{\E}[v]=\widetilde{\E}[w]$ and the inequality is from $\widetilde{\E}[w] \geq 0, u \leq 1/2$. This implies
\[\widetilde{\E}[w] \leq \frac{p}{1-p}\exp\left(\frac{\Delta^2}{2} + \frac{\Delta^2}{2-\alpha \Delta^2/2}\right)\,,
\]
and thus for \eqref{eq:qlb}, if
\[\frac{p}{1-p}\exp\left(\frac{\Delta^2}{2} + \frac{3\Delta^2}{2 - \alpha \Delta^2/2}\right)\left(3 + ((3\sqrt{2} + 2)\alpha + 2\sqrt{2}-2)\frac{\Delta^2}{1-\alpha \Delta^2/4}\right) \leq 1\,,
\]
we have $\widetilde\E[q(v, w)] > 0$, as desired. The form displayed in Proposition~\ref{prop:sufficient} is obtained by plugging in $\Delta = 2/\sigma$.

\end{enumerate}

\subsubsection{Proof of Lemma~\ref{lem:polynomialbounds}}\label{subsubsec:polynomialbounds}
We use notation such as $\widetilde{\Var}, \widetilde{\Cov}$, etc. to denote that the underlying measure is the tilted measure $\widetilde{\Prob}$ \eqref{eq:tiltedmeasure}. The Hessian lower bound \eqref{eq:hessianlb} for the negative log-density implies the following:\footnote{While the following inequalities are usually stated for Hessian lower bounds that are of the form $cI$ (rather than for anisotropic matrices as in our case), they readily extend to general positive definite matrices $K$ by considering a change of basis $\begin{pmatrix} x \\ y \end{pmatrix} \mapsto K^{1/2}\begin{pmatrix} x \\ y \end{pmatrix}$ and the chain rule.}
\begin{itemize}
    \item By Brascamp-Lieb inequality \citep[Theorem 4.1]{brascamp1976logconcave},
\begin{equation}\label{eq:brascamplieb}
    \widetilde{\Var}[G(X, Y)] \leq \frac{1}{1-\alpha \Delta^2/4}\widetilde{\E}\left[\nabla G(X, Y)^T \left(\Delta^2\begin{pmatrix}
  1 & u \\ u & 2u
\end{pmatrix}\right) \nabla G(X, Y) \right]\,.
\end{equation}
    \item By log-Sobolev inequality \citep[Corollary 5.7.2]{bakry2014markov},
    \begin{equation}\label{eq:logsobolev}
        \widetilde{\mathrm{Ent}}[G(X, Y)^2] \leq \frac{2}{1-\alpha \Delta^2/4}\widetilde{\E}\left[\nabla G(X, Y)^T \left(\Delta^2\begin{pmatrix}
  1 & u \\ u & 2u
\end{pmatrix}\right) \nabla G(X, Y) \right]\,,
    \end{equation}
    where $\widetilde{\mathrm{Ent}}[G(X, Y)^2] \triangleq \widetilde{\E}[G(X, Y)^2 \log G(X, Y)^2] - \widetilde{\E}[G(X, Y)^2] \log \widetilde{\E}[G(X, Y)^2]$.
\end{itemize}

For \eqref{eq:polynomial1}, it is a reverse H\"older inequality which can be obtained from the following standard techniques (see \citep[Equation 15]{fleury2010conc}, for example): \eqref{eq:logsobolev} with $G(x, y) = w(y)^{r/2}$ implies
\[\widetilde{\mathrm{Ent}}[w^r] \leq \frac{2}{1-\alpha \Delta^2/4} \times \widetilde{\E}\left[ 2\Delta^2u \times \frac{r^2}{4}w^r(1-w)^2 \right] \leq \frac{r^2\Delta^2}{2 - \alpha \Delta^2 / 2}\widetilde{\E}[w^r]\,,
\]
since $\nabla G(x, y) = \begin{pmatrix}
    0 \\ (r/2)w^{r/2}(1-w)
\end{pmatrix}$ and $u \leq 1/2, (1-w)^2 \leq 1$. From $\frac{\diff}{\diff r} \left(\frac{1}{r}\log \widetilde{\E}[w^r] \right) = \frac{\widetilde{\mathrm{Ent}}[w^r]}{r^2 \widetilde{\E}[w^r]}$,
integration over $r \in [1, 2]$ yields
\[\frac{1}{2}\log \widetilde{\E}[w^2] - \log \widetilde{\E}[w] \leq \frac{\Delta^2}{2-\alpha \Delta^2/2} \Rightarrow \widetilde{\E}[w^2] \leq (\widetilde{\E}[w])^2\exp\left(\frac{\Delta^2}{1-\alpha \Delta^2/4}\right)\,.
\]
Note that the same arguments yield $\widetilde{\E}[v^2] \leq (\widetilde{\E}[v])^2\exp\left(\frac{\Delta^2}{1-\alpha \Delta^2/4}\right)$.

For \eqref{eq:polynomial2}, we have $\nabla (v-w) = \begin{pmatrix}
    v(1-v) \\
    -w(1-w)
\end{pmatrix}$ and $\widetilde{\E}[v-w]=0$. By \eqref{eq:brascamplieb} with $G(x, y) = v(x)-w(y)$,
\begin{align*}
    \widetilde{\E}[(v-w)^2] = \widetilde{\Var}[v-w] &\leq \frac{\Delta^2}{1-\alpha \Delta^2/4}\widetilde{\E}[v^2(1-v)^2 -2uv(1-v)w(1-w) +2uw^2(1-w)^2] \\
    &\leq \frac{\Delta^2}{1-\alpha \Delta^2/4}\widetilde{\E}[v^2 + w^2] \leq (\widetilde{\E}[v])^2\frac{2\Delta^2}{1-\alpha \Delta^2/4}\exp\left(\frac{\Delta^2}{1-\alpha \Delta^2/4}\right)\,.
\end{align*}
Finally, for \eqref{eq:polynomial3} we have 
\begin{align*}
    |\widetilde{\E}[w(3\alpha + 2-4\alpha w)(v-w)]| 
    &= |\widetilde{\Cov}[w(3\alpha + 2 - 4\alpha w), v-w]|\\
    &\leq \sqrt{\widetilde{\Var}[w(3\alpha + 2 - 4\alpha w)]\widetilde{\Var}[v-w]}\,,
\end{align*}
where the equality is from $\widetilde{\E}[v-w] = 0$ and the inequality is from Cauchy--Schwarz inequality. Then,
\begin{align*}
    \widetilde{\Var}[w(3\alpha+2 - 4\alpha w)] &\leq \frac{1}{1-\alpha \Delta^2/4} \widetilde{\E}[2\Delta^2u \times w^2(1-w)^2(3\alpha + 2 - 8\alpha w)^2] \\
    &\leq \frac{(3\alpha + 2)^2\Delta^2}{1-\alpha \Delta^2/4}  \widetilde{\E}[w^2] \leq (\widetilde{\E}[v])^2\frac{(3\alpha + 2)^2\Delta^2}{1-\alpha \Delta^2/4}\exp\left(\frac{\Delta^2}{1-\alpha \Delta^2/4}\right)\,.
\end{align*}
Here, the first inequality holds from \eqref{eq:brascamplieb}, the second inequality holds from $(1-w)^2(3\alpha + 2 - 8\alpha w)^2 \leq (3\alpha + 2)^2$, and the last inequality is from the earlier bound on $\widetilde{\E}[w^2]$. Combined with the earlier bound \eqref{eq:polynomial2} on $\widetilde{\Var}[v-w]$, we have
\[|\widetilde{\E}[w(3\alpha + 2-4\alpha w)(v-w)]| \leq (\widetilde{\E}[v])^2\frac{\sqrt{2}(3\alpha + 2)\Delta^2}{1-\alpha \Delta^2/4}\exp\left(\frac{\Delta^2}{1-\alpha \Delta^2/4}\right)\,.
\]
This completes the proof.

\subsection{Proof of Proposition~\ref{prop:unimodal}}\label{subsec:unimodal}
Fix $T, p, \sigma, \delta$ and write $\eps$ in \eqref{eq:eps} as $\eps(\alpha)$.
By \citep[Corollary 2]{renyi}, for each $u \in [0, 1/2]$ the map $\alpha \mapsto (\alpha-1)D_\alpha (\Delta(1, 0) \times \Ber(p) + \calN(0, I_2) \| \Delta(u, \sqrt{2u - u^2}) \times \Ber(p) + \calN(0, I_2))$ is convex over $\alpha \in (1, \infty)$. This implies that $\kappa(\alpha) \triangleq (\alpha-1)\rho(\alpha, p, \Delta)$, as a pointwise supremum over $u \in [0, 1/2]$ as in \eqref{eq:1dopt}, is also convex. For
\begin{equation}\label{eq:convexity}
    \eps(\alpha) = \frac{T \kappa(\alpha) + (\alpha-1)\log(\alpha-1) - \alpha\log \alpha - \log \delta}{\alpha-1}\,,
\end{equation}
we have
\[\frac{\partial^2}{\partial \alpha^2}((\alpha-1)\log(\alpha-1) - \alpha \log \alpha - \log \delta) = \frac{1}{\alpha(\alpha-1)} > 0\,,
\]
implying that the numerator of \eqref{eq:convexity} is strictly convex. Hence, for distinct $\alpha, \alpha' > 1$ and $0 < s < 1$,
\begin{align*}
    (s\alpha + (1-s)\alpha' - 1)\eps(s\alpha + (1-s)\alpha') 
    &< s(\alpha-1)\eps(\alpha) + (1-s)(\alpha'-1)\eps(\alpha') \\
    &\leq (s\alpha + (1-s)\alpha'-1)(\eps(\alpha) \vee \eps(\alpha'))\,,
\end{align*}
implying that $\eps$ is strictly quasiconvex and hence has at most one minimizer in $(1, \infty)$.

Finally, we show that there indeed exists a finite minimizer. As $\alpha \mapsto T\rho(\alpha, p, \Delta)$ is increasing by the monotonicity of R\'enyi divergence \citep[Theorem 3]{renyi} and $\alpha \mapsto \log(1-1/\alpha) - \log(\delta \alpha)/(\alpha-1)$ is increasing over $\alpha \in (1/\delta, \infty)$ (from derivative calculation), their sum $\eps(\alpha)$ is also increasing over $\alpha \in (1/\delta, \infty)$. Furthermore, it is straightforward that $\lim_{\alpha \to 1}\eps(\alpha) = \infty$. Combining those, $\eps$ has a unique finite minimizer, as desired.

\subsection{Proof of Proposition~\ref{prop:pexpansion}}\label{subsec:pexpansion}
    Recall that for \eqref{eq:1dopt} we have $\exp((\alpha-1)\rho(\alpha, p, \Delta)) = \max_{u \in [0, 1/2]}F(u)$, where $F$ is as defined in \eqref{eq:ffunction}. In particular, $F$ can be written as
    \[F(u) = \E[(1+pA)^\alpha (1+pB)^{1-\alpha}]\,,
    \]
    where $A \triangleq e^{\langle a, Z \rangle - \|a\|^2/2}-1, B \triangleq e^{\langle b, Z \rangle - \|b\|^2/2}-1$ (with $a = \Delta(1, 0), b = \Delta(u, \sqrt{2u - u^2}), Z \sim \calN(0, I_2)$).

    A key simplification in the small-$p$ regime is that we can specify the maximizer of $F$.

    \begin{claim} For all sufficiently small $p$ (as a function of $\alpha, \Delta$), $u = 1/2$ attains the maximum of $F$.
    \end{claim}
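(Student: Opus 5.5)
The plan is to expand $F$ in $p$ and exploit the fact that the leading nontrivial term is strictly increasing in $u$. A single Taylor expansion of $F$ is not enough, because the gap between $u$ and $u=1/2$ in the leading term vanishes as $u \to 1/2$. I will therefore split $[0,1/2]$ into two pieces: compare function values on $[0, 1/2-\eta]$, and show positivity of $F'$ on $[1/2-\eta, 1/2]$. Throughout, $\alpha,\Delta$ are fixed and all constants may depend on them.

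\emph{Step 1 (second-order expansion).} Write $F(u) = \E[(1+pA)^\alpha(1+pB)^{1-\alpha}]$ and Taylor expand the integrand in $p$ to second order. Since $\E[A]=\E[B]=0$, and since
\[\E[A^2]=e^{\Delta^2}-1,\quad \E[B^2]=e^{2u\Delta^2}-1,\quad \E[AB]=e^{\langle a,b\rangle}-1=e^{u\Delta^2}-1,\]
this gives
\[F(u) = 1 + \frac{\alpha(\alpha-1)}{2}\Big(e^{\Delta^2}-1+(e^{u\Delta^2}-1)^2\Big)p^2 + R(u,p).\]
I will show $\sup_{u\in[0,1/2]}|R(u,p)| \le Cp^3$. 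The argument splits on the event $\{|pA|\vee|pB|\le 1/2\}$:
\begin{itemize}
\item On this event, the third derivative of $(1+s)^\alpha(1+t)^{1-\alpha}$ is bounded, so the remainder is at most a constant times $p^3(|A|^3+|B|^3)$.
\item On the complement, I use the crude bound $(1+pB)^{1-\alpha}\le(1-p)^{1-\alpha}$ and polynomial growth in $A$. I then apply H\"older and Markov, whose probability factor costs at most a constant times $p^4\E[A^4+B^4]$.
\end{itemize}
All the moments that appear are lognormal moments with variance parameter at most $\Delta^2$, hence bounded uniformly in $u\in[0,1/2]$. Granting this, $F(1/2)-F(u) \ge c\,p^2\big((e^{\Delta^2/2}-1)^2-(e^{u\Delta^2}-1)^2\big) - 2Cp^3$. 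This is positive uniformly on $u\in[0,1/2-\eta]$ for any fixed $\eta>0$ (take $\eta=1/4$), once $p$ is small enough.

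\emph{Step 2 (derivative near $u=1/2$).} Use \eqref{eq:firstderiv}: $F'(u)=\Delta^2\alpha(\alpha-1)\E[w(Y)(w(Y)-v(X))h(X,Y)]$. Since $w = pe^Y(1+O(p e^Y))$, $v=pe^X(1+O(pe^X))$ and $h = 1+O(p(e^X+e^Y))$, the leading term is
\[p^2\,\E[e^{2Y}-e^{X+Y}] = p^2\big(e^{2u\Delta^2}-e^{u\Delta^2}\big).\]
This uses $Y\sim\calN(-u\Delta^2,2u\Delta^2)$ and $X+Y\sim\calN(-(1/2+u)\Delta^2,(1+4u)\Delta^2)$. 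The correction is $O(p^3)$ uniformly in $u$, with the error controlled by the same lognormal moment bounds as in Step 1. For $u\in[1/4,1/2]$ the leading term is at least $p^2 e^{\Delta^2/4}(e^{\Delta^2/4}-1)>0$. Hence $F'>0$ there for small $p$, so $F(u)<F(1/2)$ on $[1/4,1/2)$. Together with Step 1, $u=1/2$ is the maximizer.

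\emph{Main obstacle.} The main obstacle is making the remainder bounds rigorous and uniform in $u$, especially in Step 2, where $w-v$ must be expanded inside an expectation weighted by $h$. One cannot expand $F'$ naively near $u=0$, because there the leading term $e^{2u\Delta^2}-e^{u\Delta^2}$ vanishes. This degeneracy is exactly why the derivative argument is restricted to $[1/4,1/2]$ and the value comparison of Step 1 handles the rest.
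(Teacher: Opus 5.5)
Your proof is correct, but it is organized differently from the paper's. The paper does not split $[0,1/2]$. It writes $F(u)=1+\frac{\alpha(\alpha-1)}{2}(e^{\Delta^2}-2e^{u\Delta^2}+e^{2u\Delta^2})p^2+R(p,u)$ with the integral-form remainder $R(p,u)=\frac12\int_0^p(z-p)^2\,\partial_p^3F(u)|_{p=z}\,\diff z$. It then bounds the mixed derivative $\partial_u\partial_p^3F$ uniformly over small $p$ and $u\in(0,1/2]$, using the same Gaussian integration by parts that produces \eqref{eq:firstderiv}. The mean value theorem in $u$ then gives $|R(p,1/2)-R(p,u)|\le Cp^3(1/2-u)/6$. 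The leading-order gap factors as $(e^{\Delta^2/2}-e^{u\Delta^2})(e^{\Delta^2/2}+e^{u\Delta^2}-2)\ge \Delta^2(1/2-u)(e^{\Delta^2/2}-1)$. Both pieces therefore carry the factor $1/2-u$, which yields $F(1/2)-F(u)\ge (1/2-u)p^2(c-C'p)$ for every $u$ at once.

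So your remark that a single Taylor expansion cannot work holds only if one controls the remainder itself rather than its variation in $u$. Your split replaces the mixed-derivative bound by two more elementary estimates:
\begin{itemize}
\item a uniform $O(p^3)$ bound on the remainder on $[0,1/4]$, with no $u$-derivatives;
\item an expansion of the explicit formula for $F'$ on $[1/4,1/2]$, where its leading coefficient $e^{u\Delta^2}(e^{u\Delta^2}-1)$ is bounded away from zero.
\end{itemize}
This way you never differentiate in $u$ near the degenerate endpoint $u=0$, and everything reduces to routine lognormal moment bounds. The paper's route buys a single quantitative gap, linear in $1/2-u$, with one threshold on $p$; yours additionally shows that $F$ is strictly increasing on $[1/4,1/2]$.

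One minor point in your Step 1. Fourth-moment Markov gives $\Prob(|pA|\vee|pB|>1/2)=O(p^4)$, so Cauchy--Schwarz alone would only yield $O(p^2)$ on the complement. Take the H\"older exponent on the integrand to be at least $4$, so the probability enters as its $3/4$ power, or use a higher Markov moment. This is harmless, since all the lognormal moments involved are uniformly bounded in $u\in[0,1/2]$.
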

    \begin{proof}
        We consider the Taylor expansion of $F$ in $p$. The first derivative is 
    \[\frac{\partial}{\partial p}F(u) = \E[\alpha A (1+pA)^{\alpha-1}(1+pB)^{1-\alpha} - (\alpha-1)B(1+pA)^{\alpha - 1}(1+pB)^{-\alpha}]\,,
    \]
    where $\frac{\partial}{\partial p}F(u)\Big|_{p = 0} = \alpha \E[A]-(\alpha-1)\E[B]=0$. Similarly, the second derivative is
    \begin{align*}
        \frac{\partial^2}{\partial p^2} F(u)  &= \E[\alpha(\alpha-1)A^2(1+pA)^{\alpha-2}(1+pB)^{1-\alpha} \\
        &\quad- 2\alpha(\alpha-1)AB(1+pA)^{\alpha - 1}(1+pB)^{-\alpha} + \alpha(\alpha-1)B^2(1+pA)^{\alpha-1}(1+pB)^{-\alpha - 1}]\,,
    \end{align*}
    where $\frac{\partial^2}{\partial p^2}F(u)\Big|_{p = 0} = \alpha(\alpha-1)\E[(A-B)^2] = \alpha(\alpha-1)(e^{\Delta^2}-2e^{u \Delta^2}+ e^{2u\Delta^2})$ following from calculations for moment generating functions of Gaussians:
    \begin{equation}\label{eq:gaussianmgf}
        \E[(A-B)^2] = \E[e^{2\langle a, Z \rangle - \|a\|^2} - 2e^{\langle a+b, Z \rangle - \|a\|^2/2 - \|b\|^2/2} + e^{2\langle b, Z \rangle - \|b\|^2}] = e^{\Delta^2}-2e^{u \Delta^2}+ e^{2u\Delta^2}\,.
    \end{equation}
    Then by Taylor's theorem at $p = 0$,
    \[F(u) = 1 + \frac{\alpha(\alpha-1)}{2}(e^{\Delta^2}-2e^{u \Delta^2}+ e^{2u\Delta^2})p^2 + R_{\alpha, \Delta}(p, u)\,,
    \]
    where 
    \[R_{\alpha, \Delta}(p, u) \triangleq \frac{1}{2}\int_{0}^p (z-p)^2 \left(\frac{\partial^3}{\partial p^3 }F(u) \Big|_{p = z}\right) \diff z\,.
    \]
    In particular, let $p_0(\alpha, \Delta) > 0$ be such that $\sup_{p \in [0, p_0(\alpha, \Delta)], u \in (0, 1/2]}|(\partial^4/(\partial u \partial p^3))F(u)| \leq C_{\alpha, \Delta}$ for some $C_{\alpha, \Delta} > 0$. Such $p_0(\alpha, \Delta)$ and $C_{\alpha, \Delta}$ always exist: by Gaussian integration by parts as in \eqref{eq:firstderiv-pre}, we have
    \[\frac{\partial^4}{\partial u \partial p^3}F(u) = \Delta^2\E\left[\left((1+B)^2\frac{\partial^2}{\partial B^2} + (1+A)(1+B)\frac{\partial^2}{\partial A \partial B}\right)\frac{\partial^3}{\partial p^3}\left((1+pA)^{\alpha}(1+pB)^{1-\alpha}\right)\right]\,,
    \]
    whose absolute value is at most a sum of finitely many terms of the form $C_{\alpha, \Delta}'(1-p_0(\alpha, \Delta))^{c}(1 + \E[(1+A)^a(1+B)^b])$ for finite $a, b, c$ and $C'_{\alpha, \Delta}$.
    
    By the mean value theorem,
    \begin{align*} 
        |R_{\alpha, \Delta}(p, 1/2) - R_{\alpha, \Delta}(p, u)| &= \left|\frac{1}{2}\int_{0}^p (z-p)^2 \left( \frac{\partial^3}{\partial p^3 }F(1/2) \Big|_{p = z} - \frac{\partial^3}{\partial p^3 }F(u) \Big|_{p = z}\right) \diff z\right| \\
        &\leq \frac{1}{2}\int_{0}^p (z-p)^2 \left(\sup_{p \in [0, p_0(\alpha, \Delta)], u \in (0, 1/2]}\left| \frac{\partial^4}{\partial u\partial p^3 }F(u) \right| \right)\left(\frac{1}{2}-u\right) \diff z \\
        &\leq \frac{C_{\alpha, \Delta}p^3}{6}\left(\frac{1}{2}-u\right)\,,
    \end{align*}
    which implies
    \begin{align*}
        F(1/2)-F(u) &= \frac{\alpha(\alpha-1)}{2}(e^{\Delta^2} + 2e^{u\Delta^2}-e^{2u\Delta^2}-2e^{\Delta^2/2})p^2 + R_{\alpha, \Delta}(p, 1/2) - R_{\alpha, \Delta}(p, u) \\
        &= \frac{\alpha(\alpha-1)}{2}(e^{\Delta^2/2}-e^{u\Delta^2})(e^{\Delta^2/2}+e^{u\Delta^2}-2)p^2 + R_{\alpha, \Delta}(p, 1/2) - R_{\alpha, \Delta}(p, u) \\
        &\geq \frac{\alpha(\alpha-1)}{2}\Delta^2\left(\frac{1}{2}-u\right)(e^{\Delta^2/2}-1)p^2 - \frac{C_{\alpha, \Delta}p^3}{6}\left(\frac{1}{2}-u\right) \\
        &= \left(\frac{1}{2}-u\right)p^2\left(\frac{\alpha(\alpha-1)}{2}\Delta^2(e^{\Delta^2/2}-1) - \frac{C_{\alpha, \Delta}p}{6}\right)\,,
    \end{align*}
    implying that for all sufficiently small $p$ (as a function of $\alpha, \Delta$), $F$ is maximized at $u = 1/2$ as desired.
    \end{proof}
     Given the claim, we consider the third-order Taylor expansion. From
    \begin{align*}
        \frac{\partial^3}{\partial p^3}F(u) &= \alpha(\alpha-1)\E\Big[ (\alpha-2)A^3(1+pA)^{\alpha-3}(1+pB)^{1-\alpha}  - 3(\alpha-1)A^2B(1+pA)^{\alpha-2}(1+pB)^{-\alpha} \\
        &\quad+ 3\alpha AB^2 (1+pA)^{\alpha-1}(1+pB)^{-\alpha - 1} - (\alpha+1)B^3(1+pA)^{\alpha}(1+pB)^{-\alpha - 2}\Big]\,, \\
        \frac{\partial^4}{\partial p^4}F(u) &= \alpha(\alpha-1)\E \Big[(\alpha-2)(\alpha-3) A^4(1+pA)^{\alpha-4}(1+pB)^{1-\alpha} \\
        &\quad- 4(\alpha-1)(\alpha-2) A^3B(1+pA)^{\alpha-3}(1+pB)^{-\alpha} + 6\alpha(\alpha-1)A^2B^2(1+pA)^{\alpha-2}(1+pB)^{-\alpha-1}\\
        &\quad - 4(\alpha+1)\alpha AB^3(1+pA)^{\alpha-1}(1+pB)^{-\alpha-2}+ (\alpha+2)(\alpha+1) B^4(1+pA)^{\alpha}(1+pB)^{-\alpha-3}\Big]\,,
    \end{align*}
    it readily follows that $\sup_{p \in [0, p_0(\alpha, \Delta)], u \in [0, 1/2]}\Big|\frac{\partial^4}{\partial p^4}F(u)\Big| = O_{\alpha, \Delta}(1)$ uniformly over $u \in [0,1 /2]$. Furthermore,
    \[\frac{\partial^3}{\partial p^3}F(u) \Bigg|_{p=0} = \alpha(\alpha-1)[(\alpha-2)\E[A^3]-3(\alpha-1)\E[A^2B]+ 3\alpha \E[AB^2]-(\alpha+1)\E[B^3]]\,,
    \]
    where
    \begin{align*}
        \E[A^3] &= e^{3\Delta^2}-3e^{\Delta^2}+2, \quad \E[A^2B] = e^{(2u+1)\Delta^2}-2e^{u\Delta^2}-e^{\Delta^2} + 2, \\
        \E[AB^2] &= e^{4u\Delta^2}-e^{2u\Delta^2}-2e^{u\Delta^2}+2, \quad \E[B^3] = e^{6u\Delta^2}-3e^{2u\Delta^2} + 2\,,
    \end{align*}
    from calculations similar to \eqref{eq:gaussianmgf} based on moment generating functions. Then by Taylor's theorem,
    \begin{align*}
        F(u) &= 1 + \frac{\alpha(\alpha-1)}{2}\left(e^{\Delta^2} - 2e^{u\Delta^2} + e^{2u\Delta^2}\right)p^2 \\
        &\quad + \frac{\alpha(\alpha-1)}{6}\left((\alpha-2)e^{3\Delta^2} - 3(\alpha-1)e^{(2u+1)\Delta^2} + 3\alpha e^{4u\Delta^2} - (\alpha+1)e^{6u\Delta^2} + 3e^{\Delta^2} + 3e^{2u\Delta^2}-6e^{u\Delta^2}\right)p^3 \\
        &\quad + O_{\alpha, \Delta}(p^4)\,.
    \end{align*}
    By the earlier claim, it suffices to consider $u = 1/2$ for maximizing $F$. Then
    \[\exp((\alpha-1)\rho(\alpha, p, \Delta)) = 1 + \alpha(\alpha-1)(e^{\Delta^2}-e^{\Delta^2/2})p^2 - \frac{\alpha(\alpha-1)}{2}(e^{3\Delta^2}-e^{2\Delta^2}-2e^{\Delta^2}+2e^{\Delta^2/2}) + O_{\alpha, \Delta}(p^4)\,,
    \]
    from $\exp((\alpha-1)\rho(\alpha, p, \Delta)) = \max_{u \in [0, 1/2]}F(u)$.  Plugging in $\Delta = 2/\sigma$, we obtain the first part of Proposition~\ref{prop:pexpansion}.

    The second part is based on the analytical expressions in \citep{birrell2024rdp}; here we only record relevant details. From \citep[Theorem 3.4]{birrell2024rdp} with order $m \geq 4$, the coefficient of $p^3$ for the asymptotic expansion of  $\exp((\alpha-1)\rho_\mathrm{B})$ is given as
    \begin{equation}\label{eq:birrell-pexpansion1}
    \begin{aligned}
        &\frac{\alpha^2(\alpha - 1)\sqrt{M_{\sigma, 2}M_{\sigma, 4}}}{2} \\
        &\quad+ \frac{\alpha^2(\alpha-1)\sqrt{M_{\sigma, 2}M_{\sigma, 4}}}{6} \times  \sum_{j=0}^3 \binom{3}{j} \left|\frac{\alpha}{\alpha - 1}\left(\prod_{\ell=0}^{j-1} \left(1-\frac{\ell}{\alpha}\right)\right)\left(\prod_{\ell=0}^{2-j} \left(1+\frac{\ell-1}{\alpha}\right)\right) - 1\right|\,,
    \end{aligned}    
    \end{equation}
    
    where $M_{\sigma, 2} = e^{4/\sigma^2} - 1, M_{\sigma, 4} = e^{24/\sigma^2}-4e^{12/\sigma^2}+6e^{4/\sigma^2}-3$
    from \citep[Equation 41]{birrell2024rdp}. Thus \eqref{eq:birrell-pexpansion1} is equal to
    \[\frac{(\alpha+2)\alpha(\alpha-1)}{2}\sqrt{(e^{4/\sigma^2}-1)(e^{24/\sigma^2}-4e^{12/\sigma^2}+6e^{4/\sigma^2}-3)}\,,
    \]
    as desired. Converting this expansion into an expansion for $\rho_{\mathrm{B}}$, we obtain the second part.

\subsection{Proof of Corollary~\ref{cor:lim}}\label{subsec:lim}
We first show $\mu^2 > \lambda$. From their definitions (see Lemma~\ref{lem:clt} and Proposition~\ref{prop:lim}), this is equivalent to showing
\[e^{4/\sigma^2}\Phi(3/\sigma) + 3\Phi(-1/\sigma) -2 > e^{4/\sigma^2} - e^{2/\sigma^2}\,.
\]
Let $f(y) \triangleq e^{4y^2}(\Phi(3y)-1) + 3\Phi(-y) + e^{2y^2}-2$. Then
\begin{align*}
    f'(y) &= 8ye^{4y^2}(\Phi(3y)-1) + 3e^{4y^2}\phi(3y) - 3\phi(-y) + 4ye^{2y^2} \\
    &= 8ye^{4y^2} (\Phi(3y)-1) + 4ye^{2y^2} \\
    &= 4ye^{2y^2}(1 - 2e^{2y^2}\Phi(-3y)) \geq 4ye^{2y^2}(1-e^{-5y^2/2}) > 0\,,
\end{align*}
for all $y > 0$; here the second equality holds from $3e^{4y^2}\phi(3y) = 3\phi(y) = 3\phi(-y)$, and the penultimate inequality holds from standard Gaussian tail bound $\Phi(-3y) \leq e^{-(9/2)y^2}/2$. Since $f(0) = 0$, this implies that $f(1/\sigma) > 0$ for all $\sigma > 0$, proving $\mu^2 > \lambda$.

For the bound on $\log \delta(\eps) - \log \delta_{\mathrm{L}}(\eps)$, we combine an upper bound on $\log \delta(\eps)$ and a lower bound on $\log\delta_{\mathrm{L}}(\eps)$:
\begin{align}
    \log \delta(\eps) &\leq -\frac{\eps^2}{2\lambda} + \frac{\eps}{2} -\log \eps + O_{\lambda}(1)\,, \label{eq:deltaub} \\
    \log \delta_{\mathrm{L}}(\eps) &\geq -\frac{\eps^2}{2\mu^2} + \frac{\eps}{2} -2 \log \eps - O_{\mu}(1)\,. \label{eq:deltahlb}
\end{align}
Corollary~\ref{cor:lim} clearly follows from these two inequalities. From Proposition~\ref{prop:lim}, we have
\[\delta(\eps) \leq \frac{\lambda}{\eps} \exp\left(-\frac{\eps^2}{2\lambda} + \frac{\eps}{2}\right)\,,
\]
which implies \eqref{eq:deltaub}. For \eqref{eq:deltahlb}, recall the following standard inequality:
\[\left(\frac{1}{x} - \frac{1}{x^3}\right)\phi(x)\leq \Phi(-x) \leq \frac{1}{x}\phi(x) \text{ for all } x> 0\,.
\]
For $\eps \geq \mu^2 \vee 16$, using this inequality for Lemma~\ref{lem:clt} yields
\begin{align*}
    \delta_{\mathrm{L}}(\eps) &\geq \left(\frac{1}{\eps / \mu - \mu / 2} - \frac{1}{(\eps / \mu - \mu / 2)^3}\right)\phi\left(\frac{\eps}{\mu} - \frac{\mu}{2}\right) - e^{\eps} \times \frac{1}{\eps/\mu + \mu / 2}\phi\left(\frac{\eps}{\mu} + \frac{\mu}{2}\right) \\
    &= \left(\frac{1}{\eps / \mu - \mu / 2} - \frac{1}{\eps/\mu + \mu /2} - \frac{1}{(\eps / \mu - \mu / 2)^3}\right) \phi\left(\frac{\eps}{\mu} - \frac{\mu}{2}\right) \\
    &\geq \left(\frac{\mu}{\eps^2/\mu^2 - \mu^2/4} - \frac{1}{(\eps / (2\mu))^3}\right) \phi\left(\frac{\eps}{\mu} - \frac{\mu}{2}\right)   \\
    &\geq \left(\frac{\mu^3}{\eps^2} - \frac{\mu^3}{2\eps^2}\right) \phi\left(\frac{\eps}{\mu} - \frac{\mu}{2}\right) = \frac{\mu^3}{2\eps^2}\phi\left(\frac{\eps}{\mu} - \frac{\mu}{2}\right) \,,
\end{align*}
where the second inequality holds from $\eps/\mu - \mu/2 \geq \eps / (2\mu) \Leftrightarrow \eps \geq \mu^2$, and the last inequality holds from $\mu^3/(2\eps^2) \geq 8\mu^3/\eps^3 \Leftrightarrow \eps \geq 16$. This implies \eqref{eq:deltahlb}, as desired.

\section{Deferred details for numerical comparisons}\label{sec:numericsdetails}

\subsection{Additional results}\label{subsec:numerics-full}

We record the values of $\eps$ for $(\eps, \delta)$-DP guarantees from different accountants over a grid of hyperparameters $T \in \{250, 500, 1000, 2000, 4000, 8000, 16000, 32000, 64000\}, p \in \{0.0025, 0.01, 0.04\}, \sigma \in \{1, 2, 3, 4, 6, 8\}$ and $\delta \in \{10^{-6}, 10^{-8}\}$. We omit values of $\eps$ for sets of hyperparameters where all accounting methods yield $\eps \geq 20$, as such large values are often considered as being outside of practically meaningful regime. In some cases, the bounds from \cite{birrell2024rdp} are excessively large due to the remainder terms in their Taylor expansion; we only record that $\eps \geq 100$ in those cases. As the main interest of this paper is to compare R\'enyi accountants against lossless accountants, the column named ``Relative difference'' in the tables records how small $\eps$ from our R\'enyi accountant (which always improves over other R\'enyi accountants covered here) is relative to $\eps$ from lossless accountants, in percentage. In particular, the percentages are colored \textcolor{Green}{green} if they are negative (i.e., $\eps$ from our R\'enyi accountant is smaller) and colored \textcolor{Red}{red} otherwise (i.e., $\eps$ from our R\'enyi accountant is larger). Our code for reproducing the numerics can be found at: \url{https://github.com/jinhobok/renyi_vs_lossless_dpsgd}.

The results are presented in Tables~\ref{table:first} through \ref{table:last}. Across those hyperparameter choices, our privacy bounds improve over those from lossless accountants, except for particular cases (if $\sigma = 1$, or $T$ is small). In particular, those improvements generally become more prominent as the number of iterations increases, allowing longer training with better privacy guarantees. Relatedly, many of the settings where lossless accountants provide better privacy guarantees are when $T$ is small, where the accounted privacy $\eps$ is already small (e.g., less than $1$) across different accountants. Altogether, those results illustrate that a R\'enyi accountant can provide better privacy guarantees than lossless accountants.

\raggedbottom

\begin{table}[H]
\centering
\caption{Accounted privacy $\varepsilon$ for $\delta=10^{-6}$ and $p=0.0025$.}
\label{table:first}
\footnotesize
\begin{tabular}{|c|c|c|c|c|c|c|}
\hline
\multirow{2}{*}{$\sigma$} & \multirow{2}{*}{$T$} & \multicolumn{3}{c|}{R\'enyi accountants} & \multirow{2}{*}{Lossless accountants} & \multirow{2}{*}{Relative difference} \\
\cline{3-5}
 &  & Our bounds & Birrell et al. & Wang et al. &  &  \\
\hline
\multirow{8}{*}{$1$} & $250$ & $6.40$ & $\geq 100$ & $6.84$ & $5.10$ & $\textcolor{Red}{+25.48\%}$ \\
 & $500$ & $6.84$ & $\geq 100$ & $7.73$ & $5.68$ & $\textcolor{Red}{+20.32\%}$ \\
 & $1000$ & $7.72$ & $\geq 100$ & $9.51$ & $6.53$ & $\textcolor{Red}{+18.26\%}$ \\
 & $2000$ & $9.49$ & $\geq 100$ & $13.06$ & $7.80$ & $\textcolor{Red}{+21.63\%}$ \\
 & $4000$ & $13.03$ & $\geq 100$ & $15.16$ & $9.76$ & $\textcolor{Red}{+33.46\%}$ \\
 & $8000$ & $15.77$ & $\geq 100$ & $17.89$ & $12.86$ & $\textcolor{Red}{+22.59\%}$ \\
 & $16000$ & $19.11$ & $\geq 100$ & $23.35$ & $17.87$ & $\textcolor{Red}{+6.96\%}$ \\
\cline{3-7}
 & $\geq 32000$ & \multicolumn{5}{c|}{$\geq 20$ for all accountants} \\
\hline
\multirow{9}{*}{$2$} & $250$ & $1.03$ & $1.43$ & $1.10$ & $0.33$ & $\textcolor{Red}{+215.71\%}$ \\
 & $500$ & $1.08$ & $1.52$ & $1.15$ & $0.42$ & $\textcolor{Red}{+155.55\%}$ \\
 & $1000$ & $1.12$ & $1.66$ & $1.25$ & $0.58$ & $\textcolor{Red}{+93.95\%}$ \\
 & $2000$ & $1.19$ & $1.78$ & $1.45$ & $0.81$ & $\textcolor{Red}{+46.97\%}$ \\
 & $4000$ & $1.34$ & $2.02$ & $1.86$ & $1.16$ & $\textcolor{Red}{+15.32\%}$ \\
 & $8000$ & $1.64$ & $2.36$ & $2.61$ & $1.68$ & $\textcolor{Green}{-2.74\%}$ \\
 & $16000$ & $2.23$ & $2.90$ & $3.79$ & $2.45$ & $\textcolor{Green}{-9.16\%}$ \\
 & $32000$ & $3.24$ & $3.92$ & $5.57$ & $3.59$ & $\textcolor{Green}{-9.80\%}$ \\
 & $64000$ & $4.77$ & $5.50$ & $8.26$ & $5.31$ & $\textcolor{Green}{-10.34\%}$ \\
\hline
\multirow{9}{*}{$3$} & $250$ & $0.40$ & $0.46$ & $0.43$ & $0.14$ & $\textcolor{Red}{+187.32\%}$ \\
 & $500$ & $0.41$ & $0.48$ & $0.48$ & $0.20$ & $\textcolor{Red}{+107.41\%}$ \\
 & $1000$ & $0.43$ & $0.52$ & $0.58$ & $0.28$ & $\textcolor{Red}{+53.13\%}$ \\
 & $2000$ & $0.48$ & $0.57$ & $0.77$ & $0.41$ & $\textcolor{Red}{+18.47\%}$ \\
 & $4000$ & $0.58$ & $0.68$ & $1.10$ & $0.59$ & $\textcolor{Green}{-1.01\%}$ \\
 & $8000$ & $0.79$ & $0.88$ & $1.59$ & $0.86$ & $\textcolor{Green}{-8.12\%}$ \\
 & $16000$ & $1.14$ & $1.23$ & $2.31$ & $1.24$ & $\textcolor{Green}{-8.45\%}$ \\
 & $32000$ & $1.65$ & $1.76$ & $3.36$ & $1.81$ & $\textcolor{Green}{-8.72\%}$ \\
 & $64000$ & $2.41$ & $2.53$ & $4.94$ & $2.65$ & $\textcolor{Green}{-9.05\%}$ \\
\hline
\multirow{9}{*}{$4$} & $250$ & $0.21$ & $0.23$ & $0.24$ & $0.09$ & $\textcolor{Red}{+128.03\%}$ \\
 & $500$ & $0.22$ & $0.24$ & $0.28$ & $0.13$ & $\textcolor{Red}{+66.77\%}$ \\
 & $1000$ & $0.24$ & $0.27$ & $0.37$ & $0.19$ & $\textcolor{Red}{+27.26\%}$ \\
 & $2000$ & $0.28$ & $0.32$ & $0.53$ & $0.27$ & $\textcolor{Red}{+4.31\%}$ \\
 & $4000$ & $0.37$ & $0.41$ & $0.76$ & $0.39$ & $\textcolor{Green}{-5.40\%}$ \\
 & $8000$ & $0.53$ & $0.58$ & $1.10$ & $0.57$ & $\textcolor{Green}{-6.18\%}$ \\
 & $16000$ & $0.77$ & $0.82$ & $1.59$ & $0.83$ & $\textcolor{Green}{-6.40\%}$ \\
 & $32000$ & $1.12$ & $1.17$ & $2.31$ & $1.20$ & $\textcolor{Green}{-6.67\%}$ \\
 & $64000$ & $1.63$ & $1.68$ & $3.38$ & $1.75$ & $\textcolor{Green}{-6.92\%}$ \\
\hline
\multirow{9}{*}{$6$} & $250$ & $0.09$ & $0.09$ & $0.12$ & $0.05$ & $\textcolor{Red}{+63.54\%}$ \\
 & $500$ & $0.10$ & $0.10$ & $0.16$ & $0.08$ & $\textcolor{Red}{+26.27\%}$ \\
 & $1000$ & $0.12$ & $0.13$ & $0.23$ & $0.11$ & $\textcolor{Red}{+5.53\%}$ \\
 & $2000$ & $0.16$ & $0.17$ & $0.33$ & $0.16$ & $\textcolor{Green}{-2.07\%}$ \\
 & $4000$ & $0.23$ & $0.25$ & $0.47$ & $0.23$ & $\textcolor{Green}{-2.63\%}$ \\
 & $8000$ & $0.33$ & $0.35$ & $0.68$ & $0.34$ & $\textcolor{Green}{-2.84\%}$ \\
 & $16000$ & $0.48$ & $0.50$ & $0.99$ & $0.49$ & $\textcolor{Green}{-3.06\%}$ \\
 & $32000$ & $0.69$ & $0.71$ & $1.43$ & $0.71$ & $\textcolor{Green}{-3.28\%}$ \\
 & $64000$ & $1.00$ & $1.02$ & $2.08$ & $1.03$ & $\textcolor{Green}{-3.53\%}$ \\
\hline
\multirow{9}{*}{$8$} & $250$ & $0.05$ & $0.05$ & $0.08$ & $0.04$ & $\textcolor{Red}{+33.10\%}$ \\
 & $500$ & $0.06$ & $0.07$ & $0.12$ & $0.05$ & $\textcolor{Red}{+10.21\%}$ \\
 & $1000$ & $0.08$ & $0.09$ & $0.17$ & $0.08$ & $\textcolor{Red}{+1.16\%}$ \\
 & $2000$ & $0.11$ & $0.12$ & $0.24$ & $0.11$ & $\textcolor{Red}{+0.30\%}$ \\
 & $4000$ & $0.16$ & $0.18$ & $0.34$ & $0.17$ & $\textcolor{Green}{-0.28\%}$ \\
 & $8000$ & $0.24$ & $0.25$ & $0.50$ & $0.24$ & $\textcolor{Green}{-0.61\%}$ \\
 & $16000$ & $0.34$ & $0.36$ & $0.72$ & $0.35$ & $\textcolor{Green}{-0.83\%}$ \\
 & $32000$ & $0.50$ & $0.51$ & $1.04$ & $0.50$ & $\textcolor{Green}{-1.05\%}$ \\
 & $64000$ & $0.72$ & $0.73$ & $1.50$ & $0.73$ & $\textcolor{Green}{-1.27\%}$ \\
\hline
\end{tabular}
\end{table}

\begin{table}[H]
\centering
\caption{Accounted privacy $\varepsilon$ for $\delta=10^{-6}$ and $p=0.01$.}
\footnotesize
\begin{tabular}{|c|c|c|c|c|c|c|}
\hline
\multirow{2}{*}{$\sigma$} & \multirow{2}{*}{$T$} & \multicolumn{3}{c|}{R\'enyi accountants} & \multirow{2}{*}{Lossless accountants} & \multirow{2}{*}{Relative difference} \\
\cline{3-5}
 &  & Our bounds & Birrell et al. & Wang et al. &  &  \\
\hline
\multirow{4}{*}{$1$} & $250$ & $13.77$ & $\geq 100$ & $15.14$ & $10.27$ & $\textcolor{Red}{+34.03\%}$ \\
 & $500$ & $15.11$ & $\geq 100$ & $17.86$ & $12.90$ & $\textcolor{Red}{+17.13\%}$ \\
 & $1000$ & $17.79$ & $\geq 100$ & $23.29$ & $16.96$ & $\textcolor{Red}{+4.85\%}$ \\
\cline{3-7}
 & $\geq 2000$ & \multicolumn{5}{c|}{$\geq 20$ for all accountants} \\
\hline
\multirow{9}{*}{$2$} & $250$ & $1.77$ & $3.27$ & $2.21$ & $1.34$ & $\textcolor{Red}{+31.35\%}$ \\
 & $500$ & $1.99$ & $3.96$ & $2.89$ & $1.80$ & $\textcolor{Red}{+10.68\%}$ \\
 & $1000$ & $2.44$ & $4.93$ & $4.02$ & $2.51$ & $\textcolor{Green}{-3.03\%}$ \\
 & $2000$ & $3.32$ & $6.00$ & $5.76$ & $3.61$ & $\textcolor{Green}{-7.93\%}$ \\
 & $4000$ & $4.75$ & $8.14$ & $8.42$ & $5.28$ & $\textcolor{Green}{-10.02\%}$ \\
 & $8000$ & $7.00$ & $10.89$ & $12.63$ & $7.84$ & $\textcolor{Green}{-10.77\%}$ \\
 & $16000$ & $10.48$ & $15.83$ & $19.31$ & $11.83$ & $\textcolor{Green}{-11.44\%}$ \\
 & $32000$ & $15.83$ & $23.23$ & $29.82$ & $18.16$ & $\textcolor{Green}{-12.82\%}$ \\
\cline{3-7}
 & $64000$ & \multicolumn{5}{c|}{$\geq 20$ for all accountants} \\
\hline
\multirow{9}{*}{$3$} & $250$ & $0.69$ & $0.96$ & $1.17$ & $0.61$ & $\textcolor{Red}{+13.75\%}$ \\
 & $500$ & $0.85$ & $1.18$ & $1.67$ & $0.86$ & $\textcolor{Green}{-1.45\%}$ \\
 & $1000$ & $1.15$ & $1.55$ & $2.39$ & $1.25$ & $\textcolor{Green}{-7.56\%}$ \\
 & $2000$ & $1.65$ & $2.09$ & $3.45$ & $1.81$ & $\textcolor{Green}{-8.62\%}$ \\
 & $4000$ & $2.41$ & $2.90$ & $5.02$ & $2.64$ & $\textcolor{Green}{-8.93\%}$ \\
 & $8000$ & $3.51$ & $4.11$ & $7.44$ & $3.88$ & $\textcolor{Green}{-9.44\%}$ \\
 & $16000$ & $5.18$ & $5.88$ & $11.17$ & $5.75$ & $\textcolor{Green}{-9.98\%}$ \\
 & $32000$ & $7.75$ & $8.58$ & $16.81$ & $8.61$ & $\textcolor{Green}{-10.00\%}$ \\
 & $64000$ & $11.72$ & $12.83$ & $26.75$ & $13.07$ & $\textcolor{Green}{-10.29\%}$ \\
\hline
\multirow{9}{*}{$4$} & $250$ & $0.41$ & $0.54$ & $0.80$ & $0.40$ & $\textcolor{Red}{+3.35\%}$ \\
 & $500$ & $0.55$ & $0.71$ & $1.14$ & $0.57$ & $\textcolor{Green}{-3.72\%}$ \\
 & $1000$ & $0.78$ & $0.96$ & $1.63$ & $0.83$ & $\textcolor{Green}{-5.36\%}$ \\
 & $2000$ & $1.12$ & $1.32$ & $2.35$ & $1.20$ & $\textcolor{Green}{-6.58\%}$ \\
 & $4000$ & $1.63$ & $1.84$ & $3.42$ & $1.75$ & $\textcolor{Green}{-6.81\%}$ \\
 & $8000$ & $2.37$ & $2.62$ & $5.01$ & $2.55$ & $\textcolor{Green}{-7.16\%}$ \\
 & $16000$ & $3.46$ & $3.76$ & $7.44$ & $3.75$ & $\textcolor{Green}{-7.59\%}$ \\
 & $32000$ & $5.10$ & $5.49$ & $11.20$ & $5.55$ & $\textcolor{Green}{-8.08\%}$ \\
 & $64000$ & $7.63$ & $8.08$ & $16.92$ & $8.31$ & $\textcolor{Green}{-8.16\%}$ \\
\hline
\multirow{9}{*}{$6$} & $250$ & $0.24$ & $0.30$ & $0.49$ & $0.23$ & $\textcolor{Red}{+0.66\%}$ \\
 & $500$ & $0.34$ & $0.41$ & $0.70$ & $0.34$ & $\textcolor{Green}{-0.74\%}$ \\
 & $1000$ & $0.48$ & $0.56$ & $1.01$ & $0.49$ & $\textcolor{Green}{-1.84\%}$ \\
 & $2000$ & $0.69$ & $0.77$ & $1.45$ & $0.71$ & $\textcolor{Green}{-2.73\%}$ \\
 & $4000$ & $0.99$ & $1.09$ & $2.10$ & $1.03$ & $\textcolor{Green}{-3.39\%}$ \\
 & $8000$ & $1.44$ & $1.55$ & $3.06$ & $1.50$ & $\textcolor{Green}{-3.65\%}$ \\
 & $16000$ & $2.10$ & $2.22$ & $4.49$ & $2.19$ & $\textcolor{Green}{-3.94\%}$ \\
 & $32000$ & $3.07$ & $3.21$ & $6.62$ & $3.21$ & $\textcolor{Green}{-4.25\%}$ \\
 & $64000$ & $4.52$ & $4.68$ & $9.90$ & $4.73$ & $\textcolor{Green}{-4.44\%}$ \\
\hline
\multirow{9}{*}{$8$} & $250$ & $0.17$ & $0.21$ & $0.35$ & $0.17$ & $\textcolor{Red}{+2.40\%}$ \\
 & $500$ & $0.24$ & $0.29$ & $0.51$ & $0.24$ & $\textcolor{Red}{+1.28\%}$ \\
 & $1000$ & $0.35$ & $0.39$ & $0.73$ & $0.35$ & $\textcolor{Red}{+0.40\%}$ \\
 & $2000$ & $0.50$ & $0.55$ & $1.05$ & $0.50$ & $\textcolor{Green}{-0.31\%}$ \\
 & $4000$ & $0.72$ & $0.78$ & $1.51$ & $0.73$ & $\textcolor{Green}{-0.90\%}$ \\
 & $8000$ & $1.04$ & $1.10$ & $2.20$ & $1.05$ & $\textcolor{Green}{-1.30\%}$ \\
 & $16000$ & $1.51$ & $1.58$ & $3.21$ & $1.53$ & $\textcolor{Green}{-1.51\%}$ \\
 & $32000$ & $2.20$ & $2.28$ & $4.72$ & $2.24$ & $\textcolor{Green}{-1.76\%}$ \\
 & $64000$ & $3.22$ & $3.31$ & $6.98$ & $3.28$ & $\textcolor{Green}{-1.97\%}$ \\
\hline
\end{tabular}
\end{table}

\begin{table}[H]
\centering
\caption{Accounted privacy $\varepsilon$ for $\delta=10^{-6}$ and $p=0.04$.}
\footnotesize
\begin{tabular}{|c|c|c|c|c|c|c|}
\hline
\multirow{2}{*}{$\sigma$} & \multirow{2}{*}{$T$} & \multicolumn{3}{c|}{R\'enyi accountants} & \multirow{2}{*}{Lossless accountants} & \multirow{2}{*}{Relative difference} \\
\cline{3-5}
 &  & Our bounds & Birrell et al. & Wang et al. &  &  \\
\hline
\multirow{1}{*}{$1$} & $\geq 250$ & \multicolumn{5}{c|}{$\geq 20$ for all accountants} \\
\hline
\multirow{5}{*}{$2$} & $250$ & $5.45$ & $16.66$ & $9.07$ & $5.45$ & $\textcolor{Red}{+0.01\%}$ \\
 & $500$ & $7.31$ & $20.89$ & $13.02$ & $7.79$ & $\textcolor{Green}{-6.22\%}$ \\
 & $1000$ & $10.59$ & $29.34$ & $20.08$ & $11.48$ & $\textcolor{Green}{-7.82\%}$ \\
 & $2000$ & $15.22$ & $46.26$ & $29.75$ & $17.37$ & $\textcolor{Green}{-12.39\%}$ \\
\cline{3-7}
 & $\geq 4000$ & \multicolumn{5}{c|}{$\geq 20$ for all accountants} \\
\hline
\multirow{7}{*}{$3$} & $250$ & $2.58$ & $4.63$ & $5.31$ & $2.66$ & $\textcolor{Green}{-3.07\%}$ \\
 & $500$ & $3.63$ & $6.11$ & $7.70$ & $3.86$ & $\textcolor{Green}{-6.02\%}$ \\
 & $1000$ & $5.22$ & $8.37$ & $11.52$ & $5.68$ & $\textcolor{Green}{-8.06\%}$ \\
 & $2000$ & $7.68$ & $11.49$ & $17.09$ & $8.46$ & $\textcolor{Green}{-9.27\%}$ \\
 & $4000$ & $11.53$ & $17.02$ & $26.73$ & $12.80$ & $\textcolor{Green}{-9.96\%}$ \\
 & $8000$ & $17.42$ & $24.96$ & $41.03$ & $19.73$ & $\textcolor{Green}{-11.69\%}$ \\
\cline{3-7}
 & $\geq 16000$ & \multicolumn{5}{c|}{$\geq 20$ for all accountants} \\
\hline
\multirow{8}{*}{$4$} & $250$ & $1.71$ & $2.57$ & $3.55$ & $1.75$ & $\textcolor{Green}{-2.10\%}$ \\
 & $500$ & $2.43$ & $3.43$ & $5.15$ & $2.54$ & $\textcolor{Green}{-4.22\%}$ \\
 & $1000$ & $3.50$ & $4.71$ & $7.62$ & $3.72$ & $\textcolor{Green}{-5.82\%}$ \\
 & $2000$ & $5.09$ & $6.62$ & $11.40$ & $5.50$ & $\textcolor{Green}{-7.47\%}$ \\
 & $4000$ & $7.57$ & $9.46$ & $17.06$ & $8.21$ & $\textcolor{Green}{-7.84\%}$ \\
 & $8000$ & $11.43$ & $13.77$ & $26.96$ & $12.44$ & $\textcolor{Green}{-8.09\%}$ \\
 & $16000$ & $17.31$ & $21.59$ & $41.49$ & $19.16$ & $\textcolor{Green}{-9.67\%}$ \\
\cline{3-7}
 & $\geq 32000$ & \multicolumn{5}{c|}{$\geq 20$ for all accountants} \\
\hline
\multirow{9}{*}{$6$} & $250$ & $1.03$ & $1.40$ & $2.16$ & $1.03$ & $\textcolor{Red}{+0.25\%}$ \\
 & $500$ & $1.48$ & $1.87$ & $3.13$ & $1.49$ & $\textcolor{Green}{-1.19\%}$ \\
 & $1000$ & $2.13$ & $2.57$ & $4.56$ & $2.18$ & $\textcolor{Green}{-2.29\%}$ \\
 & $2000$ & $3.08$ & $3.61$ & $6.69$ & $3.19$ & $\textcolor{Green}{-3.27\%}$ \\
 & $4000$ & $4.52$ & $5.18$ & $9.98$ & $4.71$ & $\textcolor{Green}{-3.94\%}$ \\
 & $8000$ & $6.66$ & $7.56$ & $15.06$ & $7.01$ & $\textcolor{Green}{-4.90\%}$ \\
 & $16000$ & $9.99$ & $11.26$ & $24.17$ & $10.56$ & $\textcolor{Green}{-5.39\%}$ \\
 & $32000$ & $15.15$ & $16.72$ & $36.49$ & $16.16$ & $\textcolor{Green}{-6.23\%}$ \\
\cline{3-7}
 & $ 64000$ & \multicolumn{5}{c|}{$\geq 20$ for all accountants} \\
\hline
\multirow{9}{*}{$8$} & $250$ & $0.74$ & $0.97$ & $1.55$ & $0.73$ & $\textcolor{Red}{+1.89\%}$ \\
 & $500$ & $1.06$ & $1.30$ & $2.24$ & $1.05$ & $\textcolor{Red}{+0.78\%}$ \\
 & $1000$ & $1.52$ & $1.79$ & $3.25$ & $1.53$ & $\textcolor{Green}{-0.17\%}$ \\
 & $2000$ & $2.21$ & $2.52$ & $4.76$ & $2.23$ & $\textcolor{Green}{-0.93\%}$ \\
 & $4000$ & $3.22$ & $3.59$ & $7.03$ & $3.27$ & $\textcolor{Green}{-1.51\%}$ \\
 & $8000$ & $4.73$ & $5.18$ & $10.54$ & $4.83$ & $\textcolor{Green}{-2.08\%}$ \\
 & $16000$ & $7.00$ & $7.64$ & $15.92$ & $7.19$ & $\textcolor{Green}{-2.68\%}$ \\
 & $32000$ & $10.53$ & $11.42$ & $25.63$ & $10.85$ & $\textcolor{Green}{-2.96\%}$ \\
 & $64000$ & $15.97$ & $17.07$ & $38.84$ & $16.63$ & $\textcolor{Green}{-3.98\%}$ \\
\hline
\end{tabular}
\end{table}

\begin{table}[H]
\centering
\caption{Accounted privacy $\varepsilon$ for $\delta=10^{-8}$ and $p=0.0025$.}
\footnotesize
\begin{tabular}{|c|c|c|c|c|c|c|}
\hline
\multirow{2}{*}{$\sigma$} & \multirow{2}{*}{$T$} & \multicolumn{3}{c|}{R\'enyi accountants} & \multirow{2}{*}{Lossless accountants} & \multirow{2}{*}{Relative difference} \\
\cline{3-5}
 &  & Our bounds & Birrell et al. & Wang et al. &  &  \\
\hline
\multirow{7}{*}{$1$} & $250$ & $8.70$ & $\geq 100$ & $9.14$ & $7.11$ & $\textcolor{Red}{+22.41\%}$ \\
 & $500$ & $9.14$ & $\geq 100$ & $10.03$ & $7.83$ & $\textcolor{Red}{+16.75\%}$ \\
 & $1000$ & $10.03$ & $\geq 100$ & $11.81$ & $8.85$ & $\textcolor{Red}{+13.31\%}$ \\
 & $2000$ & $11.79$ & $\geq 100$ & $15.36$ & $10.32$ & $\textcolor{Red}{+14.31\%}$ \\
 & $4000$ & $15.33$ & $\geq 100$ & $19.76$ & $12.53$ & $\textcolor{Red}{+22.37\%}$ \\
 & $8000$ & $20.37$ & $\geq 100$ & $22.49$ & $15.98$ & $\textcolor{Red}{+27.47\%}$ \\
\cline{3-7}
 & $\geq 16000$ & \multicolumn{5}{c|}{$\geq 20$ for all accountants} \\
\hline
\multirow{9}{*}{$2$} & $250$ & $1.45$ & $2.00$ & $1.55$ & $0.59$ & $\textcolor{Red}{+145.95\%}$ \\
 & $500$ & $1.53$ & $2.10$ & $1.61$ & $0.68$ & $\textcolor{Red}{+126.50\%}$ \\
 & $1000$ & $1.58$ & $2.28$ & $1.71$ & $0.80$ & $\textcolor{Red}{+96.51\%}$ \\
 & $2000$ & $1.65$ & $2.44$ & $1.91$ & $1.03$ & $\textcolor{Red}{+60.06\%}$ \\
 & $4000$ & $1.80$ & $2.68$ & $2.32$ & $1.43$ & $\textcolor{Red}{+25.92\%}$ \\
 & $8000$ & $2.10$ & $3.13$ & $3.12$ & $2.04$ & $\textcolor{Red}{+2.75\%}$ \\
 & $16000$ & $2.69$ & $3.67$ & $4.46$ & $2.94$ & $\textcolor{Green}{-8.75\%}$ \\
 & $32000$ & $3.82$ & $4.75$ & $6.49$ & $4.28$ & $\textcolor{Green}{-10.84\%}$ \\
 & $64000$ & $5.56$ & $6.53$ & $9.52$ & $6.27$ & $\textcolor{Green}{-11.35\%}$ \\
\hline
\multirow{9}{*}{$3$} & $250$ & $0.58$ & $0.67$ & $0.62$ & $0.18$ & $\textcolor{Red}{+225.05\%}$ \\
 & $500$ & $0.59$ & $0.69$ & $0.66$ & $0.25$ & $\textcolor{Red}{+136.65\%}$ \\
 & $1000$ & $0.62$ & $0.74$ & $0.76$ & $0.36$ & $\textcolor{Red}{+73.70\%}$ \\
 & $2000$ & $0.67$ & $0.79$ & $0.96$ & $0.51$ & $\textcolor{Red}{+31.47\%}$ \\
 & $4000$ & $0.77$ & $0.90$ & $1.33$ & $0.73$ & $\textcolor{Red}{+5.50\%}$ \\
 & $8000$ & $0.97$ & $1.11$ & $1.90$ & $1.05$ & $\textcolor{Green}{-7.39\%}$ \\
 & $16000$ & $1.36$ & $1.49$ & $2.74$ & $1.51$ & $\textcolor{Green}{-9.75\%}$ \\
 & $32000$ & $1.97$ & $2.11$ & $3.96$ & $2.19$ & $\textcolor{Green}{-9.97\%}$ \\
 & $64000$ & $2.85$ & $3.01$ & $5.77$ & $3.17$ & $\textcolor{Green}{-10.20\%}$ \\
\hline
\multirow{9}{*}{$4$} & $250$ & $0.31$ & $0.34$ & $0.34$ & $0.12$ & $\textcolor{Red}{+162.02\%}$ \\
 & $500$ & $0.32$ & $0.35$ & $0.38$ & $0.17$ & $\textcolor{Red}{+90.81\%}$ \\
 & $1000$ & $0.34$ & $0.38$ & $0.47$ & $0.24$ & $\textcolor{Red}{+42.81\%}$ \\
 & $2000$ & $0.38$ & $0.43$ & $0.65$ & $0.34$ & $\textcolor{Red}{+12.65\%}$ \\
 & $4000$ & $0.47$ & $0.52$ & $0.92$ & $0.49$ & $\textcolor{Green}{-3.41\%}$ \\
 & $8000$ & $0.65$ & $0.71$ & $1.32$ & $0.70$ & $\textcolor{Green}{-7.70\%}$ \\
 & $16000$ & $0.93$ & $0.99$ & $1.90$ & $1.01$ & $\textcolor{Green}{-7.84\%}$ \\
 & $32000$ & $1.34$ & $1.41$ & $2.74$ & $1.46$ & $\textcolor{Green}{-8.00\%}$ \\
 & $64000$ & $1.94$ & $2.01$ & $3.97$ & $2.11$ & $\textcolor{Green}{-8.22\%}$ \\
\hline
\multirow{9}{*}{$6$} & $250$ & $0.13$ & $0.14$ & $0.16$ & $0.07$ & $\textcolor{Red}{+87.97\%}$ \\
 & $500$ & $0.14$ & $0.15$ & $0.20$ & $0.10$ & $\textcolor{Red}{+41.64\%}$ \\
 & $1000$ & $0.16$ & $0.17$ & $0.28$ & $0.14$ & $\textcolor{Red}{+13.14\%}$ \\
 & $2000$ & $0.20$ & $0.22$ & $0.40$ & $0.20$ & $\textcolor{Green}{-1.18\%}$ \\
 & $4000$ & $0.28$ & $0.31$ & $0.58$ & $0.29$ & $\textcolor{Green}{-4.39\%}$ \\
 & $8000$ & $0.40$ & $0.43$ & $0.83$ & $0.42$ & $\textcolor{Green}{-4.59\%}$ \\
 & $16000$ & $0.58$ & $0.61$ & $1.19$ & $0.61$ & $\textcolor{Green}{-4.72\%}$ \\
 & $32000$ & $0.83$ & $0.86$ & $1.71$ & $0.87$ & $\textcolor{Green}{-4.85\%}$ \\
 & $64000$ & $1.20$ & $1.23$ & $2.47$ & $1.26$ & $\textcolor{Green}{-5.01\%}$ \\
\hline
\multirow{9}{*}{$8$} & $250$ & $0.07$ & $0.08$ & $0.11$ & $0.05$ & $\textcolor{Red}{+50.48\%}$ \\
 & $500$ & $0.08$ & $0.09$ & $0.15$ & $0.07$ & $\textcolor{Red}{+19.14\%}$ \\
 & $1000$ & $0.10$ & $0.12$ & $0.21$ & $0.10$ & $\textcolor{Red}{+2.65\%}$ \\
 & $2000$ & $0.14$ & $0.16$ & $0.30$ & $0.15$ & $\textcolor{Green}{-1.60\%}$ \\
 & $4000$ & $0.20$ & $0.22$ & $0.42$ & $0.21$ & $\textcolor{Green}{-2.12\%}$ \\
 & $8000$ & $0.29$ & $0.31$ & $0.60$ & $0.30$ & $\textcolor{Green}{-2.52\%}$ \\
 & $16000$ & $0.42$ & $0.44$ & $0.87$ & $0.43$ & $\textcolor{Green}{-2.64\%}$ \\
 & $32000$ & $0.60$ & $0.62$ & $1.24$ & $0.62$ & $\textcolor{Green}{-2.77\%}$ \\
 & $64000$ & $0.87$ & $0.89$ & $1.79$ & $0.89$ & $\textcolor{Green}{-2.90\%}$ \\
\hline
\end{tabular}
\end{table}

\begin{table}[H]
\centering
\caption{Accounted privacy $\varepsilon$ for $\delta=10^{-8}$ and $p=0.01$.}
\footnotesize
\begin{tabular}{|c|c|c|c|c|c|c|}
\hline
\multirow{2}{*}{$\sigma$} & \multirow{2}{*}{$T$} & \multicolumn{3}{c|}{R\'enyi accountants} & \multirow{2}{*}{Lossless accountants} & \multirow{2}{*}{Relative difference} \\
\cline{3-5}
 &  & Our bounds & Birrell et al. & Wang et al. &  &  \\
\hline
\multirow{3}{*}{$1$} & $250$ & $18.37$ & $\geq 100$ & $19.75$ & $13.45$ & $\textcolor{Red}{+36.60\%}$ \\
 & $500$ & $19.71$ & $\geq 100$ & $22.46$ & $16.45$ & $\textcolor{Red}{+19.82\%}$ \\
\cline{3-7}
 & $\geq 1000$ & \multicolumn{5}{c|}{$\geq 20$ for all accountants} \\
\hline
\multirow{9}{*}{$2$} & $250$ & $2.36$ & $4.19$ & $2.87$ & $1.89$ & $\textcolor{Red}{+24.53\%}$ \\
 & $500$ & $2.65$ & $5.11$ & $3.54$ & $2.33$ & $\textcolor{Red}{+13.65\%}$ \\
 & $1000$ & $3.09$ & $6.24$ & $4.79$ & $3.10$ & $\textcolor{Green}{-0.15\%}$ \\
 & $2000$ & $3.99$ & $7.53$ & $6.77$ & $4.35$ & $\textcolor{Green}{-8.28\%}$ \\
 & $4000$ & $5.59$ & $9.68$ & $9.85$ & $6.27$ & $\textcolor{Green}{-10.80\%}$ \\
 & $8000$ & $8.15$ & $13.19$ & $14.52$ & $9.20$ & $\textcolor{Green}{-11.39\%}$ \\
 & $16000$ & $12.01$ & $18.13$ & $21.61$ & $13.71$ & $\textcolor{Green}{-12.39\%}$ \\
 & $32000$ & $18.13$ & $27.83$ & $34.43$ & $20.79$ & $\textcolor{Green}{-12.77\%}$ \\
\cline{3-7}
 & $64000$ & \multicolumn{5}{c|}{$\geq 20$ for all accountants} \\
\hline
\multirow{9}{*}{$3$} & $250$ & $0.93$ & $1.27$ & $1.43$ & $0.76$ & $\textcolor{Red}{+22.59\%}$ \\
 & $500$ & $1.10$ & $1.52$ & $2.01$ & $1.07$ & $\textcolor{Red}{+3.14\%}$ \\
 & $1000$ & $1.41$ & $1.93$ & $2.85$ & $1.52$ & $\textcolor{Green}{-7.30\%}$ \\
 & $2000$ & $1.97$ & $2.57$ & $4.07$ & $2.19$ & $\textcolor{Green}{-9.80\%}$ \\
 & $4000$ & $2.85$ & $3.50$ & $5.89$ & $3.17$ & $\textcolor{Green}{-10.12\%}$ \\
 & $8000$ & $4.13$ & $4.88$ & $8.59$ & $4.61$ & $\textcolor{Green}{-10.48\%}$ \\
 & $16000$ & $6.04$ & $6.92$ & $12.70$ & $6.77$ & $\textcolor{Green}{-10.84\%}$ \\
 & $32000$ & $8.90$ & $10.08$ & $19.11$ & $10.04$ & $\textcolor{Green}{-11.40\%}$ \\
 & $64000$ & $13.26$ & $14.84$ & $29.96$ & $15.08$ & $\textcolor{Green}{-12.10\%}$ \\
\hline
\multirow{9}{*}{$4$} & $250$ & $0.54$ & $0.70$ & $0.97$ & $0.50$ & $\textcolor{Red}{+8.97\%}$ \\
 & $500$ & $0.69$ & $0.90$ & $1.37$ & $0.71$ & $\textcolor{Green}{-3.13\%}$ \\
 & $1000$ & $0.95$ & $1.19$ & $1.95$ & $1.01$ & $\textcolor{Green}{-6.21\%}$ \\
 & $2000$ & $1.35$ & $1.61$ & $2.79$ & $1.46$ & $\textcolor{Green}{-7.53\%}$ \\
 & $4000$ & $1.94$ & $2.23$ & $4.03$ & $2.11$ & $\textcolor{Green}{-8.14\%}$ \\
 & $8000$ & $2.80$ & $3.13$ & $5.85$ & $3.06$ & $\textcolor{Green}{-8.29\%}$ \\
 & $16000$ & $4.07$ & $4.45$ & $8.59$ & $4.46$ & $\textcolor{Green}{-8.65\%}$ \\
 & $32000$ & $5.95$ & $6.41$ & $12.74$ & $6.55$ & $\textcolor{Green}{-9.04\%}$ \\
 & $64000$ & $8.78$ & $9.34$ & $19.22$ & $9.70$ & $\textcolor{Green}{-9.47\%}$ \\
\hline
\multirow{9}{*}{$6$} & $250$ & $0.30$ & $0.39$ & $0.60$ & $0.30$ & $\textcolor{Green}{-0.19\%}$ \\
 & $500$ & $0.41$ & $0.52$ & $0.85$ & $0.42$ & $\textcolor{Green}{-2.01\%}$ \\
 & $1000$ & $0.59$ & $0.69$ & $1.21$ & $0.61$ & $\textcolor{Green}{-3.13\%}$ \\
 & $2000$ & $0.84$ & $0.95$ & $1.74$ & $0.87$ & $\textcolor{Green}{-4.01\%}$ \\
 & $4000$ & $1.20$ & $1.32$ & $2.49$ & $1.26$ & $\textcolor{Green}{-4.71\%}$ \\
 & $8000$ & $1.72$ & $1.86$ & $3.60$ & $1.81$ & $\textcolor{Green}{-5.06\%}$ \\
 & $16000$ & $2.49$ & $2.65$ & $5.25$ & $2.63$ & $\textcolor{Green}{-5.25\%}$ \\
 & $32000$ & $3.61$ & $3.80$ & $7.70$ & $3.82$ & $\textcolor{Green}{-5.50\%}$ \\
 & $64000$ & $5.29$ & $5.49$ & $11.44$ & $5.60$ & $\textcolor{Green}{-5.59\%}$ \\
\hline
\multirow{9}{*}{$8$} & $250$ & $0.21$ & $0.28$ & $0.43$ & $0.21$ & $\textcolor{Red}{+0.83\%}$ \\
 & $500$ & $0.30$ & $0.37$ & $0.62$ & $0.30$ & $\textcolor{Green}{-0.27\%}$ \\
 & $1000$ & $0.43$ & $0.49$ & $0.88$ & $0.43$ & $\textcolor{Green}{-1.14\%}$ \\
 & $2000$ & $0.61$ & $0.68$ & $1.26$ & $0.62$ & $\textcolor{Green}{-1.83\%}$ \\
 & $4000$ & $0.87$ & $0.95$ & $1.81$ & $0.89$ & $\textcolor{Green}{-2.37\%}$ \\
 & $8000$ & $1.25$ & $1.33$ & $2.61$ & $1.28$ & $\textcolor{Green}{-2.83\%}$ \\
 & $16000$ & $1.80$ & $1.89$ & $3.78$ & $1.85$ & $\textcolor{Green}{-3.02\%}$ \\
 & $32000$ & $2.60$ & $2.71$ & $5.51$ & $2.69$ & $\textcolor{Green}{-3.19\%}$ \\
 & $64000$ & $3.78$ & $3.90$ & $8.13$ & $3.91$ & $\textcolor{Green}{-3.43\%}$ \\
\hline
\end{tabular}
\end{table}

\begin{table}[H]
\centering
\caption{Accounted privacy $\varepsilon$ for $\delta=10^{-8}$ and $p=0.04$.}
\label{table:last}
\footnotesize
\begin{tabular}{|c|c|c|c|c|c|c|}
\hline
\multirow{2}{*}{$\sigma$} & \multirow{2}{*}{$T$} & \multicolumn{3}{c|}{R\'enyi accountants} & \multirow{2}{*}{Lossless accountants} & \multirow{2}{*}{Relative difference} \\
\cline{3-5}
 &  & Our bounds & Birrell et al. & Wang et al. &  &  \\
\hline
\multirow{1}{*}{$1$} & $\geq 250$ & \multicolumn{5}{c|}{$\geq 20$ for all accountants} \\
\hline
\multirow{5}{*}{$2$} & $250$ & $6.60$ & $20.32$ & $10.60$ & $6.67$ & $\textcolor{Green}{-1.07\%}$ \\
 & $500$ & $8.84$ & $25.49$ & $15.32$ & $9.31$ & $\textcolor{Green}{-5.05\%}$ \\
 & $1000$ & $12.29$ & $33.95$ & $22.39$ & $13.46$ & $\textcolor{Green}{-8.69\%}$ \\
 & $2000$ & $17.52$ & $50.86$ & $34.36$ & $20.03$ & $\textcolor{Green}{-12.53\%}$ \\
\cline{3-7}
 & $\geq 4000$ & \multicolumn{5}{c|}{$\geq 20$ for all accountants} \\
\hline
\multirow{7}{*}{$3$} & $250$ & $3.13$ & $5.78$ & $6.23$ & $3.24$ & $\textcolor{Green}{-3.18\%}$ \\
 & $500$ & $4.34$ & $7.57$ & $8.96$ & $4.62$ & $\textcolor{Green}{-6.19\%}$ \\
 & $1000$ & $6.14$ & $9.91$ & $13.09$ & $6.71$ & $\textcolor{Green}{-8.55\%}$ \\
 & $2000$ & $8.83$ & $13.79$ & $19.39$ & $9.90$ & $\textcolor{Green}{-10.79\%}$ \\
 & $4000$ & $13.06$ & $19.33$ & $30.53$ & $14.80$ & $\textcolor{Green}{-11.75\%}$ \\
 & $8000$ & $19.73$ & $29.57$ & $45.64$ & $22.53$ & $\textcolor{Green}{-12.44\%}$ \\
\cline{3-7}
 & $\geq 16000$ & \multicolumn{5}{c|}{$\geq 20$ for all accountants} \\
\hline
\multirow{8}{*}{$4$} & $250$ & $2.07$ & $3.22$ & $4.21$ & $2.13$ & $\textcolor{Green}{-2.59\%}$ \\
 & $500$ & $2.91$ & $4.20$ & $6.06$ & $3.06$ & $\textcolor{Green}{-4.71\%}$ \\
 & $1000$ & $4.15$ & $5.63$ & $8.78$ & $4.43$ & $\textcolor{Green}{-6.46\%}$ \\
 & $2000$ & $5.98$ & $7.77$ & $12.93$ & $6.49$ & $\textcolor{Green}{-7.88\%}$ \\
 & $4000$ & $8.72$ & $10.99$ & $19.36$ & $9.60$ & $\textcolor{Green}{-9.13\%}$ \\
 & $8000$ & $12.97$ & $16.08$ & $30.47$ & $14.38$ & $\textcolor{Green}{-9.82\%}$ \\
 & $16000$ & $19.61$ & $23.90$ & $46.09$ & $21.89$ & $\textcolor{Green}{-10.40\%}$ \\
\cline{3-7}
 & $\geq 32000$ & \multicolumn{5}{c|}{$\geq 20$ for all accountants} \\
\hline
\multirow{9}{*}{$6$} & $250$ & $1.25$ & $1.76$ & $2.57$ & $1.26$ & $\textcolor{Green}{-0.73\%}$ \\
 & $500$ & $1.77$ & $2.31$ & $3.69$ & $1.81$ & $\textcolor{Green}{-2.17\%}$ \\
 & $1000$ & $2.53$ & $3.12$ & $5.34$ & $2.62$ & $\textcolor{Green}{-3.36\%}$ \\
 & $2000$ & $3.64$ & $4.32$ & $7.82$ & $3.81$ & $\textcolor{Green}{-4.37\%}$ \\
 & $4000$ & $5.29$ & $6.11$ & $11.52$ & $5.57$ & $\textcolor{Green}{-5.04\%}$ \\
 & $8000$ & $7.75$ & $8.76$ & $17.36$ & $8.21$ & $\textcolor{Green}{-5.71\%}$ \\
 & $16000$ & $11.52$ & $12.79$ & $26.47$ & $12.26$ & $\textcolor{Green}{-5.96\%}$ \\
 & $32000$ & $17.46$ & $19.02$ & $41.09$ & $18.54$ & $\textcolor{Green}{-5.87\%}$ \\
\cline{3-7}
 & $64000$ & \multicolumn{5}{c|}{$\geq 20$ for all accountants} \\
\hline
\multirow{9}{*}{$8$} & $250$ & $0.90$ & $1.22$ & $1.86$ & $0.89$ & $\textcolor{Red}{+0.64\%}$ \\
 & $500$ & $1.28$ & $1.62$ & $2.66$ & $1.28$ & $\textcolor{Green}{-0.48\%}$ \\
 & $1000$ & $1.82$ & $2.19$ & $3.84$ & $1.85$ & $\textcolor{Green}{-1.36\%}$ \\
 & $2000$ & $2.62$ & $3.02$ & $5.57$ & $2.68$ & $\textcolor{Green}{-2.14\%}$ \\
 & $4000$ & $3.79$ & $4.26$ & $8.18$ & $3.90$ & $\textcolor{Green}{-2.72\%}$ \\
 & $8000$ & $5.52$ & $6.10$ & $12.08$ & $5.71$ & $\textcolor{Green}{-3.33\%}$ \\
 & $16000$ & $8.15$ & $8.81$ & $18.22$ & $8.43$ & $\textcolor{Green}{-3.29\%}$ \\
 & $32000$ & $12.07$ & $12.96$ & $28.19$ & $12.59$ & $\textcolor{Green}{-4.13\%}$ \\
 & $64000$ & $18.27$ & $19.38$ & $43.45$ & $19.07$ & $\textcolor{Green}{-4.18\%}$ \\
\hline
\end{tabular}
\end{table}

\subsection{Numerical implementation}\label{subsec:numerics-implementation}

Here, we record several details related to the numerical implementation of privacy accountants used for Appendix~\ref{subsec:numerics-full}.

\paragraph*{Implementing R\'enyi accountants.} For R\'enyi accountants, we evaluate the R\'enyi DP bounds for integer orders $2 \leq \alpha \leq 256$, convert each bound into $(\eps, \delta)$-DP bound using \citep[Theorem 20]{balle2020hypothesis}, and choose the minimum $\eps$ among them. For \citep{birrell2024rdp}, we calculate R\'enyi DP bounds with tuning orders $m \in \{3, 4, 5\}$ (the values used throughout numerical results in \citep{birrell2024rdp})  and report the minimum among the converted $\eps$.

Our R\'enyi accountant requires\footnote{The numerical optimization over $u \in [0, 1/2]$ can be bypassed under conditions in Proposition~\ref{prop:sufficient}. In our numerical implementation, for simplicity we only check \eqref{eq:optunique-largealpha}.} calculating the maximum of R\'enyi divergences over $u \in [0, 1/2]$, as stated in \eqref{eq:1dopt}. The evaluation at each $u$ is done by first evaluating $F(u)$ \eqref{eq:ffunction}, and then converting that to the corresponding R\'enyi divergence $J(u) \triangleq (\log F(u)) / (\alpha - 1)$. In particular, we express \eqref{eq:ffunction} as a one-dimensional integral and characterize a valid upper bound $\overline{J}(u)$ and lower bound $\underline{J}(u)$ for $J(u)$ (see Appendix~\ref{subsubsec:pointcalculation} for details). Based on these, we use a branch-and-bound approach as follows.

\begin{itemize}
    \item Initialization: $[\ell, r] \gets [0, 1/2]$ stored in a max heap, whose key $Q(\ell, r)$  satisfies $Q(\ell, r) \geq \max_{u \in [\ell, r]}J(u)$ (calculating the key requires $\overline{J}(\ell), \overline{J}(r)$ and a ``curvature'' upper bound; see Appendix~\ref{subsubsec:maximumcalculation} for details); a global lower bound $B \gets \underline{J}(0) \vee \underline{J}(1/2)$.
    \item Update: at each iteration,
    \begin{enumerate}
        \item If the heap is empty or the key of the maximum item minus $B$ is at most a predetermined threshold $\gamma$, terminate and output the key ($B$ if the heap is empty).
        \item Otherwise, pop the maximum item $[\ell, r]$ from the heap and calculate $\overline{J}(m)$ and $\underline{J}(m)$ for $m \gets (\ell + r)/2$. Update the global lower bound $B \gets B \vee \underline{J}(m)$.
        \item For each interval-key pair $([\ell, m], Q(\ell,m))$ and $([m, r], Q(m, r))$, push it to the heap if and only if its key is larger than $B$.
    \end{enumerate}
\end{itemize}
In particular, it is guaranteed that the output is at least as large as the theoretical R\'enyi DP bound, and has an additive error of at most $\gamma$; see Appendix~\ref{subsubsec:maximumcalculation} for details.

Before this procedure for evaluation of \eqref{eq:1dopt} at a particular $\alpha$, we first (approximately) calculate \eqref{eq:1dopt} using standard libraries such as \texttt{SciPy}, in order to choose the R\'enyi order $\alpha$. As described in Proposition~\ref{prop:unimodal}, this can be done with $O(\log \alpha_{\max})$ evaluations, and here we use the approximations of those evaluations for faster implementation. For the chosen $\alpha$, we implement the aforementioned branch-and-bound procedure with ball arithmetic from \texttt{python-flint}, guaranteeing that the final numerical value (based on the upper endpoint from ball arithmetic) is at least as large as the theoretical value.

\paragraph*{Implementing lossless accountants.} For lossless accountants, we use the algorithm from \citep{doroshenko2022connect} for implementation, with the length of discretization interval set as $10^{-4}$.

\paragraph*{Runtime analysis.} In order to attain an additive error of at most $\gamma$ (in R\'enyi DP, for each R\'enyi order $\alpha$) for the branch-and-bound approach, it suffices to have $O(\alpha_{\max}^{3/2}/(\sigma^2 \sqrt{\gamma}))$ many evaluations of the functions $\overline{J}$, $\underline{J}$ and the key $Q$; see Appendix~\ref{subsec:numerics-derivation} for further details. 

While our accountant is based on multiple steps including branch-and-bound and numerical integrals, its implementation in practice can be done reasonably fast. This is in part due to the fact that only logarithmically many R\'enyi orders need to be evaluated (Proposition~\ref{prop:unimodal}).  In Figure~\ref{fig:runtime}, we record the median runtimes (with 10th and 90th percentiles as error bars) from 20 runs of the R\'enyi accountants with $\alpha_{\max} = 64$, where our accountant runs within time comparable to that of \citep{wang2021rdp}; the runtimes were measured on a consumer-grade laptop with Intel Core i7-13700H CPU and 16GB LPDDR5 RAM.  We emphasize that our implementation of other R\'enyi accountants is based on specific codebases (\citep{wang2021rdp} from Google's \texttt{dp-accounting} and \citep{birrell2024rdp} from their codebase; note that all integer R\'enyi orders $2 \leq \alpha \leq 64$ are evaluated for those) which we did not further optimize. Furthermore, in general the runtimes may vary across different choices of $T, p, \sigma, \delta$ and $\alpha_{\max}$. Our only claim here is that our accountant can be run within a reasonable amount of time, and the measured runtimes for other R\'enyi accountants in the figure should only be considered as a reference of scale.

\begin{figure*}[t]
    \centering
    \includegraphics[width=0.45\textwidth]{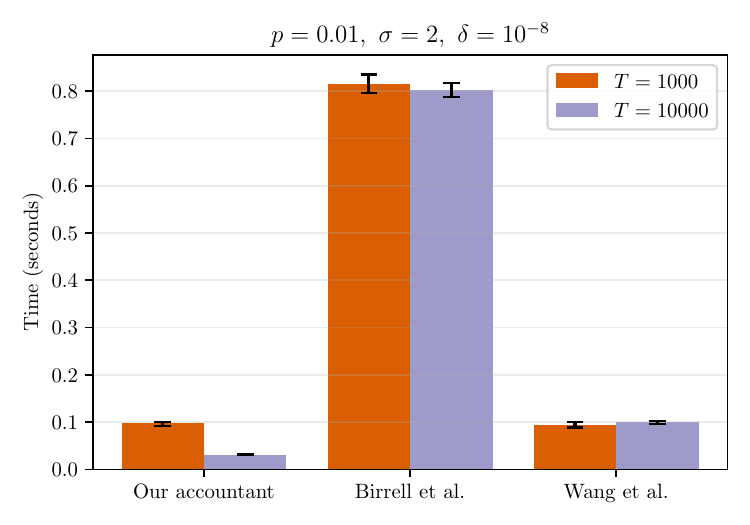}
    \hfill
    \includegraphics[width=0.45\textwidth]{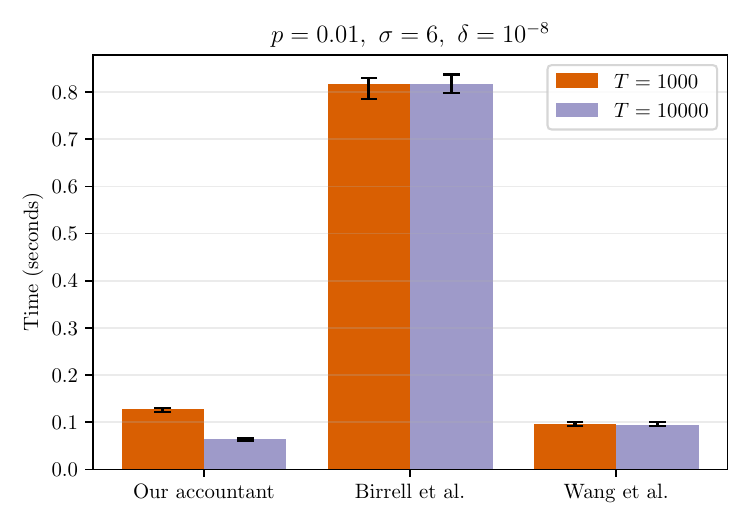}
    \caption{Median runtimes of R\'enyi accountants, for $p = 0.01, \sigma \in \{2, 6\}, \delta = 10^{-8}$, and $T \in \{1000, 10000\}$. The error bars represent the 10th and 90th percentiles. In this setting, our accountant runs faster for $T = 10000$ because the binary search based on the unimodality in R\'enyi order (see Proposition~\ref{prop:unimodal}) terminates earlier.}
    \label{fig:runtime}
\end{figure*}

\subsection{Mathematical derivations}\label{subsec:numerics-derivation}

Here, we record remaining mathematical details about calculating values related to $J(u) = (\log F(u))/(\alpha-1)$ (see Appendix~\ref{subsec:numerics-implementation}): recall that $F$ is defined as in \eqref{eq:ffunction} and the R\'enyi DP bound \eqref{eq:1dopt} is equal to $\max_{u \in [0, 1/2]} J(u)$.
Throughout, we assume that $\alpha \geq 2$ is an integer. 
\subsubsection{Point calculation}\label{subsubsec:pointcalculation}

Here, we explain how the upper bound $\overline{J}(u)$ and the lower bound $\underline{J}(u)$ of $J(u)$ are evaluated (see Appendix~\ref{subsec:numerics-implementation} for how they are used). Recall the notations around $F(u)$ from \eqref{eq:ffunction}. Define $\beta \triangleq \Delta \sqrt{2u}$ and $V \triangleq \langle b, Z \rangle / \beta \sim \calN(0, 1)$. Then from $\Var[\langle a, Z \rangle] = \Delta^2$ and $\Cov[\langle a, Z\rangle, \langle b, Z\rangle] = \Delta^2u = \beta^2/2$, we have
\[\langle a, Z \rangle \overset{\mathrm{d}}{=} \frac{\beta}{2} V + \sqrt{\Delta^2 - \frac{\beta^2}{4}}W\,,
\]
where $W \sim \calN(0, 1)$ is independent of $V$; note that such reparametrization is valid even if $u = 0$. Then
\begin{equation}\label{eq:renyinum-expansion}
\begin{aligned}
F(u) &= \E[(1-p + pe^{(\beta/2)V + \sqrt{\Delta^2 - \beta^2/4}W - \Delta^2/2})^{\alpha} (1-p + pe^{\beta V - \beta^2/2})^{1-\alpha}] \\
    &= \sum_{k=0}^\alpha \binom{\alpha}{k} (1-p)^{\alpha-k}p^k \E[(1-p + pe^{\beta V - \beta^2/2})^{1-\alpha}e^{(k\beta/2)V + k\sqrt{\Delta^2 - \beta^2/4}W - k\Delta^2/2}] \\
    &= \sum_{k=0}^\alpha \binom{\alpha}{k} (1-p)^{\alpha-k}p^k e^{k(k-1)\Delta^2/2 - k^2\beta^2/8}\E[(1-p + pe^{\beta V - \beta^2/2})^{1-\alpha}e^{(k\beta/2)V}] \\
    &= \sum_{k=0}^\alpha \binom{\alpha}{k} (1-p)^{\alpha-k}p^k e^{k(k-1)\Delta^2/2 - k^2\beta^2/8}\int_{-\infty}^\infty (1-p + pe^{\beta v - \beta^2/2})^{1-\alpha}e^{(k\beta/2)v} \phi(v) \diff v\,,
\end{aligned}
\end{equation}
where we are left with one-dimensional integrals. For each $k$, we decompose the integral into three parts with truncation:
\begin{align*}
    \int_{-\infty}^\infty (1-p + pe^{\beta v - \beta^2/2})^{1-\alpha}e^{(k\beta/2)v} \phi(v) \diff v &=  \underbrace{\int_{-L}^R (1-p + pe^{\beta v - \beta^2/2})^{1-\alpha}e^{(k\beta/2)v} \phi(v) \diff v}_{\mathrm{(I)}} \\
    &\quad + \underbrace{\int_{-\infty}^{-L} (1-p + pe^{\beta v - \beta^2/2})^{1-\alpha}e^{(k\beta/2)v} \phi(v) \diff v}_{\mathrm{(II)}} \\
    &\quad+ \underbrace{\int_{R}^\infty (1-p + pe^{\beta v - \beta^2/2})^{1-\alpha}e^{(k\beta/2)v} \phi(v) \diff v}_{\mathrm{(III)}}\,,
\end{align*}
for some $L, R > 0$ chosen to meet the desired accuracy (in our implementation, we start with $L = R = 8$ and increase them if needed); see also the final paragraph in Appendix~\ref{subsubsec:maximumcalculation} for the accuracy criterion.

\paragraph*{Lower bound $\underline{J}(u)$.} We only use $\mathrm{(I)}$ for \eqref{eq:renyinum-expansion}, which yields a valid lower bound $\underline{F}(u)$ of $F(u)$:
\[\underline{F}(u) \triangleq \int_{-L}^R\left(\sum_{k=0}^\alpha \binom{\alpha}{k} (1-p)^{\alpha-k}p^k e^{k(k-1)\Delta^2/2 - k^2\beta^2/8} (1-p + pe^{\beta v - \beta^2/2})^{1-\alpha}e^{(k\beta/2)v} \phi(v) \right)\diff v\,. \]\

Then a lower bound $\underline{J}(u)$ of $J(u)$ is defined as $\underline{J}(u) \triangleq (\log  \underline{F}(u))/(\alpha-1)$.

\paragraph*{Upper bound $\overline{J}(u)$.} For $\mathrm{(II)}$ and $\mathrm{(III)}$, we use two simple upper bounds for the factor $(1-p + pe^{\beta V - \beta^2/2})^{1-\alpha}$; namely, $(1-p)^{1-\alpha}$ and $p^{1-\alpha}e^{(1-\alpha)\beta V - (1-\alpha)\beta^2/2}$. Those yield
\begin{align*}
    \mathrm{(II)} &\leq \int_{-\infty}^{-L} (1-p)^{1-\alpha}e^{(k\beta/2)v} \phi(v) \diff v \wedge \int_{-\infty}^{-L} p^{1-\alpha}e^{-(1-\alpha)\beta^2/2} e^{(k/2 + 1-\alpha)\beta v} \phi(v) \diff v \\
    &= (1-p)^{1-\alpha} e^{k^2\beta^2/8}\Phi\left(-L - \frac{k\beta}{2}\right) \wedge p^{1-\alpha}e^{k^2\beta^2/8 + (\alpha-1)(\alpha-k)\beta^2/2}\Phi\left(-L - \left(\frac{k}{2} + 1 -\alpha\right)\beta\right)\,,\\
    \mathrm{(III)} &\leq \int_{R}^{\infty} (1-p)^{1-\alpha}e^{(k\beta/2)v} \phi(v) \diff v \wedge \int_{R}^{\infty} p^{1-\alpha}e^{-(1-\alpha)\beta^2/2} e^{(k/2 + 1-\alpha)\beta v} \phi(v) \diff v \\
    &= (1-p)^{1-\alpha}e^{k^2\beta^2/8}\Phi\left(-R + \frac{k\beta}{2}\right) \wedge p^{1-\alpha}e^{k^2\beta^2/8 + (\alpha-1)(\alpha-k)\beta^2/2}\Phi\left(-R + \left(\frac{k}{2} + 1 -\alpha\right)\beta\right)\,.
\end{align*}
Hence for \eqref{eq:renyinum-expansion}, we have a valid upper bound $\overline{F}(u)$ of $F(u)$:
\begin{align*}
    \overline{F}(u) &\triangleq \underline{F}(u) + \sum_{k=0}^\alpha \Bigg(\binom{\alpha}{k}(1-p)^{1-k}p^k e^{k(k-1)\Delta^2/2} \Phi\left(-L - \frac{k\beta}{2}\right) \\
    &\quad \quad \wedge \binom{\alpha}{k}(1-p)^{\alpha-k}p^{k+1-\alpha}e^{k(k-1)\Delta^2/2 + (\alpha-1)(\alpha-k) \beta^2/2}\Phi\left(-L - \left(\frac{k}{2} + 1 -\alpha\right)\beta\right) \Bigg) \\
    &\quad+ \sum_{k=0}^\alpha \Bigg(\binom{\alpha}{k}(1-p)^{1-k}p^k e^{k(k-1)\Delta^2/2} \Phi\left(-R + \frac{k\beta}{2}\right) \\
    &\quad \quad \wedge \binom{\alpha}{k}(1-p)^{\alpha-k}p^{k+1-\alpha}e^{k(k-1)\Delta^2/2 + (\alpha-1)(\alpha-k) \beta^2/2}\Phi\left(-R + \left(\frac{k}{2} + 1 -\alpha\right)\beta\right) \Bigg)\,.
\end{align*}
Then an upper bound $\overline{J}(u)$ of $J(u)$ is defined as $\overline{J}(u) \triangleq (\log \overline{F}(u))/(\alpha - 1)$.

\subsubsection{Maximum calculation}\label{subsubsec:maximumcalculation}

Here, we explain how upper bound $Q(\ell, r)$ of $\max_{u \in [\ell, r]} J(u)$ (see Appendix~\ref{subsec:numerics-implementation}) is evaluated. First, for all $t \in [0, 1]$ we have
\begin{equation}\label{eq:linearinterpolation}
\begin{aligned}
    J(\ell + t(r-\ell)) &\leq (1-t)J(\ell) + tJ(r) + \max_{u \in [\ell, r]}|J''(u)|\frac{(r-\ell)^2}{2}t(1-t)\\
    &\leq (1-t)\overline{J}(\ell) + t\overline{J}(r) + \max_{u \in [\ell, r]}|J''(u)|\frac{(r-\ell)^2}{2}t(1-t)\,,
\end{aligned}
\end{equation}
where the first inequality is a standard result from linear interpolation, and the second inequality is from $\overline{J} \geq J$ (see Appendix~\ref{subsubsec:pointcalculation}). Thus, once we can evaluate a valid upper bound $\kappa(\ell, r)$ of $\max_{u \in [\ell, r]}|J''(u)|$, we can set
\begin{equation}\label{eq:keydef}
    Q(\ell, r) \triangleq \max_{t \in [0, 1]} \left\{(1-t)\overline{J}(\ell) + t\overline{J}(r) + \kappa(\ell, r)\frac{(r-\ell)^2}{2}t(1-t)\right\}\,,
\end{equation}
where the maximum over $t \in [0, 1]$ can be evaluated as a quadratic function.

For calculating $\kappa(\ell, r)$, first note that
\begin{equation}\label{eq:secondderiv-setup}
    |J''(u)| = \frac{1}{\alpha-1}\left|\frac{F''(u)}{F(u)} - \left(\frac{F'(u)}{F(u)}\right)^2\right| \leq \frac{1}{\alpha-1}\left(\left|\frac{F''(u)}{F(u)}\right| + \left|\frac{F'(u)}{F(u)}\right|^2 \right)\,.
\end{equation}
For bounding $|F'(u)|$ and $|F''(u)|$, from \eqref{eq:firstderiv} and \eqref{eq:secondderiv} we have (see Appendix~\ref{subsec:derivs} for the notations below)
\begin{equation}\label{eq:firstderiv-abs}
    |F'(u)| \leq \Delta^2\alpha(\alpha-1)(\E[wvh(X, Y)] + \E[w^2h(X, Y)])\,,
\end{equation}
and
\begin{equation}\label{eq:secondderiv-abs}
\begin{aligned}
    |F''(u)| &\leq \Delta^4\alpha(\alpha-1)\Big((\alpha+1)(\alpha+2) \E[w^4 h(X, Y)] + 2\alpha(\alpha+1)\E[w^3 v h(X, Y)]\\
    &\quad + \alpha(\alpha-1)\E[w^2v^2h(X, Y)] + 4(\alpha+1)\E[w^3h(X, Y)] + 5\alpha\E[w^2vh(X, Y)] \\
    &\quad+ (\alpha - 1)\E[wv^2h(X, Y)] + 2\E[w^2 h(X, Y)] + \E[wvh(X, Y)]\Big)\,.
\end{aligned}    
\end{equation}
From these, we derive two upper bounds $\kappa_1(\ell, r)$ and $\kappa_2(\ell, r)$ for $\max_{u \in [\ell, r]}|J''(u)|$, from which we take $\kappa(\ell, r) \triangleq \kappa_1(\ell, r) \wedge \kappa_2(\ell, r)$.

\paragraph*{First bound $\kappa_1(\ell, r)$.} From \eqref{eq:secondderiv-setup}, since $F(u) \geq 1$ we have
\begin{equation}\label{eq:firstbound-setup}
    |J''(u)| \leq \frac{1}{\alpha - 1}(|F''(u)| + |F'(u)|^2)\,.
\end{equation}
For bounding \eqref{eq:firstderiv-abs} and \eqref{eq:secondderiv-abs}, it suffices to bound $\E[v^i w^jh(X, Y)]$ for finitely many $(i, j)$. This is done similarly as in Appendix~\ref{subsubsec:pointcalculation}, where the denominator in the power of $1 - p + pe^Y$ is bounded by $1-p$ and $pe^Y$. To be specific, we have\footnote{The binomial expansion of $(1-p + pe^X)^{\alpha-i}$ is valid since $i \leq 2$ for the right hand sides of \eqref{eq:firstderiv-abs} and \eqref{eq:secondderiv-abs}, and $\alpha \geq 2$.}
\begin{equation}\label{eq:secondderiv-monomial}
\begin{aligned}
    \E[v^iw^jh(X, Y)] &= \E[p^{i+j}e^{iX + jY}(1-p + pe^X)^{\alpha - i}(1-p + pe^Y)^{1-\alpha - j}] \\
    &= \sum_{k=0}^{\alpha-i}\binom{\alpha-i}{k}(1-p)^{\alpha-i-k}p^{k+i+j}\E[e^{(i+k)X + jY}(1-p+pe^Y)^{1-\alpha-j}] \\
    &\leq \sum_{k=0}^{\alpha-i}\binom{\alpha-i}{k}(1-p)^{\alpha-i-k}p^{k+i+j}\left(\E[e^{(i+k)X + jY}(1-p)^{1-\alpha-j}] \wedge \E[e^{(i+k)X + jY}(pe^Y)^{1-\alpha-j}]\right) \\
    &\leq \sum_{k=0}^{\alpha-i} \Bigg(\binom{\alpha-i}{k}(1-p)^{1-i-j-k}p^{i+j+k}e^{(\Delta^2/2)((i+k)(i+k-1) + 2rj(i+j+k-1))} \\
    &\quad \wedge \binom{\alpha-i}{k}(1-p)^{\alpha-i-k}p^{1 + k + i - \alpha} e^{(\Delta^2/2)((i+k)(i+k-1) + 2r(\alpha-1)(\alpha - i - k))}\Bigg)\,.
\end{aligned}    
\end{equation}

Then $\kappa_1(\ell, r)$ is defined as the final bound from the chain of inequalities (i.e., from sequentially applying the final upper bounds) in \eqref{eq:firstbound-setup}, \eqref{eq:firstderiv-abs}, \eqref{eq:secondderiv-abs}, and \eqref{eq:secondderiv-monomial}.

\paragraph*{Second bound $\kappa_2(\ell, r)$.} Consider a tilted measure 
\[\E'[f(X, Y)] \triangleq \frac{\E[f(X, Y)h(X, Y)]}{\E[h(X, Y)]} \text{ for }f \geq 0\,.
\]
Since $F(u) = \E[h(X, Y)]$, for \eqref{eq:firstderiv-abs} we have
\begin{equation}\label{eq:secondbound-first}
    \left|\frac{F'(u)}{F(u)}\right| \leq \Delta^2\alpha(\alpha-1)(\E'[wv] + \E'[w^2]) \leq 2\Delta^2\alpha(\alpha-1)\,,
\end{equation}
from $0 \leq v, w \leq 1$; similarly, for \eqref{eq:secondderiv-abs} we have
\begin{equation}\label{eq:secondbound-second}
\begin{aligned}
    \left|\frac{F''(u)}{F(u)}\right| &\leq \Delta^4\alpha(\alpha-1)\left((\alpha+1)(\alpha+2)  + 2\alpha(\alpha+1) + \alpha(\alpha-1) + 4(\alpha+1) + 5\alpha + (\alpha-1) + 2 + 1\right) \\
    &= 2\Delta^4\alpha(\alpha-1)(2\alpha^2 + 7\alpha + 4)\,.
\end{aligned}
\end{equation}
Thus, from \eqref{eq:secondderiv-setup} we have
\begin{equation}\label{eq:secondbound}
    |J''(u)| \leq \frac{1}{\alpha-1}(4\Delta^4\alpha^2(\alpha-1)^2 + 2\Delta^4\alpha(\alpha-1)(2\alpha^2 + 7\alpha + 4)) = 2\Delta^4\alpha(4\alpha^2 + 5\alpha + 4) \triangleq \kappa_2(\ell, r)\,.
\end{equation}

\paragraph*{Remaining details for implementation.} For our implementation of the branch-and-bound approach (see Appendix~\ref{subsec:numerics-implementation}), we choose $\overline{J}(u)$ and $\underline{J}(u)$ to satisfy $\overline{J}(u) - \underline{J}(u) \leq \gamma / 2$. Then the key $Q(\ell, r)$ from the maximum item $[\ell, r]$ satisfies (see \eqref{eq:keydef})
\[Q(\ell, r) \leq (\underline{J}(\ell) \vee 
\underline{J}(r)) + \frac{\gamma}{2} + \kappa(\ell, r)\frac{(r-\ell)^2}{8}\,,
\]
which in particular implies
\[Q(\ell, r) - B \leq \frac{\gamma}{2} + \kappa(\ell, r)\frac{(r-\ell)^2}{8}\,.
\]
Thus, if 
\begin{equation}\label{eq:errorcond}
    \kappa(\ell, r)\frac{(r - \ell)^2}{8} \leq \frac{\gamma}{2}\,,
\end{equation}
then $Q(\ell, r) - B \leq \gamma$. Since $B \leq \max_{u \in [0, 1/2]}J(u) \leq Q(\ell, r)$ (if the heap is empty then $B = \max_{u \in [0, 1/2]}J(u)$ is the output), this guarantees that the output from the branch-and-bound approach has an additive error of $\gamma$. From \eqref{eq:secondbound}, we know that $\kappa(\ell, r) = O(\Delta^4\alpha_{\max}^3) = O(\alpha_{\max}^3/\sigma^4)$, and thus for \eqref{eq:errorcond} it suffices to have $r - \ell = O(\sqrt{\gamma}\sigma^2/\alpha_{\max}^{3/2})$. This implies that for the bisection in the branch-and-bound approach, it needs at most $O(\alpha_{\max}^{3/2}/(\sigma^2 \sqrt{\gamma}))$ evaluations of $\overline{J}$, $\underline{J}$ and $Q$.

\end{document}